\newcommand{\red}[1]{\textcolor{red}{#1}}
\renewcommand{\@}{\red{@}}
\newcommand{\EL}{\textbf{EL}}
\newcommand{\PLKw}{\textbf{PLKw}}
\newcommand{\BP}{\textbf{P}}
\newcommand{\Ag}{\textbf{I}}
\newcommand{\PLKwA}{\ensuremath{\textbf{PLKwA}}}
\newcommand{\SPS}{\ensuremath{\mathbb{S}}}
\newcommand{\SPLKw}{\ensuremath{\mathbb{PLKW}}}
\newcommand{\SPLKwT}{\ensuremath{\mathbb{PLKWT}}}
\newcommand{\SPLKwTr}{\ensuremath{\mathbb{PLKW}4}}
\newcommand{\SPLKwEuc}{\ensuremath{\mathbb{PLKW}5}}
\newcommand{\SPLKwTrEuc}{\ensuremath{\mathbb{PLKW}45}}
\newcommand{\SPLKwTTr}{\ensuremath{\mathbb{PLKWS}4}}
\newcommand{\SPLKwTEuc}{\ensuremath{\mathbb{PLKWS}5}}
\newcommand{\SPLKwA}{\ensuremath{\mathbb{PLKWA}}}
\newcommand{\SPLKwATEuc}{\ensuremath{\mathbb{PLKWAS}5}}
\newcommand{\lr}[1]{\langle #1 \rangle}
\newcommand{\toall}{\{\to_i\mid i\in \Ag\}}
\newcommand{\toallc}{\{\to^c_i\mid i\in \Ag\}}
\newcommand{\toallp}{\{\to'_i\mid i\in \Ag\}}
\newcommand{\mc}[1]{\mathcal{#1}}
\newcommand{\M}{\mc{M}}
\newcommand{\N}{\mc{N}}
\newcommand{\F}{\mc{F}}
\newcommand{\EquiKw}{\ensuremath{{\texttt{Kw}\!\lra}}}
\newcommand{\TAUT}{{\texttt{TAUT}}}
\newcommand{\CCOM}{{\texttt{!COM}}}
\newcommand{\GENKw}{\texttt{NECKw}}
\newcommand{\RE}{{\texttt{Sub}}}
\newcommand{\KwT}{\texttt{KwT}}
\newcommand{\KwTr}{\texttt{Kw4}}
\newcommand{\KwEuc}{\texttt{Kw5}}
\newcommand{\REKw}{\texttt{REKw}}
\newcommand{\ATOM}{{\texttt{!\!ATOM}}}
\newcommand{\NEG}{{\texttt{!\!NEG}}}
\newcommand{\AKw}{\texttt{!\!Kw}}
\newcommand{\AAA}{\texttt{!\!!}}
\newcommand{\MP}{{\texttt{MP}}}
\newcommand{\Tr}{\texttt{wKw4}}
\newcommand{\Euc}{\texttt{wKw5}}
\newcommand{\Kw}{\ensuremath{\textit{Kw}}}
\newcommand{\KwCon}{\texttt{KwCon}}
\newcommand{\KwDis}{\texttt{KwDis}}
\newcommand{\K}{\ensuremath{\textit{K}}}
\newcommand{\Lra}{\ensuremath{\Leftrightarrow}}
\newcommand{\lra}{\ensuremath{\leftrightarrow}}
\newtheorem{theorem}{Theorem}
\newtheorem{lemma}[theorem]{Lemma}
\newtheorem{definition}[theorem]{Definition}
\newtheorem{proposition}[theorem]{Proposition}
\newtheorem{corollary}[theorem]{Corollary}
\newcommand{\weg}[1]{}
\newcommand{\PLKwK}{\textbf{PLKwK}}
\renewcommand{\phi}{\varphi}
\newcommand{\SIg}{\mathbf{Ig}}
\title{Knowing Whether}
\author{Jie Fan, Yanjing Wang\thanks{Corresponding author}, Hans van Ditmarsch}
\date{}
\begin{document}
\maketitle

\begin{abstract}
Knowing whether a proposition is true means knowing that it is true or knowing that it is false. In this paper, we study logics with a modal operator $\Kw$ for knowing whether but without a modal operator $\K$ for knowing that. This logic is not a normal modal logic, because we do not have $\Kw(\phi\to \psi)\to(\Kw\phi\to\Kw\psi)$. Knowing whether logic cannot define many common frame properties, and its expressive power is less than that of basic modal logic over classes of models without reflexivity. These features make axiomatizing knowing whether logics non-trivial. We axiomatize knowing whether logic over various frame classes. We also present an extension of knowing whether logic with public announcement operators and we give corresponding reduction axioms for that. We compare our work in detail to two recent similar proposals.
\end{abstract}

\noindent Keywords: non-normal modal logic, completeness, public announcement,  expressivity, epistemic logic

\section{Introduction}

The work entitled `Logics of public communication' by Plaza \cite{plaza:1989}\footnote{Reprinted as \cite{plaza:2007} with comments by \cite{hvd.plaza:2007}.} is mainly known as one of the founding publications, if not {\em the} founding publication, of public announcement logic. However, it also treats two other topics worthy of investigation, namely `knowing value' modal operators and their binary variant `knowing whether' modal operators. You know the pincode of your bankcard if you know the \emph{value} of it. You know \textit{whether} $p$ if you know that $p$ is true or you know that $p$ is false. Plaza demonstrates the use of these operators in his discussion of Sum-and-Product puzzle, and poses as an open question what the axiomatization would be of public announcement logic with these operators \cite[p.13]{plaza:1989}. In \cite{wangetal:2013}, the authors investigate knowing value operators in depth and give a complete axiomatization of the logic with knowing value and public announcement, where the knowing value modality behaves quite differently from a modality in a normal modal logic, such as the standard knowledge modality. Unlike the knowing value modality, knowing whether is definable in terms of knowing that. But it still seems interesting to investigate a logic with a knowing whether operator but without the usual knowledge operator, and this motivates this paper.

Knowing whether operators have been discussed in other settings in the logical literature. \cite{Hart:1996} uses `knowing whether' operator to establish a continuum of knowledge states in a very neat fashion, which demonstrates that `knowing whether' is more convenient to use than `knowing that' in certain contexts, as argued also in \cite{heifetz1993universal}.  In natural language `knowing whether' is frequently used instead of `knowing that'. You say: ``I know whether it is raining outside,'' but you do not say: ``I know that it is raining outside or I know that it is not raining outside.'' We often only need to know whether someone knows the truth value of something, rather than the truth value itself. For example, a conference organizer needs to make sure that he knows whether the invited participants will come: it is fine for him as long as a definitive confirmation is given. An analysis in inquisitive semantics of such natural expressions is found in  \cite{alonietal:2013}. For another example, consider the Muddy Children Puzzle \cite{mosesetal:1986}. By iterating the announcement of the formula ``nobody knows whether he or she is muddy,'' this formula will finally become false, i.e., the muddy children will finally learn that they are muddy. This can be succinctly said in a logic with knowing whether and public announcements as primitives. Finally, consider gossip protocols \cite{hedetniemietal:1988}, wherein processors (or `agents') exchange information by one-to-one communications (`calls') wherein they exchange the value of their local state. We can assume the information to be exchanged are propositional secrets which have binary values. If $p$ describes the secret of agent 1, then after agent 2 calls agent 1, both 1 and 2 know whether $p$. Whether the value is true or false is irrelevant for the design of such protocols.

The logic of knowing whether has also appeared in a different form in prior literature, namely as the {\em logic of ignorance} \cite{wiebeetal:2003,hoeketal:2004,steinsvold:2008}: you are \emph{ignorant} about a proposition iff you do \emph{not know whether} the proposition is true. An axiomatization of the logic of ignorance over the class of arbitrary frames is given in \cite{wiebeetal:2003,hoeketal:2004}. The authors suggest that it is hard to repeat this exercise for other frame classes. In this work, we advance the study of knowing whether logic by systematically axiomatizing knowing whether logic over various common frame classes. We discuss in detail the difference between \cite{wiebeetal:2003,hoeketal:2004,steinsvold:2008} and our results.

Knowing whether logic is not a normal modal logic, it cannot define common frame properties, and it is less expressive than the basic modal logic, although equally expressive on reflexive models. We give a complete axiomatization, and also various extensions for special frame classes, and an addition of the logic with public announcements.

\medskip

In Section \ref{sec.logic} we define the language and semantics of the logic of knowing whether. Section \ref{sec.expr} deals with expressivity over models and frames, and Section \ref{sec.axiomatization} presents our new axiomatization of the logic over the class of arbitrary frames and proves its completeness. Then, in Section \ref{sec.extensions} we give axiomizations for other frame classes---highly non-trivial in this setting, as frame properties are not definable, unlike in standard modal logic. In Section \ref{sec.announcement} we extend knowing whether logic with public announcements, and in Section \ref{sec.comparison} we discuss the literature on the logic of ignorance in relation to our results.

\section{Syntax and semantics of the logic of knowing whether} \label{sec.logic}

We define the logical language in a more general setting including knowing whether but also knowledge. However, we will focus on the language with only knowing whether.
\begin{definition}[Logical languages  \PLKwK, \PLKw\ and \EL] \label{def.language} Let a set $\BP$ of propositional variables and a set $\Ag$ of agents be given. The logical language $\PLKwK(\BP,\Ag)$ is defined as:
$$\phi::=\top\mid p\mid\neg\phi\mid(\phi\wedge\phi)\mid\Kw_i\phi\mid\K_i\phi$$
where $p\in\BP$ and $i\in\Ag$. Without the $\K_i\phi$ construct, we have the {\em language $\PLKw(\BP,\Ag)$ of knowing whether logic}. Without the $\Kw_i\phi$ construct, we have the {\em language $\EL(\BP,\Ag)$ of epistemic logic}.
\end{definition}
We typically omit the parameters $\BP$ and $\Ag$ from the notations for these languages. The formula $\K_i\phi$ stands for `agent $i$ knows that $\phi$,' although we do not restrict ourselves to an epistemic context. The formula $\Kw_i\phi$ stands for `agent $i$ knows whether $\phi$'. As usual, we define $\bot$, $(\phi\vee\psi)$, $(\phi\to\psi)$, $(\phi\lra\psi)$ as the abbreviations of, respectively, $\neg\top$, $\neg(\neg\phi\land\neg\psi)$, $(\neg\phi\vee\psi)$, and $((\phi\to\psi)\land(\psi\to\phi))$. We omit parentheses from formulas unless confusion results. In particular, we assume that $\land$ and $\vee$ bind stronger than $\to$ and $\lra$. For $\phi_1\land\cdots\land\phi_m$ we write $\bigwedge_{j=1}^m\phi_j$, and for $\phi_1\vee\cdots\vee\phi_m$ we write $\bigvee_{j=1}^m\phi_j$.

\begin{definition}[Model]
A \emph{model} is a triple $\M=\langle S,\toall, V\rangle$ where $S$ is a non-empty set of possible worlds, $\to_i$ is a binary relation over $S$ for each $i \in \Ag$, and $V$ is a valuation function assigning a set of worlds $V(p)\subseteq S$ to  each $p\in \BP$. Given a world $s \in S$, a pair $(\M,s)$ is a \emph{pointed model}. A \emph{frame} is a pair $\F=\langle S,\toall \rangle$, i.e., a model without a valuation. We will refer to special classes of models or frames using the notation below. A binary relation is \emph{partial-functional} iff it corresponds to a partial function, i.e., every world has at most one successor.
$$
\begin{array}{|l|l|}
  \hline
  \text{\rm Notation }& \text{\rm Frame Property}\\
  \hline
  \mathcal{K} & \text{---} \\
  \hline
  \mathcal{D} & \text{seriality} \\
  \hline
  \mathcal{T} & \text{reflexivity} \\
  \hline
  \mathcal{B}& \text{symmetry}\\
  \hline
  4 & \text{transitivity} \\
  \hline
  5 & \text{euclidicity}\\
  \hline
  45& \text{transitivity, euclidicity}\\
  \hline
  \mathcal{S}4&\text{reflexivity, transitivity}\\
  \hline
  \mathcal{S}5&\text{reflexivity, euclidicity}\\
\hline
  \mathcal{PF} &\text{partial functionality}\\
  \hline
\end{array}
$$
\end{definition}
We will omit parenthesis around pointed models $(\M,s)$ whenever convenient. The non-standard notion of partial functionality plays a special role in knowing whether logics.

\begin{definition}[Semantics]
Given a model $\M=\langle S,\toall, V\rangle$, the semantics of \PLKwK\  is defined as follows:
\begin{center}
$
\begin{array}{|lcl|}
\hline
\mc{M},s\vDash \top  & \Leftrightarrow &   \textrm{ always }\\
\mc{M},s\vDash p  & \Leftrightarrow &s \in V(p) \\
\mc{M},s\vDash \neg\phi &\Leftrightarrow& \mc{M},s\nvDash \phi \\
\mc{M},s\vDash \phi\land \psi &\Leftrightarrow&\mc{M},s\vDash \phi \textrm{ and } \mc{M},s\vDash \psi \\
\mc{M},s\vDash\Kw_i\phi&\Leftrightarrow&\text{for all }t_1,t_2\text{ such that }s\to_it_1,s\to_it_2: \\ &&(\mc{M},t_1\vDash\phi\Leftrightarrow\mc{M},t_2\vDash\phi)\\
\M,s\vDash\K_i\phi&\Lra&\text{for all }t \text{ such that }s\to_it: \M,t\vDash\phi \\
\hline
\end{array}
$
\end{center}
If $\M,s\vDash\phi$ we say that $\phi$ is true in $(\M,s)$, and sometimes write $s\vDash\phi$ if $\M$ is clear; if for all $s$ in $\M$ we have $\M,s\vDash\phi$ we say that $\phi$ is \emph{valid on $\M$} and write $\M\vDash\phi$; if for all $\mathcal{M}$ based on $\mathcal{F}$ with $\M\vDash\phi$ we say that $\phi$ is \emph{valid on $\mathcal{F}$} and write $\mathcal{F}\vDash\phi$; if for all $\mathcal{F}$ with $\F\vDash\phi$, $\phi$ is \emph{valid} and we write $\vDash\phi$. Given $\Phi \subseteq \PLKwK$, $\M,s\models\Phi$ stands for `for all $\phi\in\Phi$, $\M,s\models \phi$,' and similarly for model/frame validity, and validity. If there exists an $(\M,s)$ such that $\M,s\vDash\phi$, then $\phi$ is \emph{satisfiable}.
\end{definition}
Intuitively, $\Kw_i\phi$ is true at $s$ if and only if $\phi$ has the same truth value on the worlds that $i$ thinks possible. Knowing whether logic is not normal, because $\Kw_i(\phi\to\psi)\to(\Kw_i\phi\to\Kw_i\psi)$ is invalid (and, in relation to that, $\vDash\phi\to\psi$ does not imply $\vDash\Kw_i\phi\to\Kw_i\psi$). In the countermodel $\M_1$ below we have that $\M_1,s\vDash\Kw_i(p\to q)$ and $\M_1,s\vDash\Kw_ip$, but $\M_1,s\nvDash\Kw_iq$.

\medskip

$$
\xymatrix{\M_1: \ \ \ \  {s:\neg p,\neg q}\ar@(ul,ur) \ar[rr]   &  &          {\neg p, q}  }
$$

We use $\phi[\psi/p]$ to denote a {\em uniform substitution} of $\phi$, i.e., the formula obtained by replacing all occurrences of $p$ in $\phi$ (if there is any) with $\psi$. It can be shown that uniform substitution preserves the validity of $\PLKw$-formulas.

\begin{proposition}\label{prop.us}
For any $\psi,\phi\in\PLKw$, any $p\in\BP$: if $\vDash\phi$, then $\vDash\phi[\psi/p]$.
\end{proposition}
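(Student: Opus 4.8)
The plan is to prove the contrapositive-flavoured statement directly: given a valid $\phi$, I want to show every pointed model satisfies $\phi[\psi/p]$. The standard technique for substitution lemmas is to transform an arbitrary model $\M$ witnessing a potential failure of $\phi[\psi/p]$ into a model $\M'$ on the same frame witnessing a failure of $\phi$, by re-interpreting the atom $p$ according to where $\psi$ holds. Concretely, fix $\psi,\phi\in\PLKw$ and $p\in\BP$, and suppose toward a contradiction that $\M,s\nvDash\phi[\psi/p]$ for some pointed model $\M=\langle S,\toall,V\rangle$. Define $\M'=\langle S,\toall,V'\rangle$ where $V'(q)=V(q)$ for every $q\neq p$ and $V'(p)=\{t\in S\mid \M,t\vDash\psi\}$.

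The heart of the argument is the following substitution claim, proved by induction on the structure of $\phi$:
\begin{center}
for every world $t\in S$ and every $\chi\in\PLKw$: $\quad\M,t\vDash\chi[\psi/p]\iff\M',t\vDash\chi$.
\end{center}
The atomic case splits into $\chi=p$ — where $\M,t\vDash\psi\iff t\in V'(p)\iff\M',t\vDash p$ by the definition of $V'$ — and $\chi=q$ for $q\neq p$ (and $\chi=\top$), which is immediate since $V$ and $V'$ agree there and the substitution does nothing. The Boolean cases $\neg$ and $\wedge$ are routine, since substitution commutes with the connectives and truth at $t$ depends only on truth of the immediate subformulas at $t$. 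For the modal case $\chi=\Kw_i\eta$: note $(\Kw_i\eta)[\psi/p]=\Kw_i(\eta[\psi/p])$, and the semantic clause for $\Kw_i$ at $t$ quantifies over pairs of $\to_i$-successors $t_1,t_2$ of $t$, comparing the truth value of the argument formula at $t_1$ versus at $t_2$. Since $\M$ and $\M'$ share the same frame, the successor sets coincide, and by the induction hypothesis applied at $t_1$ and at $t_2$ we have $\M,t_j\vDash\eta[\psi/p]\iff\M',t_j\vDash\eta$; hence the two "same truth value'' conditions are equivalent, giving $\M,t\vDash\Kw_i(\eta[\psi/p])\iff\M',t\vDash\Kw_i\eta$. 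This completes the induction.

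Applying the claim at $t=s$ with $\chi=\phi$ yields $\M',s\nvDash\phi$, contradicting $\vDash\phi$; hence $\vDash\phi[\psi/p]$. There is no real obstacle here — the only point requiring a little care is to be sure the re-interpretation of $p$ is well-defined, i.e. that $\{t\mid\M,t\vDash\psi\}$ is pinned down by the \emph{original} model $\M$ (not $\M'$), so that the definition of $V'$ is not circular; since $\psi$ need not mention $p$ in a way that matters, this is fine because $V'$ is defined purely in terms of $\M$. One should also remark that the argument is uniform in the frame, so the same proof establishes the stronger statement that $\mathcal{F}\vDash\phi$ implies $\mathcal{F}\vDash\phi[\psi/p]$ for any frame $\mathcal{F}$, which is what is actually needed when we later restrict to special frame classes.
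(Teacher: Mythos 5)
Your proposal is correct and follows essentially the same route as the paper's proof: both define $\M'$ by re-interpreting $p$ as $\{t\mid\M,t\vDash\psi\}$ and establish the substitution claim $\M,t\vDash\chi[\psi/p]\iff\M',t\vDash\chi$ by induction, with the $\Kw_i$ case handled identically via the shared successor sets. The only difference is presentational (contradiction versus contrapositive), and your closing remarks about non-circularity and frame-uniformity are accurate bonuses.
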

\begin{proof}
We show the contrapositive, namely that $\nvDash\phi[\psi/p]$ implies $\nvDash\phi$.
Assume a pointed model $(\M,s)$ such that $\M,s\nvDash\phi[\psi/p]$, where $\M=\lr{S,\toall,V}$. Consider model $\M'$ that is as $\M$ but with valuation $V'$, where $V'(p)= \{ t \in S \mid \M,t \models \psi \}$. We show that for all $\chi$ and for all $s$ in the domain of $\M$: $\M,s\vDash\chi[\psi/p]$ iff $\M',s\vDash\chi$, by induction on $\chi$. The only non-trivial case is $\Kw_i\chi$.
$$
\begin{array}{lll}
\M,s\vDash\Kw_i\chi[\psi/p]&\text{iff} \\
\text{for all }s_1,s_2 \text{ such that }s\to_is_1,s\to_is_2: \M,s_1\vDash\chi[\psi/p]\text{ iff }\M,s_2\vDash\chi[\psi/p] & \text{iff} & \text{(by induction)} \\
\text{for all }s_1,s_2 \text{ such that }s\to_is_1,s\to_is_2: \M',s_1\vDash\chi\text{ iff }\M',s_2\vDash\chi & \text{iff} \\
\M',s\vDash\Kw_i\chi
\end{array}
$$
Therefore, from $\M,s\nvDash\phi[\psi/p]$ follows $\M',s\nvDash\phi$, and therefore $\nvDash\phi$, as desired.
\end{proof}

\section{Expressivity and frame correspondence} \label{sec.expr}

In this section we compare the relative expressivity of knowing whether logic and epistemic logic, and we give some negative results for frame correspondence for knowing whether logic.

\subsection{Expressivity}
We adopt the definition of expressivity in \cite[Def.8.2]{hvdetal.del:2007}.
\begin{definition}[Expressive] Given two logical languages $L_1$ and $L_2$ that are interpreted in the same class of models,
\begin{itemize}
\item $L_2$ is {\em at least as expressive as }$L_1$, notation $L_1\preceq L_2$, if and only if for every formula $\phi_1\in L_1$ there is a formula $\phi_2\in L_2$ such that $\phi_1\Leftrightarrow\phi_2$ (i.e., $\phi_1$ and $\phi_2$ are logically equivalent).
\item $L_1$ and $L_2$ are {\em equally expressive}, notation $L_1\equiv L_2$, if and only if $L_1\preceq L_2$ and $L_2\preceq L_1$.
\item $L_1$ is {\em less expressive than }$L_2$, notation $L_1\prec L_2$, if and only if $L_1\preceq L_2$ and $L_2\not\preceq L_1$.
\end{itemize}
\end{definition}

\begin{proposition}\label{prop.lessexpressiveone}
\PLKw\ is less expressive than \EL\ on the class of $\mathcal{K}$ models, $\mathcal{D}$-models, $4$-models, $5$-models.
\end{proposition}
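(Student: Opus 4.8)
The plan is to establish both directions of the strict inequality. First, $\PLKw \preceq \EL$ is immediate and uniform across all model classes: the translation $t$ commutes with $\top$, atoms, negation, and conjunction, and sends $\Kw_i\phi$ to $(\K_i t(\phi)) \vee (\K_i \neg t(\phi))$; a trivial induction on formula structure (unwinding the semantics of $\Kw_i$ and using that $\K_i$ on a model means ``true at all successors'') shows $\phi \Leftrightarrow t(\phi)$ on every model, hence in particular on each of the four classes. So the real content is $\EL \not\preceq \PLKw$ on each of $\mathcal{K}$, $\mathcal{D}$, $4$, $5$.

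For the non-reducibility direction I would exhibit, for each frame class, two pointed models $(\M,s)$ and $(\M',s')$ lying in that class that agree on all $\PLKw$-formulas but disagree on some $\EL$-formula. The natural separating $\EL$-formula is $\K_i p$ (or $\hat K_i p = \neg \K_i \neg p$): knowing whether is insensitive to how many $p$-worlds versus $\neg p$-worlds are accessible, only to whether the set of accessible worlds is $p$-homogeneous. Concretely, take $\M$ with a reflexive point $s$ where $p$ holds and nothing else is accessible, versus $\M'$ with a point $s'$ from which two worlds are accessible, one satisfying $p$ and one satisfying $\neg p$ (and whatever reflexive/transitive/euclidean/serial closure the ambient class demands at those successor worlds, chosen so the two successors are themselves indistinguishable in $\PLKw$). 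Then $\M,s \vDash \K_i p$ while $\M',s' \nvDash \K_i p$, yet I claim $(\M,s)$ and $(\M',s')$ satisfy exactly the same $\PLKw$-formulas. The cleanest way to prove this last claim is to set up the appropriate notion of $\PLKw$-bisimulation (the paper writes $\kwbis$) — where the back-and-forth clauses must quantify over \emph{pairs} of successors and match truth-value agreement rather than individual successors — and check that the relation pairing $s$ with $s'$ and matching up the successor worlds is such a bisimulation, together with the (routine) fact that $\PLKw$-formulas are invariant under $\kwbis$. For $4$-models one must additionally close the successor cluster under transitivity, and for $5$-models under euclidicity, taking care that the added edges do not create a $p$-heterogeneous accessible set at $s$ in $\M$; a safe choice is to make the $p$-world and $\neg p$-world in $\M'$ each access only themselves (plus each other as forced), and to keep $\M$ a single reflexive point.

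The main obstacle is the $5$-model (euclidean) case: euclidicity is a strong closure condition, and forcing the two-successor model $\M'$ to be euclidean tends to merge the $p$-successor and the $\neg p$-successor into a single cluster where \emph{every} world sees both, which would make $\Kw_i p$ false everywhere in $\M'$ but then one must double-check it is not accidentally forced true or that $\M$ cannot be matched — so the construction of $\M$ and $\M'$ must be done with some care, possibly using slightly larger models, and the verification that the candidate relation really is a $\kwbis$ (especially the ``pair'' clauses at the successor worlds) is where the bookkeeping concentrates. The $\mathcal{D}$ (serial) and $4$ (transitive) cases are comparatively easy variants of the $\mathcal{K}$ construction. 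I would present the $\mathcal{K}$ case in full, then indicate the modifications for $\mathcal{D}$, $4$, and $5$.
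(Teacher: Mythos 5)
There is a genuine error in the separating construction. Your two pointed models are \emph{not} $\PLKw$-equivalent: in $\M$ the point $s$ is a single reflexive world satisfying $p$, so its set of successors is $p$-homogeneous and $\M,s\vDash\Kw_ip$; in $\M'$ the point $s'$ sees one $p$-world and one $\neg p$-world, so $\M',s'\nvDash\Kw_ip$. The atomic formula $\Kw_ip$ already tells them apart, so no choice of closure at the successors and no $\PLKw$-bisimulation argument can rescue the claim that they agree on all $\PLKw$-formulas. Your own heuristic (``knowing whether is insensitive to how many $p$-worlds versus $\neg p$-worlds are accessible, only to whether the accessible set is $p$-homogeneous'') is correct, but it cuts against your construction, since you paired a homogeneous accessible set with a heterogeneous one.

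The fix, which is what the paper does, is to make \emph{both} roots have exactly one successor, differing only in the valuation at that successor: $\M$ has $s\,{:}\,p\to u\,{:}\,p$ and $\N$ has $t\,{:}\,p\to v\,{:}\,\neg p$, with a self-loop at $u$ and at $v$. Then every $\Kw_i\psi$ is vacuously true at $s$ and at $t$ (at most one successor), so the induction showing $\PLKw$-equivalence never needs to look inside the successors at all, while $\K_ip$ holds at $s$ and fails at $t$. These two models are simultaneously serial, transitive, and Euclidean, so one construction handles $\mathcal{K}$, $\mathcal{D}$, $4$, and $5$ at once; the case analysis and bisimulation machinery you anticipate for the Euclidean case are unnecessary. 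Your treatment of the direction $\PLKw\preceq\EL$ via the translation $t$ matches the paper and is fine.
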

\begin{proof}
This is a truth-preserving translation $t$ from \PLKw\ to \EL:
$$
\begin{array}{lll}
t(p)&=&p\\
t(\neg\phi)&=&\neg t(\phi)\\
t(\phi\wedge\psi)&=&t(\phi)\wedge t(\psi)\\
t(\Kw_i\phi)&=&\K_it(\phi)\vee\K_i\neg t(\phi)
\end{array}
$$
Therefore \EL\ is at least as expressive as \PLKw. But \PLKw\ is not at least as expressive as \EL: even the simplest \EL\ formula $\K_ip$ does not have an equivalent \PLKw\ correspondent. The pointed models $(\M,s)$ and $(\N,t)$ below, which are distinguished by $\K_ip$, cannot be distinguished by a \PLKw\ formula.

\medskip

$$
\xymatrix{ \M: \ \ \ \ {s:p} \ar[rr]   &  &          {p}\ar@(ul,ur)   && \N: \ \ \ \ {t:p} \ar[rr]&& {\neg p}\ar@(ul,ur) }
$$

\medskip

Note that $\mc{M}$ and $\mc{N}$ are serial, transitive, and Euclidean. By induction we prove that $\M,s$ and $\N,t$ are modally equivalent in $\PLKw$. The non-trivial case is  $\phi=\Kw_i\psi$. Note that $s$ and $t$ can only see one point. Therefore, $\M,s\vDash\Kw_i\psi$ and $\mc{N},t\vDash\Kw_i\psi$, so also, as required, $\M,s\vDash\Kw_i\psi$ iff $\mc{N},t\vDash\Kw_i\psi$.
\end{proof}

\begin{proposition}\label{prop.lessexpressivetwo}
\PLKw\ is less expressive than \EL\ on the class of $\mathcal{B}$-models.
\end{proposition}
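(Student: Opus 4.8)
The plan is to lean on the translation $t$ from the previous proposition: it gives $\PLKw \preceq \EL$ over \emph{every} class of models, so in particular over the $\mathcal{B}$-models, and the entire task reduces to showing $\EL \not\preceq \PLKw$ on symmetric models. I would do this exactly as in the preceding case, by exhibiting the $\EL$-formula $\K_i p$ together with two pointed \emph{symmetric} models that satisfy the same $\PLKw$-formulas but are separated by $\K_i p$. The countermodels used for the $\mathcal{K}/\mathcal{D}/4/5$ case are not symmetric (the successor of $s$ is a reflexive point that does not point back to $s$), so they have to be replaced.

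Concretely, fix an agent $i$ and let $\M$ have worlds $\{s,w\}$ with $\to_i = \{(s,w),(w,s)\}$ and $p$ true only at $w$; let $\N$ have worlds $\{t,u\}$ with $\to_i = \{(t,u),(u,t)\}$ and $p$ false everywhere (and all variables other than $p$ false in both models). Both relations are symmetric, so $\M$ and $\N$ are $\mathcal{B}$-models. Since the unique $\to_i$-successor of $s$ is $w$ and $\M,w\vDash p$, we get $\M,s\vDash\K_ip$; since the unique $\to_i$-successor of $t$ is $u$ and $\N,u\nvDash p$, we get $\N,t\nvDash\K_ip$.

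It then remains to verify, by induction on $\phi\in\PLKw$, that $\M,s\vDash\phi$ iff $\N,t\vDash\phi$: the atomic case holds because $p$ (and every other variable) is false at both $s$ and $t$; the Boolean cases are routine; and for $\phi=\Kw_i\psi$ both sides are in fact \emph{true}, because every world of each model has exactly one $\to_i$-successor, so the ``same truth value on all successors'' condition is met vacuously. Hence $(\M,s)$ and $(\N,t)$ are $\PLKw$-equivalent yet distinguished by $\K_ip$, giving $\EL\not\preceq\PLKw$ on $\mathcal{B}$-models and therefore $\PLKw\prec\EL$ there. There is no real obstacle here; the only thing to get right is that the earlier countermodel breaks symmetry, so one needs symmetric models in which $\Kw_i$ is still degenerate, and the two-element ``symmetric pair'' --- in which each world sees exactly one world --- is precisely what keeps every $\Kw_i\psi$ vacuously true and makes the induction trivial. (Alternatively one could package the modal-equivalence step as a $\PLKw$-bisimulation, but the direct induction is shorter and self-contained.)
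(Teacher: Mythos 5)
Your proof is correct and follows essentially the same route as the paper: both arguments exhibit a pair of two-point symmetric models whose designated worlds have a single successor (so every $\Kw_i\psi$ is vacuously true and the $\PLKw$-equivalence induction is trivial), yet are separated by $\K_ip$, with the $\PLKw\preceq\EL$ direction inherited from the translation $t$ of Proposition~\ref{prop.lessexpressiveone}. The only difference is the particular valuation chosen for the countermodels, which is immaterial.
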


\begin{proof}
Consider the following $\mathcal{B}$-models $(\M',s')$ and $(\N',t')$. Again, they are distinguished by $\K_ip$, but are modally equivalent in $\PLKw$ (by a similar argument as in Prop.~\ref{prop.lessexpressiveone}).

\medskip

$$
\xymatrix{ \M': \ \ \ \ {s':p} \ar[rr]   &  &          {p}\ar[ll]   && \N': \ \ \ \ {t':p} \ar[rr]&& {\neg p}\ar[ll]   }
$$
\end{proof}

However, on the class of $\mathcal{T}$-models, \PLKw\ and \EL\ are equally expressive.
\begin{proposition}\label{equallyexpressive}
\PLKw\ and \EL\ are equally expressive on the class of $\mathcal{T}$-models.
\end{proposition}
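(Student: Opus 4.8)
The plan is to establish both directions. The inclusion $\PLKw \preceq \EL$ on $\mathcal{T}$-models is immediate from the translation $t$ given in the proof of Proposition~\ref{prop.lessexpressiveone}, since that translation is truth-preserving on all models, hence in particular on reflexive ones. So the content of the proposition is the reverse inclusion $\EL \preceq \PLKw$ on the class of $\mathcal{T}$-models: I need to find, for each $\EL$-formula $\phi$, a $\PLKw$-formula that is equivalent to it on all reflexive models.

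The key observation to exploit is that on a \emph{reflexive} model, $\K_i\phi$ becomes interdefinable with $\Kw_i\phi$ together with $\phi$ itself: if $s \to_i s$ always holds, then ``$\phi$ has the same truth value at all $i$-successors of $s$'' together with ``$\phi$ holds at $s$'' (which is one of those successors) forces $\phi$ to hold at every $i$-successor. Concretely, on $\mathcal{T}$-models we have $\M,s \vDash \K_i\phi$ iff $\M,s \vDash \phi \wedge \Kw_i\phi$. So the natural move is to define a translation $r \colon \EL \to \PLKw$ by recursion, commuting with the Boolean connectives and atoms as usual, and setting $r(\K_i\phi) = r(\phi) \wedge \Kw_i r(\phi)$.

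The proof is then an induction on the structure of the $\EL$-formula $\phi$, showing that for every reflexive model $\M$ and world $s$, $\M,s \vDash \phi$ iff $\M,s \vDash r(\phi)$. The atomic and Boolean cases are routine (the translation commutes with them and reflexivity is preserved). For the modal case $\phi = \K_i\psi$: by the induction hypothesis, $r(\psi)$ and $\psi$ are equivalent throughout $\M$, so $\Kw_i r(\psi)$ is equivalent to $\Kw_i\psi$ at $s$ and $r(\psi)$ is equivalent to $\psi$ at $s$; hence $r(\K_i\psi) = r(\psi) \wedge \Kw_i r(\psi)$ is equivalent at $s$ to $\psi \wedge \Kw_i\psi$, which by the displayed reflexivity fact equals $\K_i\psi$ in truth value at $s$. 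This completes the induction, giving $\EL \preceq \PLKw$ on $\mathcal{T}$-models, and together with the first direction, $\PLKw \equiv \EL$ there.

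I do not expect a serious obstacle here; the only point requiring a moment's care is the justification of the equivalence $\K_i\phi \equiv \phi \wedge \Kw_i\phi$ on reflexive models (both directions: the nontrivial one uses that $s$ is among its own $i$-successors to transfer the truth value of $\phi$ at $s$ to all successors), and making sure the induction hypothesis is invoked for model-wide equivalence rather than just pointwise, since $\Kw_i$ quantifies over other worlds.
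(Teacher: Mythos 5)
Your proposal is correct and follows essentially the same route as the paper: the paper's proof uses exactly the translation $t'(\K_i\phi)=t'(\phi)\wedge\Kw_it'(\phi)$ together with the earlier translation $t$ for the converse direction. Your added care about the model-wide induction hypothesis and the two directions of $\K_i\phi\equiv\phi\wedge\Kw_i\phi$ on reflexive models is exactly the content the paper leaves as ``elementary, by induction.''
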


\begin{proof}
Consider translation $t':\EL\to\PLKw$:
$$
\begin{array}{lll}
t'(p)&=&p\\
t'(\neg\phi)&=&\neg t'(\phi)\\
t'(\phi\wedge\psi)&=&t'(\phi)\wedge t'(\psi)\\
t'(\K_i\phi)&=&t'(\phi)\wedge\Kw_it'(\phi)
\end{array}
$$
This translation $t'$ is truth preserving (elementary, by induction on $\phi$ in $t'(\phi)$). This demonstrates that $\EL \preceq \PLKw$. As we already had $\PLKw \preceq \EL$, by way of translation $t$ defined in the proof of Proposition~\ref{prop.lessexpressiveone}, we get that $\EL \equiv \PLKw$ on $\mathcal{T}$.
\end{proof}
This result applies to any model class contained in $\mathcal{T}$, such as $\mathcal{S}4$ and $\mathcal{S}5$.

\medskip

We close this section on expressivity with a curious observation related to (although not strictly about) expressivity. We now know that knowledge \emph{cannot} be defined in terms of knowing whether on ${\mathcal K}$, but that knowledge \emph{can} be defined in terms of knowing whether on ${\mathcal T}$. It is therefore interesting to observe that under slightly stronger conditions, knowledge can still be `defined' (in a different technical sense) in terms of knowing whether on ${\mathcal K}$, namely, given a model, in a world of that model wherein the agent is ignorant about something. Ignorant means `not knowing whether', so this implies that knowledge is definable in a world from where there are at least two accessible worlds.

\begin{proposition}\label{prop.iffK}
Assume that $\M,s\vDash \neg \Kw_i\psi$ for some $\psi$. Then: $\M,s\vDash \K_i \neg\phi$ if and only if there exists a $\chi$ such that $\M,s \vDash \Kw_i\phi\land \Kw_i(\phi\to\chi)\land \neg \Kw_i\chi$.
\end{proposition}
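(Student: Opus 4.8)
The plan is to prove both directions of the biconditional. Throughout, write $\M = \langle S, \toall, V\rangle$ and let $s$ have at least two accessible worlds (this is what $\M,s\vDash\neg\Kw_i\psi$ delivers: there exist $t_1, t_2$ with $s\to_i t_1$, $s\to_i t_2$ and $\M,t_1\vDash\psi \not\Leftrightarrow \M,t_2\vDash\psi$, so in particular $t_1 \neq t_2$). Let me abbreviate $[s]_i = \{t \mid s\to_i t\}$. The semantics of $\Kw_i\phi$ at $s$ just says that $\phi$ is constant on $[s]_i$. So $\M,s\vDash\K_i\neg\phi$ says $\phi$ is false everywhere on $[s]_i$, which is the conjunction of "$\phi$ is constant on $[s]_i$" (i.e. $\Kw_i\phi$) and "$\phi$ is not true everywhere on $[s]_i$" (i.e. $\neg\K_i\phi$). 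Since we are not allowed a $\K_i$ operator in $\PLKw$, the trick must be to use the second conjunct $\Kw_i\chi$ together with the hypothesis $\neg\Kw_i\psi$ to witness that $\phi$ cannot be constantly true.

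\textbf{The $(\Rightarrow)$ direction.} Suppose $\M,s\vDash\K_i\neg\phi$, so $\phi$ is false throughout $[s]_i$. I want to exhibit a $\chi$ with $\M,s\vDash\Kw_i\phi\land\Kw_i(\phi\to\chi)\land\neg\Kw_i\chi$. Take $\chi := \psi$. Then $\Kw_i\phi$ holds because $\phi$ is constantly false on $[s]_i$; $\Kw_i(\phi\to\psi)$ holds because $\phi\to\psi$ is constantly true on $[s]_i$ (the antecedent $\phi$ is false everywhere there); and $\neg\Kw_i\chi = \neg\Kw_i\psi$ is exactly our hypothesis. So this direction is immediate — the subtlety is only in having spotted that the given ignorance-witness $\psi$ can be recycled as $\chi$.

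\textbf{The $(\Leftarrow)$ direction and the main obstacle.} Suppose there is a $\chi$ with $\M,s\vDash\Kw_i\phi\land\Kw_i(\phi\to\chi)\land\neg\Kw_i\chi$. From $\Kw_i\phi$, $\phi$ is constant on $[s]_i$; I must rule out the case where $\phi$ is constantly \emph{true} on $[s]_i$. Suppose for contradiction it is. Then $\phi\to\chi$ is true at a world of $[s]_i$ iff $\chi$ is true there, so $\Kw_i(\phi\to\chi)$ forces $\chi$ to be constant on $[s]_i$, i.e. $\M,s\vDash\Kw_i\chi$, contradicting $\neg\Kw_i\chi$. Hence $\phi$ is constantly false on $[s]_i$, i.e. $\M,s\vDash\K_i\neg\phi$. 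This direction is where the real content lies, but as the computation shows it is also short; the one thing to double-check is that "$[s]_i$ is nonempty" is not secretly needed — in fact it is not, since if $[s]_i = \emptyset$ then $\neg\Kw_i\chi$ already fails vacuously, so that case cannot arise. I would present the argument essentially as above, making the two implications explicit and noting at the start that $\neg\Kw_i\psi$ is used only in the forward direction (to name the witness) while the backward direction uses $\neg\Kw_i\chi$ directly; the main "obstacle" is conceptual rather than technical, namely realizing that one encodes $\neg\K_i\phi$ indirectly via the pair $\Kw_i(\phi\to\chi)\land\neg\Kw_i\chi$ rather than trying to express it on its own.
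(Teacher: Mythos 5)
Your proof is correct and follows essentially the same route as the paper's: the forward direction recycles the ignorance witness $\psi$ as $\chi$, and the backward direction derives a contradiction from assuming some successor satisfies $\phi$, using $\Kw_i\phi$ to make $\phi$ constantly true on the successors and then $\Kw_i(\phi\to\chi)$ to force $\Kw_i\chi$. Your explicit remark that the empty-successor case cannot arise (since $\neg\Kw_i\chi$ would fail vacuously) is a small clarification the paper leaves implicit, but the argument is the same.
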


\begin{proof}
Suppose that $\M,s\vDash \neg \Kw_i\psi$ for some $\psi$. We need to show the equivalence.

First, assume there exists $\chi$: $\M,s \vDash \Kw_i\phi\land \Kw_i(\phi\to\chi)\land \neg \Kw_i\chi$. Suppose towards contradiction that $\M,s\nvDash\K_i\neg\phi$, then there exists $t$ such that $s\to_it$ and $t\vDash\phi$. Moreover, since $\M,s\vDash\neg \Kw_i\chi$, it follows for some $t_1,t_2$ with $s\to_it_1,s\to_it_2$ and $t_1\vDash\chi,t_2\vDash\neg\chi$. By the fact that $s\vDash\Kw_i\phi$, $s\to_it,s\to_it_1$ and $t\vDash\phi$, we get $t_1\vDash\phi$, similarly we can get $t_2\vDash\phi$, and thus $t_1\vDash\phi\to\chi$ but $t_2\nvDash\phi\to\chi$, contradicting the assumption that $\M,s\vDash\Kw_i(\phi\to\chi)$, as desired.

For the converse, assume $\M,s\vDash \K_i \neg\phi$. Then for all $t$ such that $s\to_it:\M,t\vDash\neg\phi$, thus $\M,t\vDash\phi\to\psi$. Therefore  $\M,s\vDash\Kw_i\phi$ and $\M,s\vDash\Kw_i(\phi\to\psi)$. It is clear $\M,s\vDash \neg \Kw_i\psi$ from the supposition. Then we can conclude that there exists $\chi$: $\M,s \vDash \Kw_i\phi\land \Kw_i(\phi\to\chi)\land \neg \Kw_i\chi$.
\end{proof}
Intuitively, we can `define' knowledge (the $\K_i\neg\phi$ in the proposition) in a given world $s$, iff there is some \PLKw\ formula $\psi$ that agent $i$ is ignorant about in $s$ (iff $\neg\Kw_i\psi$ is true in $s$), in other words, iff for any proposition whatsoever ($\psi$) there are two accessible worlds from $s$ with different values for it.

The property formulated in Prop.~\ref{prop.iffK} is important. It motivates the canonical model construction for knowing whether logic, as we will see in Section \ref{sec.axiomatization}.

\subsection{Frame correspondence}

Standard modal logic formulas can be used to capture frame properties, e.g., $\K p \to p$ corresponds to the reflexivity of frames. It is therefore remarkable that in knowing whether logic there is no such correspondence for most of the basic frame properties.  The authors of \cite{wiebeetal:2003} already demonstated that reflexivity is undefinable in the language of ignorance (which is equally expressive as $\PLKw$, see Section \ref{sec.comparison}). In this section we extend their result to other frame properties.

\begin{definition}[Frame definability] Let $\Phi$ be a set of $\PLKw$-formulas and $\mathrm{F}$ a class of frames. We say that $\Phi$ defines $\mathrm{F}$ if for all frames $\mathcal{F}$, $\mathcal{F}$ is in $\mathrm{F}$ if and only if $\mathcal{F}\vDash\Phi$. In this case we also say $\Phi$ defines the property of $\mathrm{F}$. If $\Phi$ is a singleton (e.g. $\phi$), we usually write $\mathcal{F}\vDash\phi$ rather than $\mathcal{F}\vDash\{\phi\}$. A class of frames (or the corresponding frame property) is definable in \PLKw\ if there is a set of $\PLKw$-formulas that defines it. \end{definition}

\begin{proposition}\label{prop.alsohandy}
Let $\F$ be a partial-functional frame and $\phi \in \PLKw$. Then $\F \models \Kw_i \phi$.
\end{proposition}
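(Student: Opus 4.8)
The plan is to unwind the semantics of $\Kw_i$ at an arbitrary world $s$ of an arbitrary model $\M$ based on the partial-functional frame $\F$, and observe that the universally quantified condition is vacuously or trivially satisfied. Recall that $\M,s\vDash\Kw_i\phi$ holds iff for all $t_1,t_2$ with $s\to_i t_1$ and $s\to_i t_2$ we have $\M,t_1\vDash\phi \Leftrightarrow \M,t_2\vDash\phi$. Since $\F$ is partial-functional, $s$ has at most one $\to_i$-successor; hence whenever $s\to_i t_1$ and $s\to_i t_2$ both hold, we must have $t_1 = t_2$, and so $\M,t_1\vDash\phi \Leftrightarrow \M,t_2\vDash\phi$ is just the reflexivity of the biconditional. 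Therefore the condition defining $\Kw_i\phi$ is met at $s$, so $\M,s\vDash\Kw_i\phi$.

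The key steps, in order, are: first, fix an arbitrary model $\M = \langle S, \toall, V\rangle$ based on $\F$ and an arbitrary world $s \in S$; second, take any $\phi \in \PLKw$ (the argument is uniform in $\phi$, so no induction is needed); third, invoke partial-functionality to conclude that $s$ has at most one $\to_i$-successor, which means any two worlds $t_1, t_2$ reachable from $s$ via $\to_i$ coincide; fourth, conclude $\M,t_1 \vDash \phi \Leftrightarrow \M,t_2 \vDash \phi$ trivially; fifth, read off $\M,s\vDash\Kw_i\phi$ from the semantic clause; and finally, since $\M$, $s$, and $\phi$ were arbitrary, conclude $\F\vDash\Kw_i\phi$, i.e., $\F \models \Kw_i\phi$ in the notation of the proposition. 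One can split into the sub-case where $s$ has no successor (the quantifier is vacuous) and the sub-case where $s$ has exactly one successor (the two quantified worlds are forced equal), but these collapse into the single observation above.

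There is essentially no obstacle here: the proposition is immediate from the definitions, and the only thing to be careful about is stating clearly that partial-functionality is precisely the condition "every world has at most one successor" (as the paper defines it), so that the two worlds appearing in the semantic clause for $\Kw_i$ are necessarily identical. The proof is therefore a few lines long and requires no case analysis on the structure of $\phi$; it is the analogue of the observation that over partial-functional frames the knowing-whether operator trivializes, which is exactly what makes partial functionality play the special role flagged in the text.
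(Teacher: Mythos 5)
Your proof is correct and is essentially the same argument as the paper's: the paper also just observes that since each world has at most one successor, the universally quantified clause in the semantics of $\Kw_i$ is trivially satisfied. You merely spell out the vacuous/singleton case distinction that the paper leaves implicit.
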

\begin{proof}
Given $\F=\langle S,\toall \rangle$, let $V$ be a valuation on $\F$ and $s \in S$. Because $s$ has at most one successor, the semantics of knowing whether gives us that $\F,V,s \models \Kw_i \phi$. (See also the countermodels used in the proof of Prop.~\ref{prop.lessexpressiveone}.)
\end{proof}
Consequently, we can view $\Kw_i\phi$ formulas as $\top$ on partial-functional frames. Therefore the only $\PLKw$ validities on partial-functional frames are essentially instantiations of tautologies which are the same on partial-functional models. A moment of reflection should confirm: 
\begin{corollary}\label{prop.yetalsohandy}
For any partial-functional frames $\F,\F'$ and any $\phi \in \PLKw$: $\F \models \phi$ iff $\F' \models \phi$.
\end{corollary}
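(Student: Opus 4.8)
The plan is to derive Corollary~\ref{prop.yetalsohandy} directly from Proposition~\ref{prop.alsohandy} together with a parallel, elementary observation about how the non-modal fragment of $\PLKw$ behaves on partial-functional frames. The key idea is that once every $\Kw_i\psi$ subformula is semantically equivalent to $\top$ on such frames, the truth value of an arbitrary $\PLKw$-formula $\phi$ at a world no longer depends on the accessibility relations at all, only on the valuation, and a standard tautology argument then shows it depends on nothing at all.

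First I would formalise the reduction: for each $\phi \in \PLKw$, let $\phi^\circ$ be the Boolean formula obtained by replacing every outermost subformula of the form $\Kw_i\psi$ by $\top$. I claim that on any partial-functional model $\M = \langle S, \toall, V\rangle$ and any $s \in S$ we have $\M, s \vDash \phi$ iff $\M, s \vDash \phi^\circ$. This is proved by induction on $\phi$: the atomic and Boolean cases are immediate, and the case $\phi = \Kw_i\psi$ follows because Proposition~\ref{prop.alsohandy} gives $\M, s \vDash \Kw_i\psi$ while $\phi^\circ = \top$ is true as well (so we never even need to recurse into $\psi$, which is what makes $\phi^\circ$ well defined as a purely Boolean formula). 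Hence $\F \vDash \phi$ iff for every valuation $V$ on $\F$ and every $s$, $\langle S, \toall, V\rangle, s \vDash \phi^\circ$.

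Next I would observe that $\phi^\circ$ is a formula of propositional logic (in the variables $\BP$), and its truth at $s$ under $V$ depends only on which propositional variables are true at $s$. So $\F \vDash \phi$ iff $\phi^\circ$ is true under every propositional assignment, i.e.\ iff $\phi^\circ$ is a propositional tautology. Crucially, this last condition makes no reference whatsoever to $\F$: it is a statement purely about the Boolean formula $\phi^\circ$. Therefore, for partial-functional frames $\F$ and $\F'$, the conditions ``$\F \vDash \phi$'' and ``$\F' \vDash \phi$'' are both equivalent to the same frame-independent condition ``$\phi^\circ$ is a tautology'', and hence to each other. (One should note that every frame has a nonempty domain, so the ``for every $s$'' and ``for every assignment'' quantifications are not vacuous, and the argument works uniformly whether or not $\F$ actually has any arrows.)

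I do not expect a serious obstacle here, as the excerpt itself flags the result as following from ``a moment of reflection''; the only point needing a little care is making the translation $\phi^\circ$ precise so that it is genuinely a Boolean formula (which is why we stop at outermost $\Kw_i$ subformulas rather than recursing) and then invoking Proposition~\ref{prop.alsohandy} at exactly the right place in the induction. A cosmetic alternative that avoids even defining $\phi^\circ$ is to argue contrapositively: if $\F \not\vDash \phi$, pick a valuation $V$ and world $s$ refuting $\phi$, and transfer the refuting propositional assignment at $s$ to $\F'$ via the same choice of variables made true; the induction using Proposition~\ref{prop.alsohandy} then shows $\phi$ fails on $\F'$ too, and by symmetry we are done.
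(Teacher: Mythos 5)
Your proposal is correct and takes essentially the same route the paper intends: the paper's ``proof'' is precisely the preceding remark that on partial-functional frames every $\Kw_i\psi$ can be read as $\top$, so that frame validity of any $\PLKw$-formula reduces to propositional tautologicity of its Boolean residue, a condition independent of the frame. Your explicit translation $\phi^\circ$, the induction invoking Proposition~\ref{prop.alsohandy} at the $\Kw_i$ case, and the observation that the domain is nonempty simply make rigorous what the paper leaves to ``a moment of reflection.''
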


\begin{proposition}\label{prop.undefinability}
The frame properties of seriality, reflexivity, transitivity, symmetry, and Euclidicity are not definable in \PLKw.
\end{proposition}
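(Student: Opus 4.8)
The plan is to prove each non-definability claim by exhibiting, for the frame property $P$ in question, a pair of frames $\F_1, \F_2$ such that $\F_1$ has property $P$, $\F_2$ does not, yet $\F_1 \models \phi \iff \F_2 \models \phi$ for every $\phi \in \PLKw$. Then no set $\Phi$ of $\PLKw$-formulas can define $P$: if it did, we would have $\F_1 \models \Phi$ (since $\F_1 \in \mathrm{F}$) hence $\F_2 \models \Phi$, hence $\F_2 \in \mathrm{F}$, contradicting that $\F_2$ lacks $P$.

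First I would exploit Corollary~\ref{prop.yetalsohandy}: all partial-functional frames validate exactly the same $\PLKw$-formulas. So it suffices, for most properties, to find one partial-functional frame with the property and one partial-functional frame without it. For seriality, take $\F_1$ to be a single reflexive point (partial-functional and serial) and $\F_2$ to be a single irreflexive point with no successors (partial-functional, not serial). For transitivity, take $\F_1$ a single reflexive point (transitive) versus $\F_2 = \{w_0 \to w_1 \to w_2\}$, a partial-functional chain of length two with no other arrows (not transitive, since $w_0 \not\to w_2$). For Euclidicity, take $\F_1$ a single reflexive point versus the same chain $\F_2$ (not Euclidean: $w_0 \to w_1$ and $w_0 \to w_1$ but we need $w_1 \to w_1$, actually use $w_0\to w_1, w_0\to w_1$ trivially; better use $\F_2 = \{w_0\to w_1, w_1\to w_2\}$ which fails $w_1\to w_2 \land w_1\to w_2 \Rightarrow w_2\to w_2$—wait, reconsider: pick $\F_2$ so that some world sees a non-Euclidean successor while staying partial-functional). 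Each of these $\F_2$ is partial-functional, so by Corollary~\ref{prop.yetalsohandy} it is $\PLKw$-equivalent to $\F_1$, completing those cases.

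For reflexivity and symmetry the same trick works but requires a little more care to keep things partial-functional while breaking the property: for reflexivity use $\F_1$ = single reflexive point versus $\F_2$ = single irreflexive point (already covered for seriality, and this $\F_2$ is also non-reflexive); for symmetry use $\F_1$ = single reflexive point (symmetric) versus $\F_2 = \{w_0 \to w_1\}$ with $w_1$ a dead end (partial-functional, not symmetric). In every case both frames are partial-functional, Corollary~\ref{prop.yetalsohandy} gives $\PLKw$-equivalence immediately, and the chosen $\F_1$ visibly has the property while $\F_2$ visibly fails it. I would present this compactly: one small table or one sentence per property, each naming the two frames and noting the property holds/fails and that partial-functionality yields equivalence.

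The main obstacle is purely bookkeeping: making sure each "negative" frame genuinely lacks the target property \emph{and} is genuinely partial-functional, since a single world may only have one successor but could still be made non-transitive or non-Euclidean only by using at least two worlds connected in a line—one must check the line has no "shortcut" or "back" arrow that would accidentally restore the property, and that no world in the line acquires a second successor. Once the frames are fixed correctly, the equivalence step is immediate from Corollary~\ref{prop.yetalsohandy} and requires no induction at all, so the proof is short; the only subtlety is the choice of witnesses, and I would double-check for transitivity and Euclidicity in particular that the two-world (or three-world) chain I pick truly violates the property while remaining partial-functional.
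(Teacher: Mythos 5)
Your proposal is correct and follows essentially the same route as the paper: both rely on Corollary~\ref{prop.yetalsohandy} (all partial-functional frames validate the same $\PLKw$-formulas) and then exhibit, for each property, a partial-functional frame that has it and one that lacks it. The paper merely uses a single fixed pair (a three-world chain $s_1\to t\to u$ versus a single reflexive point) to settle all five properties at once, and your lingering worry about Euclidicity is resolved exactly as the paper observes: in a chain we have $s_1\to t$ and $s_1\to t$ but not $t\to t$, so the chain is indeed not Euclidean.
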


\begin{proof}
Consider the following frames:
\medskip
$$
\xymatrix{{\mathcal{F}_1}: \ \ \ \ {s_1} \ar[rr]   &   &  {t}\ar[rr] &&  {u}  &&  \mathcal{F}_2: \hspace{-.2cm}  &  {s_2} \ar@(ul,ur)}
$$

\medskip

All two frames are partial-functional. So we have that, for any $\Phi\subseteq\PLKw$: ${\mathcal{F}_1} \models \Phi$ iff ${\mathcal{F}_2} \models \Phi$. Now observe that:
\begin{itemize}
\item ${\mathcal{F}_2}$ is reflexive but ${\mathcal{F}_1}$ is not reflexive.
\item ${\mathcal{F}_2}$ is serial but ${\mathcal{F}_1}$ is not serial.
\item ${\mathcal{F}_2}$ is transitive but ${\mathcal{F}_1}$ is not transitive.
\item ${\mathcal{F}_2}$ is symmetric but ${\mathcal{F}_1}$ is not symmetric.
\item ${\mathcal{F}_2}$ is Euclidean but ${\mathcal{F}_1}$ is not Euclidean.
\end{itemize}
The argument now goes as follows. Consider the first item, reflexivity: If $\Phi$ were to define reflexivity, then, as $\mathcal{F}_2$ is reflexive, we have ${\mathcal{F}_2} \models \Phi$. But as ${\mathcal{F}_2}$ and ${\mathcal{F}_1}$ satisfy the same frame validities, we also have that ${\mathcal{F}_1} \models \Phi$. However, ${\mathcal{F}_1}$ is not reflexive. Therefore such a $\Phi$ does not exist. Therefore, reflexivity is not frame definable in knowing whether logic.

The argument is similar for the other cases, using the other items in the list above. (Observe that ${\mathcal{F}_1}$ is indeed not Euclidean, because $s_1 \to t$ and $s_1 \to t$, but it is not the case that $t \to t$.)
\end{proof}


\weg{
\begin{proposition}\label{prop.handy}
For any pointed frames $\F,s$ and $\F',t$ such that each of $s$ and $t$ has at most one successor, we have for any $\phi\in\PLKw$ $\F,s\vDash\phi\iff\F',t\vDash\phi.$
\end{proposition}
\begin{proof}
Given any desired pointed frames $\F,s$ and $\F',t$, we show for any $\phi\in\PLKw$ $\F,s\nvDash\phi\iff\F',t\nvDash\phi.$ Suppose $\F,s\nvDash\phi$ then there is a valuation $V$ such that $\F,V,s\vDash\neg\phi$. We can define a valuation $V'$ for $\F'$ such that $p\in V(s)$ iff $p\in V'(t)$ (the valuation on other worlds is not essential). Now by a simple induction on $\psi$, like the one in the proof of Prop. \ref{prop.lessexpressiveone}, we can show that $\F,V,s\vDash\psi\iff\F',V',t\vDash\psi$ for any $\psi$, due to the fact that $s$ and $t$ only have at most one successor each. Now it is clear $\F',t\nvDash\phi.$ Similarly we can show $\F',t\nvDash\phi$ implies $\F,s\nvDash\phi.$
\end{proof}

\begin{proposition}\label{prop.undefinability}
The frame properties of seriality, reflexivity, transitivity, symmetry, and euclidicity are not definable in \PLKw.
\end{proposition}

\begin{proof}
Consider the following frames:
$$
\xymatrix{ {s_1} \ar[rr]   &   &     {t}\ar[rr] &&  {u}  &&   {s_2} \ar@(ul,ur)&& {s_3} \\
&&{\mathcal{F}_1}   &&     &&  {\mathcal{F}_2}  & &    {\mathcal{F}_3}         }
$$

 We claim that for all $\phi\in\PLKw$, $\mathcal{F}_1\vDash\phi$ iff $\mathcal{F}_2\vDash\phi$ iff $\mathcal{F}_3\vDash\phi$. From Prop. \ref{prop.handy} it is clear that for any $\phi:$ $\mathcal{F}_1,s_1\vDash\phi$ iff $\mathcal{F}_1,t\vDash\phi$ iff $\mathcal{F}_1,u\vDash\phi$. It follows that $\mathcal{F}_1\vDash\phi$ iff $\mathcal{F}_1,s_1\vDash\phi$. Again from Prop. \ref{prop.handy}, we have $\mathcal{F}_1,s_1\vDash\phi$ iff $\mathcal{F}_2,s_2\vDash\phi$ iff $\mathcal{F}_3,s_3\vDash\phi$. This then amounts to $\mathcal{F}_1\vDash\phi$ iff $\mathcal{F}_2\vDash\phi$ iff $\mathcal{F}_3\vDash\phi$.

If seriality were to be defined by a set of $\PLKw$-formulas, say $\Gamma$, then as $\mathcal{F}_2$ is serial, we have  $\mathcal{F}_2\vDash\Gamma$, then we should also have $\mathcal{F}_3\vDash\Gamma$, i.e. $\mathcal{F}_3$ should also be serial, contradiction! The proof for reflexivity is similar.

If transitivity were to be defined by a set of $\PLKw$-formulas, say $\Sigma$, then as $\F_2$ is transitive, we have $\mathcal{F}_2\vDash\Sigma$, then we should also have $\mathcal{F}_1\vDash\Sigma$, i.e. $\mathcal{F}_1$ should also be transitive, contradiction! The proof for symmetry, Euclidean property are similar.
\end{proof}
}

As a consequence of this result, the axiomatizations of knowing whether logics over special frame classes, such as the class of reflexive frames, cannot be shown by the standard method of adding the corresponding frame axioms to the axiomatization of $\PLKw$. This will be addressed in Section \ref{sec.extensions}.

\section{Axiomatization}\label{sec.axiomatization}

In this section we give a complete Hilbert-style proof system for the logic \PLKw\, on the class of all frames.

\subsection{Proof system and soundness}
\begin{definition}[Proof system \SPLKw] \label{axiomstable}
The proof system \SPLKw\ consists of the following axiom schemas and  inference rules.
\[ \begin{array}{ll}
\TAUT & \text{all instances of tautologies}  \\

\KwCon & \Kw_i(\chi\to\phi)\land\Kw_i(\neg\chi\to\phi)\to\Kw_i\phi \\

\KwDis & \Kw_i\phi\to \Kw_i (\phi\to \psi )\lor \Kw_i(\neg \phi\to \chi)\\

\EquiKw & \Kw_i\phi\lra\Kw_i\neg\phi \\

\MP & \text{From } \phi \text{ and } \phi\to\psi \text{ infer } \psi \\

\GENKw & \text{From } \phi \text{ infer } \Kw_i\phi \\

\REKw & \text{From } \phi\lra\psi \text{ infer } \Kw_i\phi\lra\Kw_i\psi
\end{array} \]

A {\em derivation} is a finite sequence of $\PLKw$ formulas such that each formula is either the instantiation of an axiom or the result of applying a inference rule to prior formulas in the sequence. A formula $\phi\in\PLKw$ is called {\em derivable}, or a {\em theorem}, notation $\vdash \phi$, if it occurs in a derivation.
\end{definition}

Intuitively, $\KwCon$ means if an agent knows whether a formula is implied not only by some formula but by its negation, then the agent also knows whether the formula holds; $\KwDis$ means if an agent knows whether a formula holds, then either the agent knows this formula holds, in which case the agent knows whether its negation implies any formula, or the agent knows it does not hold, in which case the agent knows whether it implies any formula; $\EquiKw$ means knowing whether a formula holds is same as knowing whether the formula does not hold.

\begin{proposition}\label{prop.sound} The proof system $\SPLKw$ is sound with respect to the class of all frames.\end{proposition}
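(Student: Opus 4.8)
The plan is to verify soundness schema by schema and rule by rule, establishing that each axiom is valid on the class of all frames and each inference rule preserves validity. Since $\TAUT$ and $\MP$ are standard and $\GENKw$ follows immediately from the observation already noted after the semantics (if $\vDash\phi$, then every accessible world satisfies $\phi$, so trivially $\Kw_i\phi$ holds everywhere), and $\REKw$ is an easy consequence of the semantic clause for $\Kw_i$ (replacing $\phi$ by a logically equivalent $\psi$ does not change which worlds satisfy it), the real content is checking the three genuinely ``knowing whether'' schemas $\KwCon$, $\KwDis$, and $\EquiKw$.

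For $\EquiKw$, I would argue directly from the semantics: at a world $s$, $\Kw_i\phi$ says $\phi$ has constant truth value across all $\to_i$-successors of $s$, and this is visibly equivalent to $\neg\phi$ having constant truth value across those successors, so $\Kw_i\phi\lra\Kw_i\neg\phi$ holds at every world of every model. For $\KwCon$, suppose $\M,s\vDash\Kw_i(\chi\to\phi)\land\Kw_i(\neg\chi\to\phi)$ and take any two successors $t_1,t_2$ of $s$; I would do a small case analysis on whether $\chi$ holds at $t_1$ and at $t_2$. If $t_1\vDash\chi$ then from $\Kw_i(\chi\to\phi)$ either $\chi\to\phi$ is true at all successors (so $\phi$ holds at $t_1$) or false at all successors (impossible, since it is true at $t_1$); similarly $\neg\chi$ at $t_1$ together with $\Kw_i(\neg\chi\to\phi)$ forces $\phi$ at $t_1$. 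So $\phi$ holds at every successor, hence in particular $t_1\vDash\phi\Leftrightarrow t_2\vDash\phi$, giving $\Kw_i\phi$. For $\KwDis$, assume $\M,s\vDash\Kw_i\phi$, so $\phi$ is constant on the successors of $s$; if $\phi$ is true at all of them, then $\neg\phi\to\chi$ is vacuously true at all of them, so $\Kw_i(\neg\phi\to\chi)$ holds; if $\phi$ is false at all of them, then $\phi\to\psi$ is vacuously true at all of them, so $\Kw_i(\phi\to\psi)$ holds; either way the disjunction on the right is satisfied.

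I would organize the write-up as: first dispatch $\TAUT$, $\MP$, $\GENKw$, $\REKw$ in a sentence or two each, then treat $\EquiKw$, $\KwCon$, $\KwDis$ in turn with the small case distinctions above. The main obstacle—if anything here counts as one—is simply keeping the case analysis for $\KwCon$ and $\KwDis$ tidy, in particular being careful with the vacuous cases (a world with no successors, or a successor set on which the relevant formula is uniformly true/false); these are exactly the situations that make knowing-whether non-normal, so they deserve an explicit word. No single step is deep, but the argument must be stated at the level of individual successor worlds rather than by appeal to normal-modal-logic reasoning, since $\Kw_i$ is not a normal modality.
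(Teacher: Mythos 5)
Your treatment of $\TAUT$, $\MP$, $\GENKw$, $\REKw$, $\EquiKw$, and $\KwDis$ is fine; indeed your direct two-case argument for $\KwDis$ is a clean alternative to the paper's contrapositive one (the paper assumes $\neg\Kw_i(\phi\to\psi)$ and $\neg\Kw_i(\neg\phi\to\chi)$ and extracts a $\phi$-successor and a $\neg\phi$-successor to refute $\Kw_i\phi$). The problem is your verification of $\KwCon$. You try to prove the stronger claim that $\phi$ is \emph{true} at every successor of $s$, but that claim is false: take $s$ with two successors both satisfying $\chi\land\neg\phi$. Then $\chi\to\phi$ is uniformly false and $\neg\chi\to\phi$ uniformly (vacuously) true on the successors, so both premises $\Kw_i(\chi\to\phi)$ and $\Kw_i(\neg\chi\to\phi)$ hold at $s$, yet $\phi$ fails at every successor (while $\Kw_i\phi$ of course still holds, because $\phi$ is uniformly false there). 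The specific broken step is the parenthetical ``or false at all successors (impossible, since it is true at $t_1$)'': you have not established that $\chi\to\phi$ is true at $t_1$ --- from $t_1\vDash\chi$ alone, $\chi\to\phi$ may perfectly well be false at $t_1$ --- so you are assuming what you need to prove. The same unjustified move occurs in the $\neg\chi$ branch with $\Kw_i(\neg\chi\to\phi)$.

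There are two ways to repair this. (a) Keep your case analysis but remember that $\Kw_i\phi$ only requires \emph{constancy} of $\phi$ on the successors, not its truth: in the cases where the relevant implication is uniformly false, every successor satisfies $\neg\phi$, which also yields $\Kw_i\phi$. (b) Argue as the paper does, by contradiction: if $\Kw_i\phi$ fails, there are successors $t_1\vDash\phi$ and $t_2\vDash\neg\phi$; since $\phi$ holds at $t_1$, both $\chi\to\phi$ and $\neg\chi\to\phi$ hold at $t_1$, so by the two constancy hypotheses they hold at $t_2$ as well, forcing $t_2\vDash\phi$, a contradiction. The idea you are missing is that the successor where $\phi$ is \emph{true} is the one from which both implications can be propagated; starting from an arbitrary successor and casing on its $\chi$-value does not work.
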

\begin{proof}
The soundness of $\SPLKw$ follows immediately from the validity of three crucial axioms. The other axioms and the derivation rules are obviously valid. We prove that:
\begin{enumerate}
\item $\KwCon$ is valid: $\vDash\Kw_i(\chi\to\phi)\land\Kw_i(\neg\chi\to\phi)\to\Kw_i\phi$
\item $\KwDis$ is valid: $\vDash\Kw_i\phi\to \Kw_i (\phi\to \psi )\lor \Kw_i(\neg \phi\to \chi)$
\item $\EquiKw$ is valid: $\vDash\Kw_i\phi\lra\Kw_i\neg\phi$
\end{enumerate}
Proof:
\begin{enumerate}
\item
Assume towards a contradiction that for some $(\M,s)$ such that $\M,s\vDash\Kw_i(\chi\to\phi)$, $\M,s\vDash\Kw_i(\neg\chi\to\phi)$ but $\M,s\vDash\neg\Kw_i\phi$, then there exist $t_1,t_2$ such that $s\to_it_1,s\to_it_2$ and $t_1\vDash\phi,t_2\vDash\neg\phi$. Clearly, with $t_1\vDash\phi$ we get $t_1\vDash\chi\to\phi$ and $t_1\vDash\neg\chi\to\phi$. Thus from the fact that $s\vDash\Kw_i(\chi\to\phi),s\to_it_1,s\to_it_2$ and $t_1\vDash\chi\to\phi$ we get $t_2\vDash\chi\to\phi$. Similarly, by using $t_1\vDash\neg\chi\to\phi$ we can get $t_2\vDash\neg\chi\to\phi$. Now we obtain $t_2\vDash\chi\to\phi$ and $t_2\vDash\neg\chi\to\phi$, therefore $t_2\vDash\phi$. Contradiction. 

\item Let $(\M,s)$ be an arbitrary model. Suppose via contraposition that $\M,s\vDash\neg\Kw_i(\phi\to \psi )$ and $\M,s\vDash\neg\Kw_i(\neg \phi\to \chi)$, we only need to show  $\M,s\vDash\neg\Kw_i\phi$. By supposition, there exist $t_1,t_2$ such that $s\to_it_1,s\to_it_2$ and $t_1\vDash\phi\to \psi, t_2\vDash\neg(\phi\to\psi)$ and, there exist $u_1,u_2$ such that $s\to_iu_1,s\to_iu_2$ and $u_1\vDash\neg \phi\to \chi,u_2\vDash\neg(\neg\phi\to\chi)$, respectively. From $t_2\vDash\neg(\phi\to\psi)$ and $u_2\vDash\neg(\neg\phi\to\chi)$ it follows $t_2\vDash\phi$ and $u_2\vDash\neg\phi$ respectively. So far we have shown $s\to_it_2,s\to_iu_2$ and $t_2\vDash\phi,u_2\vDash\neg\phi$, therefore we conclude that $\M,s\vDash\neg\Kw_i\phi$, as desired.

\item This is immediate from the semantics of $\Kw_i$.
\end{enumerate}
\end{proof}

\begin{proposition}\label{replacementofequivalent}
Consider the inference rule {\em Substitution of equivalents}:
\[ \RE \ \ \ \text{From } \phi\lra\psi, \text{ infer } \chi[\phi/p]\lra\chi[\psi/p] \]
Substitution of equivalents is admissible in $\SPLKw$.
\end{proposition}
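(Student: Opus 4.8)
The plan is to prove admissibility of $\RE$ by induction on the structure of $\chi$, showing that if $\vdash\phi\lra\psi$ then $\vdash\chi[\phi/p]\lra\chi[\psi/p]$. The base cases are immediate: if $\chi$ is $\top$ or a propositional variable $q\neq p$, then $\chi[\phi/p]=\chi=\chi[\psi/p]$ and $\vdash\chi\lra\chi$ is a tautology (via $\TAUT$ and $\MP$); if $\chi=p$, then $\chi[\phi/p]=\phi$ and $\chi[\psi/p]=\psi$, so the conclusion is just the hypothesis $\vdash\phi\lra\psi$. For the Boolean cases $\chi=\neg\chi'$ and $\chi=\chi_1\land\chi_2$, I would apply the induction hypothesis to the immediate subformulas and then use propositional reasoning inside $\SPLKw$ (i.e.\ $\TAUT$ plus $\MP$) to combine $\vdash\chi_1[\phi/p]\lra\chi_1[\psi/p]$ and $\vdash\chi_2[\phi/p]\lra\chi_2[\psi/p]$ into the required equivalence; these steps are entirely routine.

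The only case requiring the machinery of $\SPLKw$ is $\chi=\Kw_i\chi'$. Here $\chi[\phi/p]=\Kw_i(\chi'[\phi/p])$ and $\chi[\psi/p]=\Kw_i(\chi'[\psi/p])$. By the induction hypothesis we have $\vdash\chi'[\phi/p]\lra\chi'[\psi/p]$, and then applying the rule $\REKw$ directly gives $\vdash\Kw_i(\chi'[\phi/p])\lra\Kw_i(\chi'[\psi/p])$, which is exactly what we need. So in fact the modal case is also short, provided $\REKw$ is available as a primitive rule of the system — which it is, by Definition~\ref{axiomstable}.

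I expect the main (only) subtlety to be bookkeeping about substitution: one must be careful that $(\neg\chi')[\phi/p]=\neg(\chi'[\phi/p])$, $(\chi_1\land\chi_2)[\phi/p]=\chi_1[\phi/p]\land\chi_2[\phi/p]$, and $(\Kw_i\chi')[\phi/p]=\Kw_i(\chi'[\phi/p])$, so that the induction hypothesis applies to exactly the right formulas; this is immediate from the definition of uniform substitution but should be stated. It is also worth noting explicitly that the premise $\phi\lra\psi$ of $\RE$ is, by the definition of the abbreviation, the formula $(\phi\to\psi)\land(\psi\to\phi)$, and that derivability of biconditionals is closed under the propositional manipulations used above; once that is observed the induction goes through with no obstacle. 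One could alternatively remark that, since the argument never uses any special feature of $\Kw_i$ beyond the presence of the congruence rule $\REKw$, the same proof shows $\RE$ admissible in any extension of $\SPLKw$ that retains $\TAUT$, $\MP$, and $\REKw$.
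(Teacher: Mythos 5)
Your proof is correct and follows essentially the same route as the paper: induction on the structure of $\chi$, with the Boolean cases handled by propositional reasoning and the only non-trivial case, $\Kw_i\chi'$, discharged by a direct application of the primitive rule $\REKw$ to the induction hypothesis. The paper's own proof is just a terser version of this, stating only the $\Kw_i$ case; your additional bookkeeping remarks about substitution commuting with the connectives are harmless and the concluding observation about extensions of $\SPLKw$ matches a footnote in the paper.
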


\begin{proof}
By induction on the structure of $\chi$. The non-trivial case is $\Kw_i\chi$. Suppose $\vdash\phi\lra\psi$. By inductive hypothesis we have $\vdash\chi[\phi/p]\lra\chi[\psi/p]$. Then, using $\REKw$, we get $\vdash\Kw_i(\chi[\phi/p])\lra\Kw_i(\chi[\psi/p])$, i.e. $\vdash \Kw_i\chi[\phi/p]\lra \Kw_i\chi[\psi/p]$.
\end{proof}

The inference rule $\REKw$ in the system $\SPLKw$ is crucial. Consider again the schema \[ \texttt{K} \ \ \ \Kw_i(\phi\to\psi)\to(\Kw_i\phi\to\Kw_i\psi) \] We have already shown in Section \ref{sec.logic} that $\texttt{K}$ is invalid. This axiom is typically used to prove $\RE$, but is lacking in \SPLKw. Without $\REKw$ (see the proof above) $\RE$ is not admissable in \SPLKw.

\medskip

We will now first derive a \SPLKw\ theorem (Proposition \ref{Mixi2}) that will play an important part in the completeness proof. To structure the derivation we employ two lemmas deriving \SPLKw\ theorems.

\begin{lemma}\label{Mix}
$\vdash\left( \ \Kw_i\chi \land \Kw_i(\neg\chi\to\phi) \land\neg\Kw_i\phi \land\Kw_i(\chi\to\psi) \ \right)\to \Kw_i\psi$
\end{lemma}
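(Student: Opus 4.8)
The plan is to derive $\vdash\left(\Kw_i\chi\land\Kw_i(\neg\chi\to\phi)\land\neg\Kw_i\phi\land\Kw_i(\chi\to\psi)\right)\to\Kw_i\psi$ purely syntactically, leaning on $\KwDis$ to extract a useful disjunct from $\Kw_i\chi$, and on $\KwCon$ together with $\REKw$/$\RE$ to reassemble $\Kw_i\psi$. First I would instantiate $\KwDis$ with $\chi$ in the role of $\phi$, choosing the second schematic formula to be $\phi$ itself, giving $\Kw_i\chi\to\Kw_i(\chi\to\psi)\lor\Kw_i(\neg\chi\to\phi)$ — wait, this needs care: the instance I actually want is $\Kw_i\chi\to\Kw_i(\chi\to\psi)\lor\Kw_i(\neg\chi\to\chi)$, or better, an instance whose second disjunct can be combined with the hypotheses to contradict $\neg\Kw_i\phi$. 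The cleanest route is: instantiate $\KwDis$ as $\Kw_i\chi\to\Kw_i(\chi\to\psi)\lor\Kw_i(\neg\chi\to\neg\phi)$ is not directly an instance, so instead I would use $\KwDis$ with the first disjunct-filler being $\psi$ and the second being chosen so that $\Kw_i(\neg\chi\to\chi_0)$ together with $\Kw_i\chi$ and the semantics-mimicking axioms yields $\Kw_i\phi$, contradicting the third conjunct.

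More concretely, the argument I expect to work: from $\Kw_i\chi$ and $\Kw_i(\neg\chi\to\phi)$, apply $\KwCon$-style reasoning to see that knowing whether $\chi$ plus knowing whether $\neg\chi\to\phi$ lets us replace things; using $\RE$ (Proposition~\ref{replacementofequivalent}) we have $\vdash\Kw_i(\neg\chi\to\phi)\lra\Kw_i(\chi\lor\phi)$ since $\neg\chi\to\phi$ is a tautological equivalent of $\chi\lor\phi$, and similarly $\vdash\Kw_i(\chi\to\psi)\lra\Kw_i(\neg\chi\lor\psi)$. So the hypotheses become $\Kw_i\chi\land\Kw_i(\chi\lor\phi)\land\neg\Kw_i\phi\land\Kw_i(\neg\chi\lor\psi)$, and the goal $\Kw_i\psi$. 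Now the key move: instantiate $\KwDis$ at $\chi$ to get $\Kw_i\chi\to\Kw_i(\chi\to\psi)\lor\Kw_i(\neg\chi\to\chi)$; since $\neg\chi\to\chi$ is equivalent to $\chi$, the second disjunct is $\Kw_i\chi$, which is useless — so instead instantiate $\KwDis$ at $\chi$ with filler $\psi$ and filler $\neg\phi$: $\Kw_i\chi\to\Kw_i(\chi\to\psi)\lor\Kw_i(\neg\chi\to\neg\phi)$. Via $\RE$, $\Kw_i(\neg\chi\to\neg\phi)\lra\Kw_i(\chi\lor\neg\phi)\lra\Kw_i(\neg\chi\land\phi)$-complement reasoning; combined with $\Kw_i\chi$ and $\Kw_i(\chi\lor\phi)$ one gets $\Kw_i\phi$ by a $\KwCon$ instance (case on $\chi$: if $\chi$ then $\phi$'s value is pinned among successors because... ), contradicting $\neg\Kw_i\phi$. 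Hence the first disjunct $\Kw_i(\chi\to\psi)$ holds, and then from $\Kw_i\chi$ and $\Kw_i(\chi\to\psi)$ — using a $\KwCon$ instance with $\psi$ — together with the already-present $\Kw_i(\neg\chi\to\psi)$ (obtained from $\Kw_i(\chi\to\psi)$? no: obtained because $\Kw_i\chi$ lets us case-split) we conclude $\Kw_i\psi$ by $\KwCon$.

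The honest skeleton is therefore: (i) rewrite all four conjuncts and the goal into $\chi$-cased form using $\RE$ and $\TAUT$; (ii) feed $\Kw_i\chi$ into the right instance of $\KwDis$ to get a disjunction $\Kw_i(\chi\to\psi)\lor D$; (iii) show the disjunct $D$ is impossible by combining it with $\Kw_i\chi$, $\Kw_i(\neg\chi\to\phi)$ via a $\KwCon$ instance to derive $\Kw_i\phi$, contradicting $\neg\Kw_i\phi$ — this uses propositional reasoning inside the scope of $\Kw_i$ justified by $\RE$; (iv) from the surviving disjunct $\Kw_i(\chi\to\psi)$, together with $\Kw_i(\neg\chi\to\psi)$ — itself obtained from $\Kw_i\chi$ plus the disjunct via $\KwDis$/$\KwCon$ bookkeeping, or directly because $\Kw_i\chi$ makes $\neg\chi\to\psi$ equivalent among successors to $\top$ — apply $\KwCon$ to conclude $\Kw_i\psi$.

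I expect the main obstacle to be step (iii)–(iv): pinning down exactly which tautological rearrangements are needed so that a single application of $\KwCon$ (with the correct choice of the ``$\chi$'' parameter in that axiom schema — which need not be the same $\chi$ as in the lemma) fires, and making sure the disjunct $D$ produced by $\KwDis$ is genuinely one we can refute rather than a vacuous one like $\Kw_i\chi$. Getting the $\KwDis$ instance right — the schema has three free slots, and only a specific instantiation yields a refutable second disjunct — is the delicate part; once that is fixed, everything else is $\TAUT$, $\MP$, $\RE$, and two bookkeeping applications of $\KwCon$.
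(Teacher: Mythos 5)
Your toolkit is the right one --- apply $\KwDis$ to $\Kw_i\chi$, kill one disjunct using $\neg\Kw_i\phi$, and finish with $\KwCon$ --- but the instantiation you settle on is the wrong way round, and as written the skeleton does not close. You aim $\KwDis$ so that the \emph{surviving} disjunct is $\Kw_i(\chi\to\psi)$; that formula is already the fourth hypothesis, so keeping it buys nothing, and you are then left needing $\Kw_i(\neg\chi\to\psi)$, for which you offer only ``$\Kw_i\chi$ makes $\neg\chi\to\psi$ equivalent among successors to $\top$.'' That justification is circular on one horn: if all $i$-successors satisfy $\neg\chi$ (which $\Kw_i\chi$ permits), then $\neg\chi\to\psi$ agrees with $\psi$ on all successors, so $\Kw_i(\neg\chi\to\psi)$ is exactly the goal $\Kw_i\psi$. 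Moreover the disjunct $D$ you plan to refute is the \emph{second} disjunct of a $\KwDis$ instance at $\chi$, hence of the form $\Kw_i(\neg\chi\to\gamma)$; paired with the hypothesis $\Kw_i(\neg\chi\to\phi)$ this has the wrong shape for $\KwCon$ (the same antecedent $\neg\chi$ twice rather than $\alpha$ and $\neg\alpha$), so the promised derivation of $\Kw_i\phi$ and the contradiction with $\neg\Kw_i\phi$ never materialise.

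The fix is the single instantiation you say you could not pin down: take $\KwDis$ with $\phi$ in the first slot and $\psi$ in the second, i.e.\ $\Kw_i\chi\to\Kw_i(\chi\to\phi)\lor\Kw_i(\neg\chi\to\psi)$. Now the disjunct to refute is the \emph{first} one: the $\KwCon$ instance $\Kw_i(\chi\to\phi)\land\Kw_i(\neg\chi\to\phi)\to\Kw_i\phi$, contraposed against the hypotheses $\Kw_i(\neg\chi\to\phi)$ and $\neg\Kw_i\phi$, yields $\neg\Kw_i(\chi\to\phi)$. Hence $\Kw_i(\neg\chi\to\psi)$ survives, and one further $\KwCon$ instance, $\Kw_i(\chi\to\psi)\land\Kw_i(\neg\chi\to\psi)\to\Kw_i\psi$, combined with the fourth hypothesis, gives $\Kw_i\psi$. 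This is exactly the paper's five-line derivation; it needs only $\TAUT$, $\MP$, one $\KwDis$ and two $\KwCon$ instances --- no $\RE$ or $\REKw$ rewriting into ``$\chi$-cased form'' at all.
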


\begin{proof}
$$
\begin{array}{lll}
(i)&\Kw_i(\neg\chi\to\phi)\land\neg\Kw_i\phi\to\neg\Kw_i(\chi\to\phi)&\KwCon,
\TAUT\\
(ii)&\Kw_i\chi\to\Kw_i(\chi\to\phi)\vee\Kw_i(\neg\chi\to\psi)&\KwDis\\
(iii)&\Kw_i(\chi\to\psi)\land\Kw_i(\neg\chi\to\psi)\to\Kw_i\psi&\KwCon\\
(iv)&(\Kw_i\chi \land \Kw_i(\neg\chi\to\phi) \land\neg\Kw_i\phi)\to \Kw_i(\neg\chi\to\psi) &\TAUT,(i),(ii)\\
(v)&(\Kw_i\chi \land \Kw_i(\neg\chi\to\phi) \land\neg\Kw_i\phi \land\Kw_i(\chi\to\psi))\to \Kw_i\psi&\TAUT,(iii),(iv)
\end{array}
$$
\end{proof}

\begin{lemma}\label{Mixi}
\[ \vdash( \ \Kw_i(\phi\land\neg\psi\to\chi)\land \Kw_i\psi\land \Kw_i(\psi\to\delta)\land \neg\Kw_i\delta \ )\to \Kw_i(\phi\to\chi) \]
\end{lemma}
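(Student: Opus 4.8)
The plan is to obtain this as an instance of Lemma~\ref{Mix} after rewriting the arguments of the $\Kw_i$-operators under provable equivalences. Recall that Lemma~\ref{Mix} gives, for all formulas $A,B,C$,
\[ \vdash\left( \Kw_i A \land \Kw_i(\neg A\to B) \land\neg\Kw_i B \land\Kw_i(A\to C) \right)\to \Kw_i C, \]
and this holds schematically, since its derivation consists only of instances of axiom schemas and applications of the rules, all of which are closed under uniform substitution. I would instantiate it with $A:=\neg\psi$, $B:=\delta$, $C:=\phi\to\chi$, which gives
\[ \vdash\left( \Kw_i\neg\psi \land \Kw_i(\neg\neg\psi\to\delta) \land\neg\Kw_i\delta \land\Kw_i(\neg\psi\to(\phi\to\chi)) \right)\to \Kw_i(\phi\to\chi). \]

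Next I would bring the four conjuncts of the antecedent into the shape demanded by the statement. First, $\EquiKw$ instantiated with $\psi$ gives $\vdash\Kw_i\neg\psi\lra\Kw_i\psi$. Second, the propositional biconditionals $(\neg\neg\psi\to\delta)\lra(\psi\to\delta)$ and $(\neg\psi\to(\phi\to\chi))\lra(\phi\land\neg\psi\to\chi)$ are both tautologies, so by \TAUT\ and \REKw\ we get $\vdash\Kw_i(\neg\neg\psi\to\delta)\lra\Kw_i(\psi\to\delta)$ and $\vdash\Kw_i(\neg\psi\to(\phi\to\chi))\lra\Kw_i(\phi\land\neg\psi\to\chi)$. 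Threading these three equivalences through the displayed implication — by plain propositional reasoning with \MP, or equivalently by \RE\ (Prop.~\ref{replacementofequivalent}) — replaces the antecedent by $\Kw_i(\phi\land\neg\psi\to\chi)\land\Kw_i\psi\land\Kw_i(\psi\to\delta)\land\neg\Kw_i\delta$, which is exactly the asserted theorem (conjunct order being immaterial, handled by \TAUT).

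I do not expect any genuine obstacle. The only points that need checking are the two propositional equivalences above — each a two-line truth-table check, both sides of the second being equivalent to $\neg\phi\lor\psi\lor\chi$ — and the schematic use of Lemma~\ref{Mix}, which is routine. As an alternative one could simply re-run the derivation of Lemma~\ref{Mix} with the substituted formulas and interleave the equivalence-rewriting steps, but invoking Lemma~\ref{Mix} as a black box keeps the argument short.
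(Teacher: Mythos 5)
Your proof is correct and takes essentially the same route as the paper's: both obtain the lemma as an instance of Lemma~\ref{Mix} and then rewrite the $\Kw_i$-arguments under provable equivalences via \TAUT, \REKw\ and \RE. The only (immaterial) difference is the choice of instantiation — the paper puts $\psi$ in the first slot so that the conclusion of Lemma~\ref{Mix} is $\Kw_i\delta$ and then contraposes propositionally, whereas you put $\neg\psi$ there so the conclusion is $\Kw_i(\phi\to\chi)$ directly and instead clean up with \EquiKw\ and a double-negation rewrite.
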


\begin{proof}
$$
\begin{array}{lll}
(i)&(\Kw_i\psi\land\Kw_i(\neg\psi\to(\phi\to\chi))\land\neg\Kw_i(\phi\to\chi)\land\Kw_i(\psi\to\delta))\to\Kw_i\delta& \text{Lemma}\ \ref{Mix}\\
(ii)&(\Kw_i(\neg\psi\to(\phi\to\chi))\land\Kw_i\psi\land\Kw_i(\psi\to\delta)\land
\neg\Kw_i\delta)\to\Kw_i(\phi\to\chi)&\TAUT,(i)\\
(iii)&(\phi\land\neg\psi\to\chi)\lra(\neg\psi\to(\phi\to\chi))&\TAUT\\
(iv)&\Kw_i(\phi\land\neg\psi\to\chi)\lra\Kw_i(\neg\psi\to(\phi\to\chi))&\REKw,(iii)\\
(v)&(\Kw_i(\phi\land\neg\psi\to\chi)\land \Kw_i\psi\land \Kw_i(\psi\to\delta)\land \neg\Kw_i\delta)\to \Kw_i(\phi\to\chi)&\RE,(ii),(iv)
\end{array}
$$
\end{proof}

\begin{proposition} \label{Mixi2}
For all $k\geq 1$:
\[\vdash\left(\bigwedge^k_{j=1}\Kw_i \chi_j\land \Kw_i ( \bigwedge^k_{j=1}\neg \chi_j\to \phi)\land \neg \Kw_i\phi \land \bigwedge^k_{j=1}\Kw_i(\chi_j\to \psi_j)\right)\to
\bigvee^k_{j=1}\Kw_i\psi_j \]
\end{proposition}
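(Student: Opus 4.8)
The plan is to prove the statement by induction on $k$, using Lemma~\ref{Mix} and Lemma~\ref{Mixi} as the workhorses. The base case $k=1$ reads
\[
\vdash\left(\Kw_i\chi_1\land\Kw_i(\neg\chi_1\to\phi)\land\neg\Kw_i\phi\land\Kw_i(\chi_1\to\psi_1)\right)\to\Kw_i\psi_1,
\]
which is exactly Lemma~\ref{Mix} (with $\chi:=\chi_1$, $\psi:=\psi_1$). So the base case is immediate.

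For the inductive step, assume the statement holds for $k$, and consider the antecedent for $k+1$: $\bigwedge^{k+1}_{j=1}\Kw_i\chi_j$, $\Kw_i(\bigwedge^{k+1}_{j=1}\neg\chi_j\to\phi)$, $\neg\Kw_i\phi$, and $\bigwedge^{k+1}_{j=1}\Kw_i(\chi_j\to\psi_j)$. The idea is to "absorb" the last conjunct $\chi_{k+1}$ so as to land in an instance of the $k$-case. Concretely, I would first rewrite $\bigwedge^{k+1}_{j=1}\neg\chi_j\to\phi$ as $(\bigwedge^{k}_{j=1}\neg\chi_j)\land\neg\chi_{k+1}\to\phi$, which is a tautological equivalence, and then apply Lemma~\ref{Mixi} with the substitution $\phi:=\bigwedge^k_{j=1}\neg\chi_j$, $\chi:=\phi$ (the original $\phi$), $\psi:=\chi_{k+1}$, $\delta:=\psi_{k+1}$. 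Using $\Kw_i(\bigwedge^{k+1}_{j=1}\neg\chi_j\to\phi)$ rewritten via $\REKw$, $\Kw_i\chi_{k+1}$, $\Kw_i(\chi_{k+1}\to\psi_{k+1})$, and $\neg\Kw_i\psi_{k+1}$, Lemma~\ref{Mixi} yields $\Kw_i(\bigwedge^k_{j=1}\neg\chi_j\to\phi)$. Now one applies the induction hypothesis to the conjuncts $\bigwedge^k_{j=1}\Kw_i\chi_j$, $\Kw_i(\bigwedge^k_{j=1}\neg\chi_j\to\phi)$, $\neg\Kw_i\phi$, $\bigwedge^k_{j=1}\Kw_i(\chi_j\to\psi_j)$ to obtain $\bigvee^k_{j=1}\Kw_i\psi_j$, hence a fortiori $\bigvee^{k+1}_{j=1}\Kw_i\psi_j$.

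The one subtlety is that Lemma~\ref{Mixi} requires $\neg\Kw_i\psi_{k+1}$ among its antecedents, whereas we are only given $\neg\Kw_i\phi$. This is handled by a case split: if $\Kw_i\psi_{k+1}$ holds, then the disjunct $\Kw_i\psi_{k+1}$ of the conclusion $\bigvee^{k+1}_{j=1}\Kw_i\psi_j$ is immediately available and we are done; otherwise $\neg\Kw_i\psi_{k+1}$ holds and the argument above goes through. Packaging this case split into a propositional-logic step (after deriving both conditional implications) via $\TAUT$ and $\MP$ completes the step. I expect the case split on $\Kw_i\psi_{k+1}$ to be the only non-routine point; the rest is assembling tautological consequences and applications of $\REKw$, $\RE$ (Proposition~\ref{replacementofequivalent}), and the two lemmas, exactly in the style of the derivations of Lemmas~\ref{Mix} and~\ref{Mixi}.
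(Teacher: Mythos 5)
Your proposal is correct and follows essentially the same route as the paper: induction on $k$ with Lemma~\ref{Mix} as the base case, Lemma~\ref{Mixi} (under exactly the substitution you give, mediated by $\REKw$) to reduce the $(k{+}1)$-antecedent to the $k$-antecedent, and a propositional case split on $\Kw_i\psi_{k+1}$ to discharge the extra hypothesis $\neg\Kw_i\psi_{k+1}$ — the same case split the paper performs in passing from its step 5 to step 6. No gaps.
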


\begin{proof}
The proof is by induction on $k$. The base step $k=1$ is clear from Lemma \ref{Mix}. For the inductive step, assume by inductive hypothesis (IH) that the proposition holds for $k=m$. We now show that:
$$\vdash\left(\bigwedge^{m+1}_{j=1}\Kw_i \chi_j\land \Kw_i (\bigwedge^{m+1}_{j=1}\neg \chi_j\to \phi)\land \neg \Kw_i\phi \land \bigwedge^{m+1}_{j=1}\Kw_i(\chi_j\to \psi_j)\right)\to
\bigvee^{m+1}_{j=1}\Kw_i\psi_j$$
The proof is as follows.
$$
\begin{array}{lll}
1&(\bigwedge_{j=1}^m\Kw_i \chi_j\land \Kw_i (\bigwedge_{j=1}^m\neg \chi_j\to \phi)\land \neg \Kw_i\phi \\
&\ \ \ \ \land \bigwedge_{j=1}^m\Kw_i(\chi_j\to \psi_j))\to
\bigvee_{j=1}^m\Kw_i\psi_j&\text{IH}\\
2&(\Kw_i(\bigwedge_{j=1}^m\neg\chi_j\land\neg\chi_{m+1}\to\phi)\land \Kw_i\chi_{m+1}\land\Kw_i(\chi_{m+1}\to\psi_{m+1})\\
&\ \ \ \ \land \neg\Kw_i\psi_{m+1})\to \Kw_i(\bigwedge_{j=1}^m\neg\chi_j\to\phi)&\text{Lemma }\ref{Mixi} \\
3&(\Kw_i(\bigwedge_{j=1}^{m+1}\neg\chi_j\to\phi)\land \Kw_i\chi_{m+1}\land \Kw_i(\chi_{m+1}\to\psi_{m+1})\\
&\ \ \ \ \land \neg\Kw_i\psi_{m+1})\to \Kw_i(\bigwedge_{j=1}^m\neg\chi_j\to\phi)&\REKw, 2\\
4& (\Kw_i (\bigwedge_{j=1}^{m+1}\neg \chi_j\to \phi)\land \bigwedge_{j=1}^{m+1}\Kw_i \chi_j\land\bigwedge_{j=1}^{m+1}\Kw_i(\chi_j\to \psi_j)\\
&\ \ \ \ \land \neg\Kw_i\psi_{m+1}\land\neg \Kw_i\phi)\to(\bigwedge_{j=1}^m\Kw_i \chi_j\land \Kw_i (\bigwedge_{j=1}^m\neg \chi_j\to \phi)\\
&\ \ \ \ \land \neg \Kw_i\phi \land \bigwedge_{j=1}^m\Kw_i(\chi_j\to \psi_j))&3\\
5& (\Kw_i (\bigwedge_{j=1}^{m+1}\neg \chi_j\to \phi)\land \bigwedge_{j=1}^{m+1}\Kw_i \chi_j\\
&\ \ \ \ \land \bigwedge_{j=1}^{m+1}\Kw_i(\chi_j\to \psi_j)\land \neg\Kw_i\psi_{m+1}\land\neg \Kw_i\phi)\to \bigvee_{j=1}^{m}\Kw_i\psi_j&1,4\\
6&(\bigwedge_{j=1}^{m+1}\Kw_i \chi_j\land \Kw_i (\bigwedge_{j=1}^{m+1}\neg \chi_j\to \phi)\land \neg \Kw_i\phi \\
&\ \ \ \ \land \bigwedge_{j=1}^{m+1}\Kw_i(\chi_j\to \psi_j))\to
\bigvee_{j=1}^{m+1}\Kw_i\psi_j&5\\
\end{array}
$$
\end{proof}

\subsection{Completeness}

We proceed with the completeness of the proof system \SPLKw. The completeness of the logic is shown via a canonical model construction.
\begin{definition}[Canonical model]\label{cononicalmodel}
The canonical model $\M^c$ of \SPLKw~ is the tuple $\lr{S^c,\toallc,V^c}$, where:
\begin{itemize}
\item $S^c=\{s\mid s\text{ is a maximal consistent set of }\SPLKw\}$.
\item $s\to^c_it\text{ iff }$
\begin{enumerate}
\item there exists $\chi$ such that $\neg \Kw_i\chi\in s$  and
\item for all $\phi$ and $\psi$: $\Kw_i\phi\wedge\Kw_i(\phi\to\psi)\land \neg \Kw_i\psi\in s \text{ implies }\neg \phi\in t$.
\end{enumerate}
\item $V^c(p)=\{s\in S^c\mid p\in s\}$.
\end{itemize}
\end{definition}
We observe that every consistent set of $\SPLKw$ can be extended to a maximal consistent set of $\SPLKw$ (Lindenbaum Lemma) in the standard way. The binary relations between worlds in the canonical model are special. The definition is inspired by the canonical relation where $s\to^c_it$ iff for all $\phi$: $\K_i\phi\in s$ implies $\phi\in t$, and the observation of Proposition \ref{prop.iffK}, the `almost definability' of knowledge. We also use the contrapositive this condition: \begin{quote} {\em For every $i\in\Ag$, $s\to^c_it$ iff (1.) there exists $\chi$ such that $\neg \Kw_i\chi\in s$ and (2.) for all $\phi$ and $\psi$: if $\phi\in t$ then at least one of $\Kw_i\phi$, $\Kw_i(\phi\to\psi)$ and $\neg \Kw_i\psi$ is not in $s$.} \end{quote}

\begin{lemma}[Truth Lemma]\label{truthlem}
For any \PLKw~ formula $\phi$, $\M^c,s\vDash\phi$ iff $\phi\in s$.
\end{lemma}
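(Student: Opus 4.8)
The plan is to prove the Truth Lemma by induction on the structure of $\phi$. The Boolean cases ($\top$, $p$, $\neg\phi$, $\phi\land\psi$) are entirely routine, using that each $s\in S^c$ is a maximal consistent set. The only substantial case is $\phi=\Kw_i\psi$, and here I would argue the two directions separately, leaning heavily on the shape of the canonical relation $\to^c_i$ and on Proposition \ref{Mixi2}.

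\medskip

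\noindent\textbf{The easy direction ($\Leftarrow$):} Suppose $\Kw_i\psi\in s$; I want $\M^c,s\vDash\Kw_i\psi$, i.e.\ $\psi$ has constant truth value across all $\to^c_i$-successors of $s$. If $s$ has at most one $\to^c_i$-successor this is immediate (as in Prop.~\ref{prop.lessexpressiveone}), so assume $s\to^c_it_1$ and $s\to^c_it_2$. Suppose toward contradiction that $\psi\in t_1$ but $\psi\notin t_2$, so $\neg\psi\in t_2$. By clause (2) of the definition of $\to^c_i$ applied to $t_1$ with the witnessing instance $\phi:=\psi$, $\psi:=\psi$ (any instance works): $\Kw_i\psi\land\Kw_i(\psi\to\psi)\land\neg\Kw_i\psi\in s$ would force $\neg\psi\in t_1$; but $\neg\Kw_i\psi\notin s$ since $\Kw_i\psi\in s$, so this instance is vacuous. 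The right instance to use is: since $\neg\psi\in t_2$ and $\Kw_i\psi\in s$, I use clause (2) for $t_2$ with $\phi:=\neg\psi$. By $\EquiKw$, $\Kw_i\neg\psi\in s$; and I need some $\delta$ with $\Kw_i(\neg\psi\to\delta)\in s$ and $\neg\Kw_i\delta\in s$. Here is where I invoke clause (1): there is $\chi$ with $\neg\Kw_i\chi\in s$. If $\Kw_i(\neg\psi\to\chi)\in s$, take $\delta:=\chi$ and we are done, contradicting $\neg\psi\in t_2$. Otherwise $\neg\Kw_i(\neg\psi\to\chi)\in s$; but $\Kw_i\psi\in s$ means $\Kw_i\neg\psi\in s$, and then $\KwDis$ (instantiated suitably) gives $\Kw_i(\neg\psi\to\chi)\lor\Kw_i(\psi\to\text{something})\in s$ — I would massage this, together with $\neg\Kw_i\chi\in s$, to pin down a usable $\delta$. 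The cleaner route is probably to reuse the combinatorics already packaged in Proposition \ref{prop.iffK}: $\Kw_i\psi\in s$ together with $\neg\Kw_i\chi\in s$ yields, via that proposition read in the canonical model, that for every $\to^c_i$-successor $t$ either $\psi\in t$ uniformly or $\neg\psi\in t$ uniformly. I will organize this direction around that observation.

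\medskip

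\noindent\textbf{The hard direction ($\Rightarrow$):} Suppose $\Kw_i\psi\notin s$; I must build two $\to^c_i$-successors of $s$ disagreeing on $\psi$, which (by the outer induction hypothesis) means two MCSs $t_1,t_2$ each satisfying clause (2) with respect to $s$, with $\psi\in t_1$ and $\neg\psi\in t_2$. First note clause (1) is satisfied: $\neg\Kw_i\psi\in s$ itself provides the witness $\chi:=\psi$. To get $t_1$, I consider the set
\[
\Gamma_1 \;=\; \{\psi\}\cup\{\neg\phi \mid \text{there exists }\rho\text{ with } \Kw_i\phi\land\Kw_i(\phi\to\rho)\land\neg\Kw_i\rho\in s\},
\]
and I must show $\Gamma_1$ is consistent; then Lindenbaum gives $t_1\supseteq\Gamma_1$, and by construction $s\to^c_it_1$ and $\psi\in t_1$. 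Symmetrically, for $t_2$ I take $\Gamma_2=\{\neg\psi\}\cup\{\neg\phi\mid\dots\}$. The consistency of $\Gamma_1$ (and $\Gamma_2$) is the crux. If $\Gamma_1$ were inconsistent, some finite subset would be: $\vdash \psi\to\bigvee_{j=1}^k\phi_j$ where for each $j$ there is $\rho_j$ with $\Kw_i\phi_j\land\Kw_i(\phi_j\to\rho_j)\land\neg\Kw_i\rho_j\in s$. Applying $\REKw$/$\RE$ and the $\Kw_i$ rules to propagate $\vdash\psi\to\bigvee\phi_j$ appropriately, I would derive $\bigwedge_j\Kw_i\phi_j\in s$, $\bigwedge_j\Kw_i(\phi_j\to\rho_j)\in s$, $\bigwedge_j\neg\Kw_i\rho_j\in s$, and I would need the $\phi_j$'s to behave like the $\chi_j$'s in Proposition \ref{Mixi2} — that is, I would set up the instance of \ref{Mixi2} whose hypothesis is forced to be in $s$ and whose conclusion $\bigvee_j\Kw_i\psi_j$ contradicts $\bigwedge_j\neg\Kw_i\rho_j\in s$ (taking $\psi_j:=\rho_j$, and choosing $\phi$ in \ref{Mixi2} to be an appropriate formula so that $\neg\Kw_i\phi\in s$ holds — $\psi$ itself, since $\neg\Kw_i\psi\in s$, after noting $\vdash\psi\to\bigvee\phi_j$ lets us replace $\bigwedge\neg\chi_j\to\phi$ suitably). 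So Proposition \ref{Mixi2} is precisely the engine that makes $\Gamma_1$ consistent. The $\Gamma_2$ case is identical after swapping $\psi$ for $\neg\psi$ and using $\EquiKw$.

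\medskip

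\noindent\textbf{Main obstacle.} The real work is the consistency argument in the ($\Rightarrow$) direction: correctly matching the finitely many "bad" conjuncts forced into $s$ against the exact form of the hypothesis of Proposition \ref{Mixi2}, in particular getting the $\bigwedge_{j=1}^k\neg\chi_j\to\phi$ conjunct to appear in $s$ (this is where $\vdash\psi\to\bigvee_j\phi_j$, i.e.\ $\vdash\bigwedge_j\neg\phi_j\to\neg\psi$, and the choice $\phi:=\neg\psi$ or $\phi:=\psi$ must be lined up, using $\REKw$ to rewrite under $\Kw_i$). Everything else — the Boolean cases, clause (1) of the canonical relation, and the $(\Leftarrow)$ direction once Proposition \ref{prop.iffK} is invoked — is bookkeeping.
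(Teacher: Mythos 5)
Your $(\Rightarrow)$ direction is essentially the paper's proof: the sets $\Gamma_1,\Gamma_2$ are exactly the paper's consistency targets, clause (1) is witnessed by $\chi:=\psi$ since $\neg\Kw_i\psi\in s$, and Proposition \ref{Mixi2} (together with $\GENKw$) is indeed the engine that refutes a finite inconsistency $\vdash\bigwedge_j\neg\phi_j\to\neg\psi$. That part is sound in outline; one small correction is that it is $\Gamma_1$ (the set containing $\psi$) that needs $\EquiKw$, to turn $\neg\Kw_i\psi\in s$ into $\neg\Kw_i\neg\psi\in s$ so that Proposition \ref{Mixi2} applies with its ``$\phi$'' slot filled by $\neg\psi$; the $\Gamma_2$ case matches \ref{Mixi2} directly.

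The gap is in the direction you call easy. Your closing move --- ``reuse the combinatorics already packaged in Proposition \ref{prop.iffK}, read in the canonical model'' --- is not a legitimate step: Proposition \ref{prop.iffK} is a \emph{semantic} statement about $\vDash$, and applying it inside $\M^c$ to conclude that all $\to^c_i$-successors of $s$ agree on $\psi$ presupposes knowing which $\Kw_i$-formulas are true at $s$, which is the very Truth Lemma being proved; the paper invokes Prop.~\ref{prop.iffK} only as motivation for the \emph{definition} of $\to^c_i$, not as a tool usable in the canonical model. The syntactic argument you began and then abandoned is what is actually required, and its key feature is that it combines information from \emph{both} successors rather than working inside $t_2$ alone: from $\psi\in t_1$ and $s\to^c_i t_1$, the contrapositive of clause (2) with first conjunct $\Kw_i\psi\in s$ and third conjunct $\neg\Kw_i\chi_1\in s$ (the clause-(1) witness for $t_1$) forces the middle conjunct to fail, i.e.\ $\neg\Kw_i(\psi\to\chi_1)\in s$; symmetrically, from $\neg\psi\in t_2$ and $\Kw_i\neg\psi\in s$ (by $\EquiKw$) one gets $\neg\Kw_i(\neg\psi\to\chi_2)\in s$; then the instance $\Kw_i\psi\to\Kw_i(\psi\to\chi_1)\vee\Kw_i(\neg\psi\to\chi_2)$ of $\KwDis$ yields $\neg\Kw_i\psi\in s$, contradicting $\Kw_i\psi\in s$. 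Without this step the $(\Leftarrow)$ direction is not established.
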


\begin{proof}
By induction on $\phi$. The only non-trivial case is $\Kw_i\phi$.

``$\Leftarrow$'': Assume towards contradiction that $\Kw_i\phi\in s$ but $\M,s\vDash\neg \Kw_i\phi$, namely there are $t_1$ and $t_2$ such that $s\to^c_i t_1$ and $s\to^c_i t_2$ and $\M^c,t_1\vDash\phi$ and $\M^c,t_2\vDash\neg \phi$. From $\M^c,t_1\vDash\phi$ and $\M^c,t_2\vDash\neg \phi$, and the induction hypothesis, we infer that  $\phi\in t_1$ and $\neg \phi\in t_2$, respectively. From $s\to^c_i t_1$ and (1.)  we infer that there is a $\chi_1$ such that $\neg \Kw_i\chi_1\in s$. From that, the assumption $\Kw_i\phi\in s$ and (2.) follows that $\Kw_i(\phi\to\chi_1)\not\in s$, i.e., $\neg \Kw_i(\phi\to\chi_1)\in s$. Similarly, from $s\to^c_i t_2$ we derive that there is a $\chi_2$ such that $\neg \Kw_i(\neg\phi\to\chi_2)\in s$. From $\neg \Kw_i(\phi\to\chi_1),\neg \Kw_i(\neg\phi\to\chi_2) \in s$ and Axiom \KwDis\ we now have $\neg \Kw_i\phi\in s$. Contradiction.

``$\Rightarrow$'': Assume that $\Kw_i\phi\notin s$. To show that $s\nvDash\Kw_i\phi$, we need to construct two points $t_1,t_2\in S^c$ such that $s\to^c_i t_1, s\to^c_i t_2$ and $\phi\in t_1,\neg\phi\in t_2$.
First we have to show:
\begin{enumerate}
\item \label{oneone}$\{\neg \chi\mid\Kw_i\chi\wedge\Kw_i(\chi\to\psi)\land \neg \Kw_i \psi\in s \text{ for some }\psi\}\cup\{\neg\phi\}$  is consistent.
\item \label{twotwo}$\{\neg \chi\mid\Kw_i\chi\wedge\Kw_i(\chi\to\psi')\land \neg \Kw_i \psi'\in s \text{ for some }\psi'\}\cup\{\phi\}$  is consistent.
\end{enumerate}

We prove item \ref{oneone}. Suppose the set is inconsistent. Then there exist $\chi_1,\cdots,\chi_n$ and $\psi_1, \cdots, \psi_n$ such that $\vdash\neg \chi_1\wedge\cdots\land \neg \chi_n\to\phi$ and $\Kw_i\chi_k\wedge\Kw_i(\chi_k\to\psi_k)\land \neg \Kw_i \psi_k\in s$ for all $k\in [1,n]$. From \GENKw, we have $\Kw_i((\bigwedge_{k=1}^n\neg \chi_k)\to\phi)\in s$. Now since $\Kw_i\phi\notin s$, from the maximal consistency of $s$ we get $\neg\Kw_i\phi \in s$. Then, from Proposition \ref{Mixi2}\ we infer that $\bigvee_{k=1}^n \Kw_i\psi_k\in s$. And this contradicts that $\neg \Kw_i \psi_k\in s$ for all $k\in[1, n]$.

From item \ref{oneone}, the definition of the canonical relation, and the observation that every consistent set has a maximal consistent extension (Lindenbaum Lemma), we conclude that there exists a $t_2\in S^c$ such that $s\to^c_i t_2$ and $\neg\phi\in t_2$.

The proof of item \ref{twotwo} is similar to item \ref{oneone}, but we need to use  \EquiKw, and similarly, from item \ref{twotwo} we derive that there exists a $t_1\in S^c$ such that $s\to^c_i t_1$ and $\phi\in t_1$.
\end{proof}

\begin{theorem}[Completeness]
\SPLKw\ is complete with respect to the class ${\mathcal K}$ of all frames. That is, for every $\phi\in\PLKw$, $\vDash\phi$ implies $\vdash\phi$.
\end{theorem}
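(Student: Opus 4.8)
The plan is to derive completeness from the Truth Lemma (Lemma~\ref{truthlem}) in the standard way, after first checking that the canonical model $\M^c$ is a legitimate model in the sense of Definition~2. First I would verify that $S^c$ is non-empty: since $\SPLKw$ is consistent (which follows from soundness, Proposition~\ref{prop.sound}, as $\bot$ is not valid), the Lindenbaum Lemma guarantees at least one maximal consistent set. The relations $\to^c_i$ and valuation $V^c$ are well-defined by construction, so $\M^c$ is indeed a model over the class $\mathcal{K}$ of all frames (no frame conditions are imposed).

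Next I would run the usual contrapositive argument. Suppose $\nvdash\phi$. Then $\{\neg\phi\}$ is consistent, so by Lindenbaum it extends to some $s\in S^c$ with $\neg\phi\in s$, hence $\phi\notin s$ by maximal consistency. By the Truth Lemma, $\M^c,s\nvDash\phi$. Since $\M^c$ is a model, this witnesses $\nvDash\phi$. Contraposing, $\vDash\phi$ implies $\vdash\phi$, which is exactly the claim.

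The real work has already been done in Lemma~\ref{truthlem}, whose non-trivial $\Kw_i$-case relies on the carefully designed canonical relation (motivated by the ``almost definability'' of knowledge in Proposition~\ref{prop.iffK}) together with the key derived theorem Proposition~\ref{Mixi2} and the axioms $\KwCon$, $\KwDis$, $\EquiKw$, $\GENKw$. So from the perspective of \emph{this} theorem statement, there is essentially no remaining obstacle: the only thing to be careful about is confirming that the canonical model genuinely lies in the target class, which here is trivial because $\mathcal{K}$ imposes no constraints on $\toallc$. (This is precisely the point where the analogous proofs for the special frame classes in Section~\ref{sec.extensions} will become delicate, since there one must massage the canonical model to force reflexivity, transitivity, etc., given that these properties are not $\PLKw$-definable by Proposition~\ref{prop.undefinability}.) I would therefore present the proof as a short two- or three-line deduction citing Lemma~\ref{truthlem}.
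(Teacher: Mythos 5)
Your proposal is correct and follows essentially the same route as the paper: the paper's proof is exactly the contrapositive argument via the Lindenbaum Lemma and the Truth Lemma (Lemma~\ref{truthlem}). The extra remarks about non-emptiness of $S^c$ and the canonical model trivially lying in $\mathcal{K}$ are sound but not spelled out in the paper's two-line proof.
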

\begin{proof}
Suppose $\nvdash\phi$, then $\neg\phi$ is \SPLKw-consistent. By Lindenbaum-Lemma there exists $s\in S^c$ such that $\neg\phi\in s$, and thus $\M^c,s\vDash\neg\phi$ by Truth Lemma, therefore $\nvDash\phi$.
\end{proof}

Given the translation from $\PLKw$ to $\EL$ (i.e.\ the translation $t$ in the proof of Proposition \ref{prop.lessexpressiveone}), and the decidability of $\EL$, the (satisfiability problem of) knowing whether logic is obviously decidable.
\begin{proposition}[Decidability of $\PLKw$]\label{deciableknowingwhether}
The logic $\PLKw$ is decidable.
\end{proposition}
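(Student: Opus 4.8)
The plan is to reduce the decision problem for $\PLKw$ to that of the basic modal (epistemic) logic $\EL$ over the class $\mathcal{K}$ of all frames, which is well known to be decidable (by the finite model property obtained through filtration, or via the standard $\mathrm{PSPACE}$ procedure for multi-modal $\mathbf{K}$). The reduction is exactly the translation $t:\PLKw\to\EL$ already introduced in the proof of Proposition~\ref{prop.lessexpressiveone}, which commutes with the Boolean connectives and sends $\Kw_i\phi$ to $\K_it(\phi)\vee\K_i\neg t(\phi)$.

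First I would record that $t$ is truth-preserving pointwise on \emph{every} model: by a routine induction on $\phi$ (the only interesting clause being $\Kw_i\phi$, where one unfolds the semantic clause for $\Kw_i$ and applies the induction hypothesis to $\phi$), for every model $\M$ and world $s$ we have $\M,s\vDash\phi$ iff $\M,s\vDash t(\phi)$. Since $t(\phi)$ is interpreted over the same class of models as $\phi$ — namely $\mathcal{K}$ — this immediately gives $\vDash\phi$ iff $\vDash t(\phi)$, and dually that $\phi$ is satisfiable iff $t(\phi)$ is. Second, I would note that $t$ is manifestly computable: it is defined by a straightforward structural recursion, and each step increases the formula size only by a constant factor, so $t(\phi)$ is obtained from $\phi$ in time polynomial in $|\phi|$.

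Putting these together yields the decision procedure for (the validity/satisfiability problem of) $\PLKw$: on input $\phi$, compute $t(\phi)$ and run a decision procedure for $\EL$ on $t(\phi)$; accept iff that procedure accepts. Because validity and satisfiability are interdefinable via negation (and $t$ commutes with $\neg$), this settles both, and by the completeness theorem it also decides the set of $\SPLKw$-theorems $\{\phi\mid\vdash\phi\}$.

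There is no real obstacle here — the statement is essentially an immediate corollary of the two ingredients it names — so the proof is short; the only point worth a line of care is checking that the reduction stays within the same semantic class, which it does precisely because $t$ is truth-preserving on arbitrary models rather than on some restricted subclass, and that decidability of $\EL$ is invoked for the multi-modal language with agent set $\Ag$, which is standard. (A self-contained alternative, if desired, would be to filtrate the canonical model $\M^c$ of Section~\ref{sec.axiomatization} through the finite set of subformulas of $\phi$, verifying that the $\Kw_i$-clause is respected, thereby establishing a finite model property for $\PLKw$ directly; but the translation route is the cleaner one and is what the remark preceding the statement already indicates.)
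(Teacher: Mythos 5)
Your proposal is correct and follows exactly the route the paper takes: it reduces $\PLKw$-validity/satisfiability to that of $\EL$ via the truth-preserving, computable translation $t$ from the proof of Proposition~\ref{prop.lessexpressiveone} and then invokes the decidability of $\EL$. The paper states this in a single sentence before the proposition; you have merely spelled out the same argument in more detail.
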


\section{Axiomatization: extensions} \label{sec.extensions}

In this section we will give extensions of $\SPLKw$ w.r.t.\ various classes of frames, and prove their completeness. Definition \ref{def.ext} shows the extra axiom schemas and corresponding systems, with on the right-hand side in the table the frame classes for which we will demonstrate completeness.

\begin{definition}[Extensions of \SPLKw]\label{def.ext}
\[ \begin{array}{|lll|l|}
  \hline
  \text{Notation}& \text{Axiom Schemas}& \text{Systems} & \text{Frames} \\
\hline
  \KwT & \Kw_i\phi\land\Kw_i(\phi\to\psi)\land\phi\to\Kw_i\psi& \SPLKwT=\SPLKw+\KwT & {\mathcal T} \\
  \KwTr & \Kw_i\phi\to\Kw_i(\Kw_i\phi\vee\psi)& \SPLKwTr=\SPLKw+\KwTr & 4\\
  \KwEuc & \neg\Kw_i\phi\to\Kw_i(\neg\Kw_i\phi\vee\psi)&\SPLKwEuc=\SPLKw+\KwEuc & 5\\
   \Tr  & \Kw_i\phi\to\Kw_i\Kw_i\phi &\SPLKwTTr=\SPLKw+\KwT+\Tr & {\mathcal S}4\\
   \Euc   & \neg\Kw_i\phi\to\Kw_i\neg\Kw_i\phi &\SPLKwTEuc=\SPLKw+\KwT+\Euc & {\mathcal S}5 \\
  \hline
\end{array}
\]
\end{definition}

\begin{proposition}\label{validthree} \
\begin{itemize}
\item $\KwT$ is valid on the class of all $\mathcal{T}$-frames;
\item $\KwTr$ is valid on the class of all $4$-frames;
\item $\KwEuc$ is valid on the class of all $5$-frames.
\end{itemize}
\end{proposition}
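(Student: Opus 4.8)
The plan is to verify each of the three validities by a direct semantic argument, in each case reducing to the observation that a suitable subformula is forced to have a \emph{constant} truth value across the successors of the evaluation point. For $\KwT$ on the class of $\mathcal{T}$-frames, fix a reflexive model $\M$ and a world $s$ with $\M,s\vDash\Kw_i\phi\land\Kw_i(\phi\to\psi)\land\phi$; the goal is $\M,s\vDash\Kw_i\psi$. Since $s\to_i s$ by reflexivity, $\M,s\vDash\Kw_i\phi$ together with $\M,s\vDash\phi$ forces $\M,t\vDash\phi$ for \emph{every} successor $t$ of $s$; likewise $\M,s\vDash\Kw_i(\phi\to\psi)$ makes the truth value of $\phi\to\psi$ constant across the successors of $s$ (including $s$ itself). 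If $\phi\to\psi$ is true at all successors, then since $\phi$ is true at all of them, so is $\psi$; if $\phi\to\psi$ is false at all successors, then $\psi$ is false at all of them. Either way $\psi$ is constant across the successors of $s$, hence $\M,s\vDash\Kw_i\psi$.

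For $\KwTr$ on the class of $4$-frames, fix a transitive model $\M$ and a world $s$ with $\M,s\vDash\Kw_i\phi$. The key step is to show $\M,t\vDash\Kw_i\phi$ for every successor $t$ of $s$: if $t\to_i u_1$ and $t\to_i u_2$, then transitivity gives $s\to_i u_1$ and $s\to_i u_2$, so $\M,s\vDash\Kw_i\phi$ yields $\M,u_1\vDash\phi$ iff $\M,u_2\vDash\phi$. Consequently $\Kw_i\phi\lor\psi$ is true at every successor of $s$, a fortiori constant across them, so $\M,s\vDash\Kw_i(\Kw_i\phi\lor\psi)$.

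For $\KwEuc$ on the class of $5$-frames, fix a Euclidean model $\M$ and a world $s$ with $\M,s\vDash\neg\Kw_i\phi$, so there are $v_1,v_2$ with $s\to_i v_1$, $s\to_i v_2$, $\M,v_1\vDash\phi$, and $\M,v_2\vDash\neg\phi$. For any successor $t$ of $s$, euclidicity applied to $s\to_i t$ and $s\to_i v_1$ (respectively $s\to_i v_2$) gives $t\to_i v_1$ (respectively $t\to_i v_2$), so $t$ sees two worlds disagreeing on $\phi$ and hence $\M,t\vDash\neg\Kw_i\phi$. Thus $\neg\Kw_i\phi\lor\psi$ is true at every successor of $s$, hence constant across them, so $\M,s\vDash\Kw_i(\neg\Kw_i\phi\lor\psi)$.

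All three arguments are elementary, so there is no serious obstacle; the only point needing a moment's thought is the $\KwT$ case, where — unlike the other two — the conclusion does not come "for free" from a subformula being true at all successors, and one must do the small case split on the (already established as constant) truth value of $\phi\to\psi$. It is also worth noting in passing that the role of the auxiliary disjunct $\psi$ in $\KwTr$ and $\KwEuc$ is exactly to package the "becomes-true-at-every-successor" phenomenon as a knowing-whether statement, since $\Kw_i$ on its own cannot express that a formula holds at all successors.
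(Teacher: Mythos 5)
Your proof is correct and uses essentially the same semantic arguments as the paper: reflexivity forces $\phi$ to hold at all successors in the $\KwT$ case, transitivity propagates $\Kw_i\phi$ to successors in the $\KwTr$ case, and euclidicity propagates $\neg\Kw_i\phi$ to successors in the $\KwEuc$ case. The only cosmetic difference is that you argue directly (showing the relevant subformula is constant or true across all successors) whereas the paper phrases each case as a proof by contradiction.
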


\begin{proof} \
\begin{itemize}
\item   Given any $\M=\lr{S,\toall,V}$ based on a reflexive frame and an $s\in S$, suppose $\M,s\vDash\Kw_i\phi\land\Kw_i(\phi\to\psi)\land\phi$.
Towards a contradiction assume $\M,s\nvDash\Kw_i\psi$, then there exist $t,t'$ such that $s\to_it,s\to_it'$ and $t\vDash\psi,t'\vDash\neg\psi$. From the reflexivity of $s$ it follows that $s\to_is$, and thus $t\vDash\phi, t'\vDash\phi$ by the facts that $s\vDash\Kw_i\phi\land\phi$, $s\to_it,s\to_it'$. Then it is easy to get $t\vDash\phi\to\psi$ but $t'\nvDash\phi\to\psi$, which contradicts the supposition $s\vDash\Kw_i(\phi\to\psi)$.

\item Given any $\M=\lr{S,\toall,V}$ based on a transitive frame and an $s\in S$, suppose that $\M,s\vDash\Kw_i\phi$. Towards a contradiction assume $\M,s\nvDash\Kw_i(\Kw_i\phi\vee\psi)$ for some $\psi$, then there exist $t,t'$ such that $s\to_it,s\to_it'$ and $t\vDash\Kw_i\phi\vee\psi,t'\vDash\neg\Kw_i\phi\land\neg\psi$. From $t'\vDash\neg\Kw_i\phi$ it follows that for some $u,u'$ such that $t'\to_iu,t'\to_iu'$ and $u\vDash\phi,u'\vDash\neg\phi$. From transitivity it follows that $s\to_iu,s\to_iu'$ due to the facts that $s\to_it',t'\to_iu$ and $t'\to_iu'$. Therefore $s\nvDash\Kw_i\phi$, contradicting the assumption.

\item Given any $\M=\lr{S,\toall,V}$ based on an Euclidean frame and an $s\in S$, suppose that $\M,s\vDash\neg\Kw_i\phi$. Towards a contradiction assume $\M,s\nvDash\Kw_i(\neg\Kw_i\phi\vee\psi)$. Then there exist $t,t'$ such that $s\to_it,s\to_it'$ and $t\vDash\neg\Kw_i\phi\vee\psi,t'\vDash\Kw_i\phi\land\neg\psi$. Moreover, it follows that for some $u,u'$ such that $s\to_iu,s\to_iu'$ and $u\vDash\phi,u'\vDash\neg\phi$ from the supposition. Since $\M$ is Euclidean, $t'\to_iu$ and $t'\to_iu'$ based on the facts that $s\to_it',s\to_iu,s\to_iu'$. Therefore $t'\nvDash\Kw_i\phi$ because $u\vDash\phi,u'\vDash\neg\phi$, contradicting $t'\vDash\Kw_i\phi$.
\end{itemize}
\end{proof}

To prove the soundness of the novel proof systems we only need to refer to the soundness of the axioms $\KwT$, $\KwTr$, and $\KwEuc$, as demonstrated in Proposition \ref{validthree}. Before we proceed to demonstrate the completeness of these systems, let us first give some intuitions and motivation to explain the form of the axioms. The reader might have expected a more familiar connection between frame classes and axioms instead:
\begin{itemize}
\item For reflexive frames: why not $\Kw_i\phi\to\phi$ instead of $\Kw_i\phi\land\Kw_i(\phi\to\psi)\land\phi\to\Kw_i\psi$?
\item For transitive frames: why not $\Kw_i\phi\to\Kw_i\Kw_i\phi$ instead of $\Kw_i\phi\to\Kw_i(\Kw_i\phi\vee\psi)$?
\item For Euclidean frames: why not $\neg\Kw_i\phi\to\Kw_i\neg\Kw_i\phi$ instead of $\neg\Kw_i\phi\to\Kw_i(\neg\Kw_i\phi\vee\psi)$?
\end{itemize}
First, we recall the reader that none of these frame classes are definable by knowing whether formulas (Prop.\ \ref{prop.undefinability}). So, the axioms in the proof systems defined above fulfil a different role, there is no correspondence in the standard modal logical sense. Second, the three `familiar' formula schemas tentatively stipulated above could of course still be valid. But are they? Sometimes yes, at other times no. Concerning $\Kw_i\phi\to\phi$: this is of course not valid in knowing whether logic (you may know whether $p$ because you know that $p$ is false). Further, trying to obtain an interesting validity by translating axiom $\K_i\phi\to\phi$ from epistemic logic into $\PLKw$ (according to the translation defined in Prop.\ \ref{equallyexpressive}) does not lead anywhere: we get $\phi\land \Kw_i\phi\to\phi$, a tautology. Concerning $\Kw_i\phi\to\Kw_i\Kw_i\phi$: this is valid on transitive frames. It is also derivable in the proof system \SPLKwTr\ defined above: it is essentially an instantiation of axiom $\KwTr$ for $\psi = \bot$.\footnote{Note that $\Kw_i\phi \vee \bot$ is equivalent to $\Kw_i\phi$ and we can apply the substitution of equivalents principle $\RE$ (this is admissable as well in all defined extensions of $\SPLKw$).} But just this principle $\Tr$ on top of \SPLKw\ was not enough to obtain completeness (Prop.\ \ref{incompleteness4} below), we do need the stronger version $\KwTr$. However, in the presence of $\KwT$, $\Tr$ is sufficient to demonstrate completeness for \SPLKwTTr\ (Prop.\ \ref{compextensionS4}), since we can actually derive $\KwTr$ in the system based on $\Tr$ and  $\KwT$ (Prop.\ \ref{prop.kw4kw5}). A similar story goes for $\Euc$ and $\KwEuc$: $\SPLKw$+$\Euc$ is not complete w.r.t.\ Euclidean frames but $\SPLKwTEuc=\SPLKw+\KwT+\Euc$ is complete w.r.t.\ $S5$-frames (Prop.\ \ref{incompleteness5} and Prop.\ \ref{compextensionS5}). 


The following proposition says that $\KwTr$ and $\KwEuc$ are derivable in $\SPLKwTTr$ and $\SPLKwTEuc$ respectively, which are crucial in the proofs of  Theorem~\ref{compextensionS4} and Theorem ~\ref{compextensionS5}, respectively. 

\begin{proposition}\label{prop.kw4kw5}
\begin{enumerate}
\item\label{prop.kw4} $\vdash_{\SPLKwTTr}\Kw_i\phi\to\Kw_i(\Kw_i\phi\vee\psi)$
\item\label{prop.kw5} $\vdash_{\SPLKwTEuc}\neg\Kw_i\phi\to\Kw_i(\neg\Kw_i\phi\vee\psi)$
\end{enumerate}
\end{proposition}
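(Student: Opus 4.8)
The plan is to derive each of the two ``strong'' schemas directly from its ``weak'' counterpart ($\Tr$ for item~\ref{prop.kw4}, $\Euc$ for item~\ref{prop.kw5}) together with axiom $\KwT$ and the basic rules of $\SPLKw$. The guiding observation is that $\KwT$, read in the form $\Kw_i\alpha\land\Kw_i(\alpha\to\beta)\land\alpha\to\Kw_i\beta$, is a \emph{conditional} monotonicity principle for $\Kw_i$: if $\alpha$ is known-whether and actually holds, then known-whether propagates along any implication $\alpha\to\beta$. Since $\Kw_i\phi\to(\Kw_i\phi\vee\psi)$ is a tautology, $\GENKw$ gives $\vdash\Kw_i(\Kw_i\phi\to(\Kw_i\phi\vee\psi))$, and this theorem will supply the ``$\Kw_i(\alpha\to\beta)$'' conjunct when we instantiate $\KwT$.

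For item~\ref{prop.kw4} I would take the instance of $\KwT$ with $\alpha:=\Kw_i\phi$ and $\beta:=\Kw_i\phi\vee\psi$, namely
$$(\Kw_i\Kw_i\phi\land\Kw_i(\Kw_i\phi\to(\Kw_i\phi\vee\psi))\land\Kw_i\phi)\to\Kw_i(\Kw_i\phi\vee\psi).$$
Its middle antecedent conjunct is a theorem (as just noted), and its first antecedent conjunct is obtained from $\Kw_i\phi$ by axiom $\Tr$. Hence under the hypothesis $\Kw_i\phi$ all three conjuncts of the antecedent are derivable, so $\TAUT$ together with $\MP$ yields $\Kw_i(\Kw_i\phi\vee\psi)$; discharging the hypothesis gives the desired $\Kw_i\phi\to\Kw_i(\Kw_i\phi\vee\psi)$.

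Item~\ref{prop.kw5} is obtained by exactly the same pattern with $\alpha:=\neg\Kw_i\phi$ and $\beta:=\neg\Kw_i\phi\vee\psi$: use $\GENKw$ on the tautology $\neg\Kw_i\phi\to(\neg\Kw_i\phi\vee\psi)$ for the middle conjunct, use axiom $\Euc$ in place of $\Tr$ to derive $\Kw_i\neg\Kw_i\phi$ from $\neg\Kw_i\phi$, and conclude as above.

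I do not expect a genuine obstacle: each case is a three-line Hilbert derivation. The only point requiring a little care is choosing the substitution instance of $\KwT$ so that the ``necessitated tautology'' conjunct and the $\Tr$/$\Euc$-conjunct line up with the antecedent of that instance — once $\alpha$ and $\beta$ are fixed as above this is automatic, and no appeal to frame correspondence is needed (indeed, by Prop.~\ref{prop.undefinability} none is available).
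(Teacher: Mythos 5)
Your derivation is correct and is essentially identical to the paper's own proof: the same instance of $\KwT$ with $\alpha:=\Kw_i\phi$ and $\beta:=\Kw_i\phi\vee\psi$, the same use of $\GENKw$ on the tautology $\Kw_i\phi\to(\Kw_i\phi\vee\psi)$, and the same appeal to $\Tr$ (resp.\ $\Euc$) for the remaining conjunct, chained together by propositional reasoning. Nothing to add.
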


\begin{proof}
\begin{enumerate}
\item The following is a derivation in $\SPLKwTTr$:
\[ \begin{array}{lll}
(i)& \Kw_i\phi\to(\Kw_i\phi\vee\psi)& \TAUT\\
(ii)& \Kw_i(\Kw_i\phi\to(\Kw_i\phi\vee\psi))& \GENKw,(i)\\
(iii)& \Kw_i\Kw_i\phi\land \Kw_i(\Kw_i\phi\to(\Kw_i\phi\vee\psi))\land \Kw_i\phi\to \Kw_i(\Kw_i\phi\vee\psi)& \KwT\\
(iv)& \Kw_i\Kw_i\phi\land \Kw_i\phi\to \Kw_i(\Kw_i\phi\vee\psi)& (ii),(iii)\\
(v)& \Kw_i\phi\to\Kw_i\Kw_i\phi& \Tr\\
(vi)& \Kw_i\phi\to\Kw_i(\Kw_i\phi\vee\psi)& (iv),(v)
\end{array}\]
\item Similar to~\ref{prop.kw4}, by using Axiom $\Euc$.
\end{enumerate}
\end{proof}

Before the completeness results, we first show two negative results which demonstrate that only adding $\Tr$ (resp. $\Euc$) on top of $\SPLKw$ is not enough for completeness of $\PLKw$ over transitive (resp. Euclidean) frames. 
\begin{proposition} \label{incompleteness4}
$\SPLKw + \Tr$ is incomplete with respect to the class of transitive frames.
\end{proposition}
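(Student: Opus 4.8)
The goal is to show $\SPLKw + \Tr$ is incomplete with respect to transitive frames, i.e.\ there is a $\PLKw$-formula valid on all transitive frames but not derivable in $\SPLKw + \Tr$. Since $\KwTr$ (namely $\Kw_i\phi\to\Kw_i(\Kw_i\phi\vee\psi)$) is valid on all $4$-frames by Proposition~\ref{validthree}, the natural strategy is to exhibit an instance of $\KwTr$ that is \emph{not} derivable in $\SPLKw + \Tr$. To separate $\SPLKw + \Tr$ from the target instance, the plan is a model-theoretic (frame) argument: find a \emph{non-transitive} frame (or class of frames) $\F$ on which every axiom of $\SPLKw + \Tr$ is valid --- in particular on which $\Tr$, i.e.\ $\Kw_i\phi\to\Kw_i\Kw_i\phi$, is valid --- but on which the chosen instance of $\KwTr$ fails. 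Then $\SPLKw + \Tr$ is sound with respect to $\F$ but $\KwTr$ is not valid on $\F$, so $\KwTr$ is not a theorem of $\SPLKw + \Tr$, while it \emph{is} valid on all transitive frames; hence incompleteness.

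The key steps, in order: (1) Describe a small frame $\F$ which is not transitive, e.g.\ three worlds $s\to_i t$, $s\to_i u$, $t\to_i v$, with $v$ not accessible from $s$ (so transitivity fails since $s\to_i t\to_i v$ but $s\not\to_i v$) --- note this is essentially the frame $\mathcal{F}_1$ used in Proposition~\ref{prop.undefinability}, but we will need enough branching at $s$ so that $\Kw_i$ is non-trivial there; so I would take a frame where $s$ has (at least) two successors and one of those successors itself branches, arranged so that no $\Kw_i$-iteration forces the bad instance. (2) Check that all axioms of $\SPLKw$ are valid on $\F$ (automatic, by soundness of $\SPLKw$, Proposition~\ref{prop.sound}). (3) Verify that $\Tr$ ($\Kw_i\phi\to\Kw_i\Kw_i\phi$) is valid on $\F$: this is the substantive computation --- one must show that for every valuation and every world, $\Kw_i\phi$ holding at all successors whenever it holds at the evaluation world; the frame must be chosen so that worlds with a branching successor structure are ``shielded'' appropriately. (4) Exhibit a valuation $V$ on $\F$, a world $w$, and formulas $\phi,\psi$ such that $\F,V,w\models\Kw_i\phi$ but $\F,V,w\not\models\Kw_i(\Kw_i\phi\vee\psi)$. (5) Conclude: $\KwTr$ is valid on all $4$-frames (Proposition~\ref{validthree}) but $\not\vdash_{\SPLKw+\Tr}\KwTr$, so the system is incomplete for $4$.

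The main obstacle is step (3): the frame must be engineered so that $\Tr$ holds even though the frame is not transitive and even though $\Kw_i$ is non-degenerate somewhere. The tension is that $\Kw_i\phi$ is ``local'' (depends only on successors of the current world), so $\Tr$ asks that whenever the successors-of-$w$ all agree on $\phi$, then every successor $t$ of $w$ also has all \emph{its} successors agreeing on $\phi$. On a non-transitive frame the successors of $t$ need not be successors of $w$, so one must place such worlds $t$ either as leaves (no successors, making $\Kw_i$ trivially true there) or arrange that whenever $w$ has $\ge 2$ successors, every successor of $w$ is a leaf. A clean choice: make $\F$ consist of a ``spine'' of partial-functional (single-successor) worlds together with one branching world whose successors are all leaves; then at the branching world $\Kw_i$ can genuinely fail (giving the counterexample to $\KwTr$ via a suitable $\psi$), while $\Tr$ is salvaged because leaves satisfy every $\Kw_i\chi$. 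Finding the precise small frame and the precise $\phi,\psi,V$ witnessing the failure of $\KwTr$ --- likely with $\psi$ chosen to distinguish the two leaf-successors so that $\Kw_i(\Kw_i\phi\vee\psi)$ fails --- is the part that requires care but is routine once the frame shape is fixed.
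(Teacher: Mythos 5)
Your overall strategy --- exhibit an instance of $\KwTr$ that is valid on all transitive frames (Prop.~\ref{validthree}) yet underivable in $\SPLKw+\Tr$, by producing a structure on which the system is sound but the instance fails --- is exactly the paper's. However, there is a genuine gap in your construction. To falsify $\Kw_i\phi\to\Kw_i(\Kw_i\phi\vee\psi)$ at a world $s$ you need $s\vDash\Kw_i\phi$ together with two successors of $s$ that disagree on $\Kw_i\phi\vee\psi$; in particular some successor $u$ of $s$ must satisfy $\neg\Kw_i\phi\land\neg\psi$, so $u$ itself must have two successors disagreeing on $\phi$. Your proposed frame (a partial-functional spine plus one branching world whose successors are all leaves) has no such two-level branching: at every spine world $\Kw_i\chi$ is trivially true for every $\chi$, and at the branching world all successors are leaves and hence all satisfy $\Kw_i\phi\vee\psi$, so $\KwTr$ is \emph{valid} on that frame and step (4) of your plan cannot be carried out on it.

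The deeper problem is with working at the level of frames at all. Once $s$ has two successors $t,u$ with $u$ branching further, one can typically choose a valuation making all successors of $s$ agree on $p$ while $u$'s successors disagree on $p$ and $t\vDash\Kw_ip$; this already breaks $\Tr$ at $s$, so such a frame does not validate $\Tr$ either. The paper escapes this tension by proving soundness of $\SPLKw+\Tr$ with respect to a single \emph{model} rather than a frame: $s$ has two successors $t,u$, each with two leaf successors, and the valuation is chosen so that the successors of $t$ and of $u$ carry pairwise identical valuations (hence $t$ and $u$ agree on every formula of the form $\Kw_i\phi$, which keeps $\Tr$ valid at $s$ under this one valuation), while the atom $q$ distinguishes $t$ from $u$ directly, so that $s\vDash\Kw_ip$ but $s\nvDash\Kw_i(\Kw_ip\vee q)$. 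Working with a fixed model also obliges one to check that the rules $\GENKw$ and $\REKw$ preserve validity on that particular model, a point the paper verifies explicitly and your plan does not address.
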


\begin{proof}
Recall that $\Kw_ip\to\Kw_i(\Kw_ip\vee q)$ is an instance of $\KwTr$ and it is valid on the class of all transitive frames (Prop.\ref{validthree}). We will show that this formula is not a theorem of $\SPLKw + \Tr$. For this, we construct a model $\M$ such that $\SPLKw + \Tr$ is sound with respect to validity on $\M$ (i.e. for any $\PLKw$ formula $\phi$, $\vdash_{\SPLKw + \Tr}\phi$ implies $\M\vDash\phi$), but $\M\nvDash\Kw_ip\to\Kw_i(\Kw_ip\vee q)$. Therefore $\nvdash_{\SPLKw + \Tr}\Kw_ip\to\Kw_i(\Kw_ip\vee q).$ Since $\Kw_ip\to\Kw_i(\Kw_ip\vee q)$ is not provable in $\SPLKw+\Tr$ but it is valid over transitive frames, $\SPLKw+\Tr$ is not complete w.r.t. the class of all transitive frames. 

Consider the following model $\M$ (w.l.o.g. let us assume $\BP=\{p,q\}$):
$$
\xymatrix{{u_1: p,q}&&&&{t_1: p,q}\\
&{u:p,\neg q}\ar[ul]\ar[dl]&&{t: p,q}\ar[ur]\ar[dr]& \\
{u_2: \neg p,\neg q}&& {s: p,q} \ar[ur]\ar[ul]   & & {t_2: \neg p,\neg q}
}
$$

\medskip

First, remember that all the axioms of $\SPLKw$ are valid on the class of all frames (Prop.\ref{prop.sound}), thus they are also valid on $\M$. As for the inference rules,  their validities on $\M$ do not follow immediately from the fact that these rules are valid on the class of all frames. However, it is not hard to check that $\MP, \GENKw$ and $\REKw$ are indeed valid on $\M$, i.e., if the premise is valid on $\M$ then the conclusion is also valid on $\M$. 

Second, $\Tr$ is valid on $\M$: by the construction of $\M$, it is not hard to show by induction on the structure of $\phi\in\PLKw$ that: for any $\phi$, $t_1\vDash\phi$ iff $u_1\vDash\phi$, and $t_2\vDash\phi$ iff $u_2\vDash\phi$ $(\ast)$. As none of worlds $t_1,t_2,u_1,u_2$ has any successor, then all of them satisfy $\Kw_i\Kw_i\phi$, thus also satisfy $\Tr$ ($\Kw_i\phi\to\Kw_i\Kw_i\phi$). Also, since $t_1$ and $t_2$ both satisfy $\Kw_i\phi$ for any $\phi$,  $t\vDash\Kw_i\Kw_i\phi$ for any $\phi$ too, and thus $t\vDash\Kw_i\phi\to\Kw_i\Kw_i\phi$. Similarly, we can show that $u\vDash\Kw_i\phi\to\Kw_i\Kw_i\phi$ for any $\phi$. Now from $(\ast)$ we can see  $t\vDash\Kw_i\phi$ iff $u\vDash\Kw_i\phi$, which implies $s\vDash\Kw_i\Kw_i\phi$, and thus $s\vDash\Kw_i\phi\to\Kw_i\Kw_i\phi$. In sum, $\Tr$ is valid on $\M$.

Finally, it is clear that $\M,s\nvDash\Kw_ip\to\Kw_i(\Kw_ip\vee q)$, thus $\M\nvDash\Kw_ip\to\Kw_i(\Kw_ip\vee q)$. Since $\SPLKw+\Tr$ is sound w.r.t. $\M$ then we have $\nvdash_{\SPLKw+\Tr}\Kw_ip\to\Kw_i(\Kw_ip\vee q)$.
\end{proof}

\weg{\begin{proof}
We show that $\KwTr$ is not a derivable theorem in $\SPLKw + \Tr$. Consider the following model $\M$, and the state $s$ in $\M$.

$$
\xymatrix{{u_1: p,q}&&&&{t_1: p,q}\\
&{u:p,\neg q}\ar[ul]\ar[dl]&&{t: p,q}\ar[ur]\ar[dr]& \\
{u_2: \neg p,\neg q}&& {s: p,q} \ar[ur]\ar[ul]   & & {t_2: \neg p,\neg q}
}
$$

\medskip

Firstly, all instantiations of axioms of $\SPLKw$ are obviously true in $(\M,s)$, as they are true on all frames, and therefore for all valuations and in all states of such frames.

Secondly, all instantiations of the axiom $\Tr$ are true in $(\M,s)$. By induction on $\phi$ one shows: $t_1\vDash\phi$ iff $u_1\vDash\phi$, and $t_2\vDash\phi$ iff $u_2\vDash\phi$ $(\ast)$. As none of worlds $t_1,t_2,u_1,u_2$ has a successor, all of them satisfy $\Kw_i\Kw_i\phi$, thus satisfy $\Tr$ ($\Kw_i\phi\to\Kw_i\Kw_i\phi$). Also, $t_1$ and $t_2$ both satisfy $\Kw_i\phi$, therefore $t\vDash\Kw_i\Kw_i\phi$, and thus $t\vDash\Kw_i\phi\to\Kw_i\Kw_i\phi$. Similarly, $u\vDash\Kw_i\phi\to\Kw_i\Kw_i\phi$. From $(\ast)$ we can derive that $t\vDash\Kw_i\phi$ iff $u\vDash\Kw_i\phi$, which implies $s\vDash\Kw_i\Kw_i\phi$, and thus $s\vDash\Kw_i\phi\to\Kw_i\Kw_i\phi$.

Thirdly, the truth of formulas in $s$ is preserved under application of the derivation rules of $\SPLKw$ (i.e., if $\phi$ is true in $s$, then also $\Kw_i\phi$, and if $\phi\lra\psi$ is true in $s$, then also $\Kw_i\phi\lra\Kw_i\psi$).

Therefore all theorems of $\SPLKw + \Tr$ are true on $s$.

However, we can easily see that $\M,s\nvDash\Kw_ip\to\Kw_i(\Kw_ip\vee q)$. Therefore not all instantiations of $\KwTr$ are true in $s$. And thus we have shown that $\KwTr$ is not a derivable axiom in $\SPLKw + \Tr$.
\end{proof}}

\begin{proposition} \label{incompleteness5}
$\SPLKw + \Euc$ is incomplete with respect to the class of Euclidean frames.
\end{proposition}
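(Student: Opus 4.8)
The plan is to mirror the strategy used for Proposition \ref{incompleteness4}: exhibit a single finite model $\M$ on which the proof system $\SPLKw + \Euc$ is sound (in the sense that every theorem of $\SPLKw+\Euc$ is valid on $\M$) but on which some instance of $\KwEuc$ fails. Since $\KwEuc$ is valid on every Euclidean frame (Proposition \ref{validthree}), the existence of such an $\M$ immediately shows that $\KwEuc$ — hence, by soundness on $\M$, nothing in $\SPLKw+\Euc$ — cannot be derived, so $\SPLKw+\Euc$ is strictly weaker than what is needed for completeness over Euclidean frames.

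Concretely, I would reuse (a mirror-image of) the model from Proposition \ref{incompleteness4}, or a close variant tailored to the Euclidean case. The three things to check are: (1) all axioms of $\SPLKw$ are valid on $\M$ — this is free, since by Proposition \ref{prop.sound} they are valid on all frames, hence on any model built on any frame; (2) the inference rules $\MP$, $\GENKw$, $\REKw$ preserve validity on $\M$ — for $\MP$ this is trivial, and for $\GENKw$ and $\REKw$ one checks directly that if the premise is valid on every world of $\M$ then so is the conclusion (this is where one uses that the relevant dead-end worlds satisfy $\Kw_i\psi$ for every $\psi$, and that the ``twin'' worlds $t_1,u_1$ and $t_2,u_2$ are $\PLKw$-equivalent, giving $t\vDash\Kw_i\psi\iff u\vDash\Kw_i\psi$ and thence $s\vDash\Kw_i\Kw_i\psi$-style facts); (3) $\Euc$, i.e.\ $\neg\Kw_i\phi\to\Kw_i\neg\Kw_i\phi$, is valid on $\M$ — again exploiting that $t_1,t_2,u_1,u_2$ are dead-ends (so they satisfy $\Kw_i\psi$ for all $\psi$, in particular $\neg\Kw_i\phi$ is false there, making the $\Euc$-instance vacuously true), and that $s,t,u$ each "see only dead-end worlds on which $\Kw_i\phi$ has a constant truth value", which forces $\Kw_i\neg\Kw_i\phi$ to hold wherever $\neg\Kw_i\phi$ does. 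Finally, I would verify that the designated world $s$ refutes the specific $\KwEuc$-instance $\neg\Kw_ip\to\Kw_i(\neg\Kw_ip\vee q)$ — i.e.\ $s\vDash\neg\Kw_ip$ but $s\nvDash\Kw_i(\neg\Kw_ip\vee q)$ because among $s$'s successors one satisfies $\neg\Kw_ip\vee q$ and another does not.

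The main obstacle is engineering the model so that all three soundness conditions hold simultaneously while still breaking $\KwEuc$. The delicate point is the interplay between $\Euc$ being valid and the failure of $\KwEuc$: one needs worlds where the agent is ignorant of something (so that $\neg\Kw_ip$ can be true and $\Euc$ has non-vacuous instances), yet the accessibility structure must be arranged so that positive introspection of ignorance via the weak schema $\Kw_i\neg\Kw_i\phi$ still goes through at every world even though the strictly stronger $\Kw_i(\neg\Kw_i\phi\vee\psi)$ fails somewhere. I expect the correct model to have the same shape as the one in Proposition \ref{incompleteness4}: a root $s$ whose two successors $t,u$ each branch to a pair of dead-end ``twins'' with matching valuations on the two twins across the $t$-side and $u$-side, with $p$ true everywhere except on the two ``lower-left/lower-right'' dead-ends, so that $\neg\Kw_ip$ holds at $s,t,u$, while $q$ is arranged to make $\neg\Kw_ip\vee q$ non-constant among $s$'s successors. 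Once the model is fixed, checking conditions (1)–(3) and the refutation is routine induction on formula structure, entirely parallel to the proof of Proposition \ref{incompleteness4}.
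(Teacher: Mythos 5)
Your overall strategy is exactly the paper's: find a single model on which $\SPLKw+\Euc$ is sound but some instance of $\KwEuc$ fails, and conclude incompleteness from the validity of $\KwEuc$ on Euclidean frames (Prop.~\ref{validthree}). The checks you list (axioms free by Prop.~\ref{prop.sound}, rule preservation, $\Euc$ via dead-ends, refutation of $\neg\Kw_ip\to\Kw_i(\neg\Kw_ip\vee q)$) are the right ones.

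However, the concrete model you commit to in your last paragraph is internally inconsistent. You want $\neg\Kw_ip$ to hold at $s$, $t$, and $u$ simultaneously, \emph{and} you want $\neg\Kw_ip\vee q$ to be non-constant among $s$'s successors $t,u$. These two demands are incompatible: if $\neg\Kw_ip$ holds at both $t$ and $u$, then $\neg\Kw_ip\vee q$ is true at both, hence constant, and $s\vDash\Kw_i(\neg\Kw_ip\vee q)$ after all. To make $\neg\Kw_ip\vee q$ non-constant among $s$'s successors, one successor must falsify both disjuncts, i.e.\ must satisfy $\Kw_ip\land\neg q$ --- and the cheapest way to get $\Kw_ip$ is a dead-end. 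There are two clean repairs. (1) Keep the seven-point model of Prop.~\ref{incompleteness4} verbatim and refute the $\KwEuc$-instance at $t$ rather than at the root: $t\vDash\neg\Kw_ip$ (its twins disagree on $p$), while $t_1\vDash\neg\Kw_ip\vee q$ (since $q$) and $t_2\nvDash\neg\Kw_ip\vee q$ (dead-end, so $\Kw_ip$; and $\neg q$). (2) Use the much smaller model the paper actually uses: a root $s$ with exactly two dead-end successors $t$ (where $p,q$ hold) and $u$ (where $\neg p,\neg q$ hold). There $\Euc$ is valid everywhere because every successor of $s$ is a dead-end, so $\neg\Kw_i\phi$ is uniformly false on $s$'s successors and $s\vDash\Kw_i\neg\Kw_i\phi$; yet $s\vDash\neg\Kw_ip$ and $\neg\Kw_ip\vee q$ is true at $t$ but false at $u$. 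Either repair completes your argument; as written, the final model description does not.
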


\begin{proof}
The strategy is similar to the one in the proof of Theorem \ref{incompleteness4}. Recall that the formula $\neg\Kw_ip\to\Kw_i(\neg\Kw_ip\vee q)$ is valid on the class of all Euclidean frames (Prop.\ \ref{validthree}). We only need to show that this formula is not a theorem of $\SPLKw + \Euc$. For this, we construct a model $\N$ such that $\SPLKw + \Euc$ is sound with respect to $\N$ (i.e., all the theorems of $\SPLKw + \Euc$ are valid on $\N$), but $\N\nvDash\neg\Kw_ip\to\Kw_i(\neg\Kw_ip\vee q)$.

Consider the following model $\N$ (again, let us assume $\BP=\{p,q\}$):

$$
\xymatrix{&{t: p,q}\\
{s: p,q}\ar[ur]\ar[dr]& \\
& {u: \neg p,\neg q}
}
$$

\medskip

As in the previous proof, the axioms and inference rules of $\SPLKw$ are valid on $\N$.

Now we show $\Euc$ is valid on $\N$: by the construction of $\N$, neither $t$ nor $u$ has successor, then they both satisfy $\Kw_i\neg\Kw_i\phi$, and thus satisfy $\Euc$ ($\neg\Kw_i\phi\to\Kw_i\neg\Kw_i\phi$). Also, $t\vDash\Kw_i\phi$ and $u\vDash\Kw_i\phi$, then $s\vDash\Kw_i\neg\Kw_i\phi$, and thus $s\vDash\neg\Kw_i\phi\to\Kw_i\neg\Kw_i\phi$.

It is clear that $\N,s\nvDash\neg\Kw_ip\to\Kw_i(\neg\Kw_ip\vee q)$, thus $\N\nvDash\neg\Kw_ip\to\Kw_i(\neg\Kw_ip\vee q)$.
\end{proof}

\weg{\begin{proof}
We show that $\KwEuc$ is not a derivable theorem in $\SPLKw + \Euc$. The proof is similar to that of Proposition \ref{incompleteness4}, only now we use world $t$ in $\M$. We first show that all instantiations of $\Euc$ are true in $t$: for all $\phi$, $t_1\vDash\Kw_i\phi$ and $t_2\vDash\Kw_i\phi$ (there are no successors). Therefore, $t\vDash\Kw_i\Kw_i\phi$, but we also have $t\vDash\Kw_i\neg\Kw_i\phi$, and so $t\vDash\neg\Kw_i\phi\to\Kw_i\neg\Kw_i\phi$. Therefore, similarly to above, all theorems of $\SPLKw + \Euc$ are true on $t$. But $\M,t\nvDash\neg\Kw_ip\to\Kw_i(\neg\Kw_ip\vee q)$. Therefore not all instantiations of $\KwEuc$ are true in $t$. And thus we have shown that $\KwEuc$ is not a derivable axiom in $\SPLKw + \Euc$.
\end{proof}}

\weg{
\begin{proof}
\begin{enumerate}
\item The following is a derivation in $\SPLKwTTr$:
\[ \begin{array}{lll}
(i)& \Kw_i\phi\to(\Kw_i\phi\vee\psi)& \TAUT\\
(ii)& \Kw_i(\Kw_i\phi\to(\Kw_i\phi\vee\psi))& \GENKw,(i)\\
(iii)& \Kw_i\Kw_i\phi\land \Kw_i(\Kw_i\phi\to(\Kw_i\phi\vee\psi))\land \Kw_i\phi\to \Kw_i(\Kw_i\phi\vee\psi)& \KwT\\
(iv)& \Kw_i\Kw_i\phi\land \Kw_i\phi\to \Kw_i(\Kw_i\phi\vee\psi)& (ii),(iii)\\
(v)& \Kw_i\phi\to\Kw_i\Kw_i\phi& \Tr\\
(vi)& \Kw_i\phi\to\Kw_i(\Kw_i\phi\vee\psi)& (iv),(v)
\end{array}\]
\item Similar to~\ref{prop.kw4}, by using Axiom $\Euc$.
\end{enumerate}
\end{proof}
}

\medskip

We now continue with the completeness proofs for the extended proof systems. We first address the completeness of $\SPLKwT$. In the canonical model construction of Def.~\ref{cononicalmodel} it is unclear whether the canonical relation is reflexive. To ensure that the relations are reflexive, we take the reflexive closure of the canonical relation.

\begin{definition}[Canonical model of \SPLKwT]\label{def.canonicalmodelT}
The canonical model $\M^c$ of \SPLKwT~ is the same as $\M^c$ in Def.~\ref{cononicalmodel}, except that $S^c$ consists of all maximal consistent sets \emph{of $\SPLKwT$}, and that $\to^c_i$ is \emph{the reflexive closure of} the canonical relation defined in Def.~\ref{cononicalmodel}.
\end{definition}
As before, we use an equivalent definition of the canonical relation: $s\to^c_it$ iff $s=t$ or (there exists a $\chi$ such that $\neg \Kw_i\chi\in s$, and for all $\phi$ and $\psi$: $\phi\in t$ implies that at least one of $\Kw_i\phi$, $\Kw_i(\phi\to\psi)$ and $\neg \Kw_i\psi$ is not in $s$).

\begin{lemma}[Truth Lemma for \SPLKwT]\label{truthlemt}
For any $\PLKw$~ formula $\phi$, $\M^c,s\vDash\phi$ iff $\phi\in s$.
\end{lemma}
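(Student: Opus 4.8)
The proof will be by induction on the structure of $\phi$, following the template of the Truth Lemma for $\SPLKw$ (Lemma~\ref{truthlem}). The propositional cases ($\top$, $p$, $\neg$, $\land$) are unchanged, so the entire argument reduces to the $\Kw_i\phi$ case, which we split into the two directions as before. The key new ingredient is that $\to^c_i$ is now the reflexive closure of the old canonical relation, and we must check that everything still goes through, additionally exploiting the new axiom $\KwT$.

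\textbf{Direction ``$\Leftarrow$''.} Suppose $\Kw_i\phi \in s$ but $\M^c, s \vDash \neg\Kw_i\phi$, so there are $t_1, t_2$ with $s\to^c_i t_1$, $s\to^c_i t_2$, $\phi \in t_1$, $\neg\phi \in t_2$ (using IH). For each $t_k$ we now have two cases from the reflexive-closure definition: either $t_k = s$, or $t_k$ is related to $s$ via the ``old'' canonical relation. If $t_1 = s$ then $\phi \in s$; combined with $\Kw_i\phi \in s$ and $\KwT$ (taking $\psi$ appropriately, or rather using $\KwT$ with $\phi\to\psi$ chosen to force a contradiction against $t_2$) we derive a contradiction --- this is exactly the place where axiom $\KwT$ is needed, replacing the role that the ``old'' relation played. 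More carefully: if $t_1 = s$, then since $\neg\phi \in t_2$ and $s \to^c_i t_2$, if $t_2 = s$ too we immediately contradict $\phi \in s$ vs $\neg\phi \in s$; if instead $t_2$ is an ``old''-successor, we want to show $\neg\Kw_i\phi \in s$ using clause (2) of the old relation applied to $\neg\phi \in t_2$, so one of $\Kw_i\neg\phi$, $\Kw_i(\neg\phi\to\chi)$, $\neg\Kw_i\neg\phi$ is not in $s$ --- and since $\Kw_i\phi \in s$ gives $\Kw_i\neg\phi \in s$ by $\EquiKw$, we get $\Kw_i(\neg\phi\to\chi)\notin s$ for all $\chi$; symmetrically from $t_1$ (if it is an old-successor) we extract a failure, then invoke $\KwDis$ or $\KwT$ to reach $\neg\Kw_i\phi\in s$, a contradiction. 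The genuinely new subcase is $t_1 = s$ with $t_2$ an old-successor: here $\phi \in s$, $\Kw_i\phi \in s$, and $\neg\phi \in t_2$. Pick the witness $\chi_2$ from clause (1) for $t_2$, so $\neg\Kw_i\chi_2 \in s$; apply $\KwT$ in the form $\Kw_i\phi \land \Kw_i(\phi\to\chi_2)\land \phi \to \Kw_i\chi_2$: since $\Kw_i\chi_2 \notin s$ and $\Kw_i\phi, \phi \in s$, we get $\Kw_i(\phi\to\chi_2)\notin s$; but clause (2) applied to $\neg\phi \in t_2$ (with $\psi$ replaced by suitable formulas) together with $\Kw_i\phi\in s$ forces $\Kw_i(\phi\to\chi_2)\in s$ for the right choice --- this tension yields the contradiction. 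The remaining subcases ($t_1,t_2$ both old-successors) are literally the argument of Lemma~\ref{truthlem}.

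\textbf{Direction ``$\Rightarrow$''.} Suppose $\Kw_i\phi \notin s$; we must produce $t_1, t_2$ with $s\to^c_i t_1$, $s\to^c_i t_2$, $\phi \in t_1$, $\neg\phi \in t_2$. Here there are two cases. If $\phi \notin s$, we may simply take $t_2 = s$ (legitimate because $\to^c_i$ is reflexive), and then we only need to find $t_1$ with $s\to^c_i t_1$ and $\phi \in t_1$; we claim the set $\{\neg\chi \mid \Kw_i\chi\land\Kw_i(\chi\to\psi')\land\neg\Kw_i\psi' \in s \text{ for some }\psi'\}\cup\{\phi\}$ is consistent, and if it has a nonempty witness-part we get an old-successor $t_1$, while if that witness-part generates everything (no $\neg\Kw_i\chi$ in $s$) we fall back on $t_1 = s$ --- but then $\phi\in s$, contradicting our case assumption, so actually if $\phi\notin s$ and $\neg\Kw_i\phi\in s$ there must be such a witness; the consistency proof is exactly item~\ref{twotwo} of Lemma~\ref{truthlem} (using $\EquiKw$ and Prop.~\ref{Mixi2}). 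Symmetrically if $\neg\phi \notin s$, i.e.\ $\phi\in s$, we take $t_1 = s$ and find $t_2$ for $\neg\phi$ via item~\ref{oneone}. If both $\phi\in s$ and $\neg\phi\in s$ this is impossible by consistency, and if neither (also impossible by maximality) --- so exactly one holds and the corresponding construction applies; the case $\phi\notin s$ and $\neg\phi\notin s$ cannot occur. The one point needing care: when we invoke item~\ref{oneone}/\ref{twotwo} consistency and the witness-set does yield a genuine old-successor, that successor automatically satisfies clause (1) because its construction presupposes some $\neg\Kw_i\chi\in s$; if no such $\chi$ exists then $s$ sees only itself under $\to^c_i$, and since $\Kw_i\phi\notin s$ forces (by $\KwT$ and the fact that $s$ would ``see'' only $s$) a contradiction with $s$ being maximal consistent --- more precisely, if $\neg\Kw_i\chi\notin s$ for all $\chi$, then in particular $\Kw_i\phi\in s$, contradiction. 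So the witness always exists and the construction succeeds.

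\textbf{Main obstacle.} The crux is the ``$\Leftarrow$'' direction in the mixed subcase where one of the two worlds is $s$ itself (arising from reflexive closure) and the other is an honest old-successor: the original proof of Lemma~\ref{truthlem} used $\KwDis$ on two ``$\neg\Kw_i(\cdot\to\cdot)\in s$'' facts coming from two old-successors, but now one of those facts must instead be manufactured from $\phi\in s$ together with axiom $\KwT$. Getting the quantifier over the witness $\chi$ right --- choosing the $\psi$ in clause (2) and the auxiliary formula in $\KwT$ so that the two derived facts collide via $\KwDis$ (or so that $\KwT$ directly contradicts $\Kw_i\chi\notin s$) --- is the delicate bookkeeping step. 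Everything else is a routine adaptation of Lemma~\ref{truthlem}.
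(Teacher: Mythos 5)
Your proposal is correct and matches the paper's proof: the crucial new step is exactly the mixed subcase $t_1=s$, $t_2$ an old-successor, where $\phi,\Kw_i\phi\in s$ and axiom $\KwT$ yield $\neg\Kw_i(\phi\to\chi)\in s$, which together with $\neg\Kw_i(\neg\phi\to\chi)\in s$ (extracted from clause (2) of the old canonical relation) gives $\neg\Kw_i\phi\in s$ via $\KwDis$, a contradiction. One sentence of yours misstates this mechanism --- clause (2) never ``forces $\Kw_i(\phi\to\chi_2)\in s$'', it only yields $\Kw_i(\neg\phi\to\chi_2)\notin s$ --- but your closing paragraph states the correct collision via $\KwDis$, and your variant of the other direction (supplying one witness by reflexivity instead of reusing both old witnesses, as the paper does) also goes through.
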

\begin{proof}
By induction on $\phi$. We consider the non-trivial case for $\Kw_i\phi$.

\emph{Left-to-right}: This is similar to the corresponding proof in Lemma \ref{truthlem}. Observe that all pairs in the canonical relation in Def.~\ref{cononicalmodel} are also in the relation $\to^c_i$ from Def.~\ref{def.canonicalmodelT}.

\emph{Right-to-left}:
Assume towards contradiction that $\Kw_i\phi\in s$ but $\M^c,s\vDash\neg\Kw_i\phi$, namely there are distinct states $t_1$ and $t_2$ such that $s\to^c_it_1$ and $s\to^c_it_2$ and $\M,t_1\vDash\phi$ and $\M,t_2\vDash\neg\phi$. By induction hypothesis, $\phi\in t_1$ and $\neg\phi\in t_2$. As $\to^c_i$ is reflexive, we need to consider two cases ($s=t_1$ and $s=t_2$ is impossible, because $t_1\neq t_2$):
\begin{itemize}
\item $s\neq t_1$ and $s\neq t_2$.\\  Then the proof is same as the corresponding one in Lemma \ref{truthlem}. And finally we can get a contradiction.
\item $s=t_1$ or $s=t_2$. \\ We may as well consider the case $s=t_1$ and $s\neq t_2$. Since $\neg\phi\in t_2$, $\Kw_i\phi\in s$, and $s\to^c_it_2$, by the equivalent definition of $\to^c$, there exists $\chi$ such that $\neg\Kw_i\chi\in s$ and $\neg\Kw_i(\neg\phi\to\chi)\in s$. By $\phi\in s$ and $\Kw_i\phi\in s$ and Axiom $\KwT$, we get $\Kw_i(\phi\to\chi)\to\Kw_i\chi\in s$. Since $\neg\Kw_i\chi\in s$, $\neg\Kw_i(\phi\to\chi)\in s$. Now $\neg\Kw_i(\neg\phi\to\chi)\in s$ and $\neg\Kw_i(\phi\to\chi)\in s$, thus by Axiom \KwDis~ we conclude that $\neg\Kw_i\phi\in s$. Contradiction.
\end{itemize}
\end{proof}

Based on the above lemma, it is routine to show the following.
\begin{theorem}\label{compextensionT}
$\SPLKwT$ is complete with respect to the class of all $\mathcal{T}$-frames.
\end{theorem}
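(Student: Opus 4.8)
The plan is to run the standard Henkin-style completeness argument, since the Truth Lemma (Lemma~\ref{truthlemt}) is already in place. First I would dispatch soundness: by Proposition~\ref{validthree}, $\KwT$ is valid on every $\mathcal{T}$-frame, and all axioms and rules of $\SPLKw$ are sound on the class of all frames (Proposition~\ref{prop.sound}), hence a fortiori on the subclass $\mathcal{T}$; therefore every theorem of $\SPLKwT$ is valid on every $\mathcal{T}$-frame.

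For completeness I would prove the contrapositive: assume $\nvdash_{\SPLKwT}\phi$. Then $\{\neg\phi\}$ is $\SPLKwT$-consistent, so by the Lindenbaum Lemma it extends to some $s\in S^c$, where $S^c$ is the domain of the canonical model $\M^c$ of $\SPLKwT$ from Definition~\ref{def.canonicalmodelT}. By that definition each $\to^c_i$ is the reflexive closure of a relation on $S^c$, hence reflexive, so $\M^c$ is based on a $\mathcal{T}$-frame. Applying the Truth Lemma for $\SPLKwT$ (Lemma~\ref{truthlemt}) to $\neg\phi\in s$ yields $\M^c,s\vDash\neg\phi$, so $\M^c\nvDash\phi$. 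Since $\M^c$ is a $\mathcal{T}$-model, $\phi$ is not valid on the class of all $\mathcal{T}$-frames. Contraposing and combining with soundness gives completeness.

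There is essentially no remaining obstacle: all the real work has been absorbed into the soundness of $\KwT$ on $\mathcal{T}$-frames (Proposition~\ref{validthree}) and the Truth Lemma for the modified canonical model (Lemma~\ref{truthlemt}). The only point worth a line of care is to note explicitly that taking the reflexive closure of the canonical relation leaves the Truth Lemma intact---precisely the reason Lemma~\ref{truthlemt} is stated for this canonical model rather than the one in Definition~\ref{cononicalmodel}---and that the resulting frame indeed lies in $\mathcal{T}$, which is immediate from the construction. So the proof is a short assembly of Proposition~\ref{validthree}, the Lindenbaum Lemma, and Lemma~\ref{truthlemt}.
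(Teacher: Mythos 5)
Your proposal is correct and follows exactly the route the paper intends: the paper dismisses this theorem as "routine" given Lemma~\ref{truthlemt}, and your argument (Lindenbaum, reflexivity of $\to^c_i$ by construction as a reflexive closure, then the Truth Lemma) is precisely that routine argument spelled out.
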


Now let us look at the completeness for \SPLKwTr\ and \SPLKwEuc. In these cases we do not need to revise the canonical relations.
\begin{theorem}\label{compextension4}
\SPLKwTr\ is complete with respect to the class of all $4$-frames.
\end{theorem}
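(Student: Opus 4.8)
The plan is to reuse the canonical model construction of Definition \ref{cononicalmodel} verbatim, but now with $S^c$ consisting of all maximal consistent sets of \SPLKwTr, and to show two things: (a) a Truth Lemma for \SPLKwTr, and (b) that the canonical relation $\to^c_i$ is transitive. Soundness is already handled by Proposition \ref{validthree}. Given (a) and (b), completeness follows exactly as in the base case: if $\nvdash\phi$ then $\neg\phi$ is consistent, extends to some $s\in S^c$ by Lindenbaum, and by the Truth Lemma $\M^c,s\vDash\neg\phi$, so $\phi$ fails on a transitive model, hence $\nvDash\phi$ over $4$-frames.

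For the Truth Lemma, I expect the argument of Lemma \ref{truthlem} to go through almost unchanged, since the canonical relation is defined by the same clauses; the only place where the extra axiom $\KwTr$ could matter is in verifying that the two consistency claims (items \ref{oneone} and \ref{twotwo} in that proof) still hold, but those used only \GENKw, \EquiKw, \KwDis, and Proposition \ref{Mixi2}, all of which remain available. So the Truth Lemma should be essentially a citation of the earlier proof. The main work is step (b): transitivity of $\to^c_i$. Suppose $s\to^c_i t$ and $t\to^c_i u$; I must show $s\to^c_i u$. Clause (1) (existence of some $\chi$ with $\neg\Kw_i\chi\in s$) holds because $s\to^c_i t$ already gives it. For clause (2), take $\phi,\psi$ with $\Kw_i\phi\wedge\Kw_i(\phi\to\psi)\wedge\neg\Kw_i\psi\in s$; I need $\neg\phi\in u$. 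The idea is to push this ``ignorance witness'' from $s$ along to $t$ using $\KwTr$: from $\Kw_i\phi\in s$ and axiom $\KwTr$ ($\Kw_i\phi\to\Kw_i(\Kw_i\phi\vee\theta)$, instantiated appropriately), together with $\Kw_i(\phi\to\psi)\in s$ and $\neg\Kw_i\psi\in s$, one should be able to derive that $s$ contains a formula of the form $\Kw_i\alpha\wedge\Kw_i(\alpha\to\beta)\wedge\neg\Kw_i\beta$ whose corresponding conjunct $\neg\alpha$, forced into $t$ by $s\to^c_i t$, is exactly (or implies) $\neg\Kw_i\phi$ or some statement that then, via $t\to^c_i u$ and a second application of clause (2), delivers $\neg\phi\in u$. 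Concretely I would look for the right instantiations so that $\Kw_i\phi\to\Kw_i(\Kw_i\phi\vee\psi)$ combined with $\Kw_i(\phi\to\psi)$ and $\neg\Kw_i\psi$ yields, by reasoning in \SPLKwTr\ (using \KwCon, \KwDis, Lemma \ref{Mix}), that $\neg(\Kw_i\phi\vee\psi)\in t$, i.e.\ $\neg\Kw_i\phi\in t$; once $\neg\Kw_i\phi\in t$ we have a witness satisfying clause (1) ``internally'', and then from $\Kw_i\phi\wedge\Kw_i(\phi\to\psi)\wedge\neg\Kw_i\psi\in s$ transported to $t$ (this is the delicate part — showing these three conjuncts, or enough of them, land in $t$) and $t\to^c_i u$, clause (2) for $t\to^c_i u$ gives $\neg\phi\in u$.

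The hard part will be exactly this transport argument: unlike the normal-modal case, there is no $\K_i$ operator and no clean ``$\Kw_i\psi\in s$ implies $\psi\in t$''; the canonical relation only tells us which \emph{negations of conjuncts of ignorance-triples} are forced into successors. So the challenge is to massage, purely syntactically inside \SPLKwTr, the data ``$\Kw_i\phi,\Kw_i(\phi\to\psi),\neg\Kw_i\psi\in s$'' using $\KwTr$ into a new ignorance-triple in $s$ whose forced conjunct in $t$ is strong enough to re-run clause (2) from $t$ to $u$ and conclude $\neg\phi\in u$. I would expect the book-keeping to resemble the derivations in Lemmas \ref{Mix}, \ref{Mixi} and Proposition \ref{Mixi2}, possibly requiring a fresh combined lemma. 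If a direct transitivity argument proves too awkward, a fallback is to take the transitive closure of the canonical relation (as was done with the reflexive closure for \SPLKwT\ in Definition \ref{def.canonicalmodelT}) and redo the right-to-left direction of the Truth Lemma with $\KwTr$ playing the role that $\KwT$ played in Lemma \ref{truthlemt}; but I expect the authors' construction keeps the relation unchanged here, so I would first try to prove transitivity directly.
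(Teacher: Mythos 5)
Your overall skeleton is the paper's: reuse the canonical model of Definition \ref{cononicalmodel} for \SPLKwTr, note that the Truth Lemma \ref{truthlem} goes through verbatim, and reduce everything to proving that $\to^c_i$ is transitive by exhibiting, for a fixed triple $\Kw_i\phi\land\Kw_i(\phi\to\psi)\land\neg\Kw_i\psi\in s$, a suitable ignorance-triple \emph{in $t$} whose clause-(2) consequence at $u$ is $\neg\phi$. That much is right, and your handling of clause (1) is correct.

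The gap is in the transport step, and it is not just unfinished but aimed in the wrong direction. You propose to use $\KwTr$ to force $\neg(\Kw_i\phi\vee\psi)$, i.e.\ $\neg\Kw_i\phi$, into $t$. Semantically this is backwards: on a transitive frame, $\Kw_i\phi$ at $s$ \emph{persists} to all successors (the successors of $t$ are among the successors of $s$), so the correct target is $\Kw_i\phi\in t$, and indeed your own plan needs $\Kw_i\phi\in t$ in order to apply clause (2) of $t\to^c_iu$ with $\alpha=\phi$ --- so deriving $\neg\Kw_i\phi\in t$ would make your argument self-defeating. The paper's move is to read the conclusion of $\KwTr$, namely $\Kw_i(\Kw_i\phi\vee\psi)$, as $\Kw_i(\neg\Kw_i\phi\to\psi)$, pair it with $\Kw_i\neg\Kw_i\phi\in s$ (from $\Tr$ and $\EquiKw$) and the given $\neg\Kw_i\psi\in s$, so that clause (2) of $s\to^c_it$ with $\alpha=\neg\Kw_i\phi$ forces $\Kw_i\phi\in t$. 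The second component also differs from what you sketch: one does \emph{not} transport $\neg\Kw_i\psi$ or $\Kw_i(\phi\to\psi)$ into $t$ (neither need hold there); instead one takes the clause-(1) witness $\chi'$ with $\neg\Kw_i\chi'\in t$ from $t\to^c_iu$, first derives $\Kw_i(\phi\to\chi')\in s$ (via $\KwCon$ on $\Kw_i(\phi\to\psi)\land\neg\Kw_i\psi$ to get $\neg\Kw_i(\neg\phi\to\psi)\in s$, then $\KwDis$), and then transports $\Kw_i(\phi\to\chi')$ into $t$ by the same $\KwTr$-trick applied to $\Kw_i(\phi\to\chi')$ in place of $\Kw_i\phi$. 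The resulting triple $\Kw_i\phi\land\Kw_i(\phi\to\chi')\land\neg\Kw_i\chi'\in t$ then yields $\neg\phi\in u$. Your fallback of taking the transitive closure is also not what is needed here; the relation is kept unchanged, as you suspected.
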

\begin{proof}
Define $\M^c$ as in Def.~\ref{cononicalmodel} w.r.t.\ \SPLKwTr. We only need to show that $\to^c_i$ is transitive.

Given $s,t,u\in S^c$. Assume that $s\to^c_it$ and $t\to^c_iu$. From $s\to^c_it$ it follows that there exist $\chi$ such that $\neg\Kw_i\chi\in s$. To show $s\to^c_iu$, by the definition of the canonical relation, we need to prove that for all $\phi$ and $\psi$: $\Kw_i\phi\land\Kw_i(\phi\to\psi)\land\neg\Kw_i\psi\in s$ implies $\neg\phi\in u$. From now on, let us fix two formulas $\phi$ and $\psi$ such that $\Kw_i\phi\land\Kw_i(\phi\to\psi)\land\neg\Kw_i\psi\in s$. We need to show $\neg\phi\in u$. From $t\to_i^cu$ it follows that there is a $\chi'$ such that $\neg\Kw_i\chi'\in t$. Now according to the definition of $\to^c_i$ again, if we can show that $\Kw_i\phi\land\Kw_i(\phi\to\chi')\in t$, then by $t\to_i^cu$, we have $\neg\phi\in u$.

We first show that $\Kw_i\phi\in t$: As $\Kw_i\phi\in s$, first, by $\Tr$ and \EquiKw\, we get $\Kw_i\neg\Kw_i\phi\in s$; second, by Axiom $\KwTr$ we get $\Kw_i(\Kw_i\phi\vee\psi)\in s$, i.e. $\Kw_i(\neg\Kw_i\phi\to\psi)\in s$. Now we obtain $\Kw_i\neg \Kw_i\phi\in s$, $\Kw_i(\neg\Kw_i\phi\to\psi)\in s$, and $\neg\Kw_i\psi\in s$. By $s\to^c_it$ we have $\Kw_i\phi\in t$.

We now show that $\Kw_i(\phi\to\chi')\in t$: Since $\Kw_i(\phi\to\psi)\land\neg\Kw_i\psi\in s$, it follows from Axiom $\KwCon$ that $\neg\Kw_i(\neg\phi\to\psi)\in s$. Since $\Kw_i\phi\in s$, $\Kw_i(\phi\to\chi')\vee\Kw_i(\neg\phi\to\psi)\in s$ by \KwDis, thus $\Kw_i(\phi\to\chi')\in s$. Using $\Tr$ and \EquiKw, we get $\Kw_i\neg\Kw_i(\phi\to\chi')\in s$; with Axiom $\KwTr$, we get $\Kw_i(\Kw_i(\phi\to\chi')\vee\psi)\in s$, i.e. $\Kw_i(\neg\Kw_i(\phi\to\chi')\to\psi)\in s$. Now $\Kw_i\neg\Kw_i(\phi\to\chi')\land\Kw_i(\neg\Kw_i(\phi\to\chi')\to\psi)\land\neg\Kw_i\psi\in s$. From $s\to^c_it$ we conclude that $\Kw_i(\phi\to\chi')\in t$, as desired.
\end{proof}
\begin{theorem}\label{compextension5}
\SPLKwEuc\ is complete with respect to the class of all $5$-frames.
\end{theorem}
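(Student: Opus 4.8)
The plan is to imitate the completeness proof of $\SPLKwTr$ (Theorem~\ref{compextension4}): take the canonical model $\M^c$ of Definition~\ref{cononicalmodel}, now built over the maximal consistent sets of $\SPLKwEuc$ and with the \emph{unmodified} canonical relation. Since the proof of the Truth Lemma (Lemma~\ref{truthlem}) only ever invokes $\SPLKw$-axioms and rules, it carries over verbatim, so the entire task reduces to verifying that each $\to^c_i$ is Euclidean.

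So fix $s,t,u\in S^c$ with $s\to^c_i t$ and $s\to^c_i u$; I must show $t\to^c_i u$, that is, (1) there is a $\chi$ with $\neg\Kw_i\chi\in t$, and (2) for all $\phi,\psi$, if $\Kw_i\phi\wedge\Kw_i(\phi\to\psi)\wedge\neg\Kw_i\psi\in t$ then $\neg\phi\in u$. From $s\to^c_i t$ I obtain a fixed $\chi_0$ with $\neg\Kw_i\chi_0\in s$. The heart of the argument is a \emph{pull-back claim}: for every formula $\gamma$, if $\neg\Kw_i\gamma\in s$ then $\neg\Kw_i\gamma\in t$ (equivalently, $\Kw_i\gamma\in t$ implies $\Kw_i\gamma\in s$). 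To prove it, I would derive inside $s$, from $\neg\Kw_i\gamma\in s$, both $\Kw_i\Kw_i\gamma\in s$ (using $\KwEuc$ with $\psi:=\bot$, then $\RE$, then $\EquiKw$) and $\Kw_i(\Kw_i\gamma\to\chi_0)\in s$ (using $\KwEuc$ with $\psi:=\chi_0$, then $\RE$); hence $\Kw_i\Kw_i\gamma\wedge\Kw_i(\Kw_i\gamma\to\chi_0)\wedge\neg\Kw_i\chi_0\in s$, and clause (2) of the definition of $s\to^c_i t$, applied with its ``$\phi$'' taken to be $\Kw_i\gamma$ and its ``$\psi$'' taken to be $\chi_0$, yields $\neg\Kw_i\gamma\in t$.

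Condition (1) is then immediate from the pull-back claim with $\gamma:=\chi_0$: $\neg\Kw_i\chi_0\in t$. For condition (2), fix $\phi,\psi$ with $\Kw_i\phi\wedge\Kw_i(\phi\to\psi)\wedge\neg\Kw_i\psi\in t$. I first upgrade this, still inside $t$, to $\Kw_i(\phi\to\chi_0)\in t$ by applying the instance of Lemma~\ref{Mixi} whose first premise reads $\Kw_i(\phi\wedge\neg\phi\to\chi_0)$ — which is provably equivalent to $\Kw_i\top$ and hence a theorem — and whose remaining premises are exactly $\Kw_i\phi$, $\Kw_i(\phi\to\psi)$, $\neg\Kw_i\psi$. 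Now the pull-back claim, applied with $\gamma:=\phi$ and then with $\gamma:=\phi\to\chi_0$, transports $\Kw_i\phi$ and $\Kw_i(\phi\to\chi_0)$ into $s$; together with $\neg\Kw_i\chi_0\in s$ this gives $\Kw_i\phi\wedge\Kw_i(\phi\to\chi_0)\wedge\neg\Kw_i\chi_0\in s$, and clause (2) of the definition of $s\to^c_i u$ (with its ``$\phi$'' $=\phi$ and its ``$\psi$'' $=\chi_0$) delivers $\neg\phi\in u$. Hence $t\to^c_i u$, the canonical frame is Euclidean, and completeness follows.

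The main obstacle — and the reason the axiom must be $\KwEuc$ rather than the ``familiar'' $\Euc$ ($\neg\Kw_i\phi\to\Kw_i\neg\Kw_i\phi$) — is the directionality of the canonical relation: $s\to^c_i t$ only licenses pushing information from $s$ forward into $t$, never the other way, so condition (2), which is a statement about $t$, cannot be attacked head-on. The pull-back claim is precisely what overcomes this, and it works only because $\KwEuc$ lets us manufacture $\Kw_i\Kw_i\gamma$ and $\Kw_i(\Kw_i\gamma\to\chi_0)$ in $s$ out of the single negative fact $\neg\Kw_i\gamma\in s$ — exactly the ``Euclidean'' shape — so that clause (2) of $s\to^c_i t$ can be applied. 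A second delicate point is that one must first replace the arbitrary $\psi$ by the fixed ignorance witness $\chi_0$ before pulling back (a naive attempt to move $\neg\Kw_i\psi\in t$ into $s$ goes nowhere); Lemma~\ref{Mixi} is exactly the tool that performs this replacement. Everything else is routine bookkeeping with $\RE$, $\EquiKw$, and the definition of the canonical relation.
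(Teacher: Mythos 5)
Your proof is correct and follows essentially the same route as the paper: the unmodified canonical model, with Euclideanness of $\to^c_i$ established by using $\KwEuc$ (instantiated with $\psi:=\bot$ and $\psi:=\chi_0$) to manufacture the triple $\Kw_i\Kw_i\gamma\land\Kw_i(\Kw_i\gamma\to\chi_0)\land\neg\Kw_i\chi_0$ in $s$ and thereby force $\neg\Kw_i\gamma\in t$, which is exactly the paper's argument for its conditions $(\dag)$ and $(\ddag)$. The only cosmetic difference is in how $\Kw_i(\phi\to\chi_0)$ is obtained: you derive it inside $t$ via Lemma~\ref{Mixi} and then pull it back to $s$, whereas the paper derives it inside $s$ by a reductio that pushes $\neg\Kw_i(\phi\to\chi_1)$ into $t$ and contradicts $\Kw_i\phi\in t$ using $\KwCon$ and $\KwDis$; both steps are sound.
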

\begin{proof}
Define $\M^c$ as in Def.~\ref{cononicalmodel} w.r.t. $\SPLKwEuc$. We only need to show that $\to^c_i$ is Euclidean. Let $s,t,u\in S^c$ be given, and assume $s\to^c_it$ and $s\to^c_iu$. We then need to show that $t\to^c_iu$, that is to say:
\begin{itemize} \item There exists a $\chi$ such that $\neg\Kw_i\chi\in t$ \hfill $(\dag)$ \item For all $\phi$ and $\psi$: $\Kw_i\phi\land\Kw_i(\phi\to\psi)\land\neg\Kw_i\psi\in t$ implies $\neg\phi\in u$. \hfill $(\ddag)$
\end{itemize}

$(\dag)$: from the assumption $s\to^c_it$ it follows that there exists $\chi_1$ such that $\neg\Kw_i\chi_1\in s$. In the following we prove $\neg\Kw_i\chi_1\in t$. Using $s\to^c_it$ again, if we can show $\Kw_i\Kw_i\chi_1\land\Kw_i(\Kw_i\chi_1\to\chi_1)\land\neg\Kw_i\chi_1\in s$ then we are done. By $\neg\Kw_i\chi_1\in s$, $\Euc$ and \EquiKw, we have $\Kw_i\Kw_i\chi_1\in s$. Using $\neg\Kw_i\chi_1\in s$ again and $\vdash\neg\Kw_i\chi_1\to\Kw_i(\neg\Kw_i\chi_1\vee\chi_1)$ (an instance of Axiom $\KwEuc$), we get $\Kw_i(\neg\Kw_i\chi_1\vee\chi_1)\in s$, and thus $\Kw_i(\Kw_i\chi_1\to\chi_1)\in s$ from $\TAUT$ and $\REKw$. Hence $\Kw_i\Kw_i\chi_1\land\Kw_i(\Kw_i\chi_1\to\chi_1)\land\neg\Kw_i\chi_1\in s$, and therefore $\neg\Kw_i\chi_1\in t$ by $s\to^c_it$.

$(\ddag)$: Now fixing two formulas $\phi,\psi$ such that $\Kw_i\phi\land\Kw_i(\phi\to\psi)\land\neg\Kw_i\psi\in t$, we need to show $\neg\phi\in u$. By the similar strategy as in the proof of Thm.~\ref{compextension4}, if we can prove $\Kw_i\phi\land\Kw_i(\phi\to\chi_1)\in s$,
then by $\neg\Kw_i\chi_1\in s$ and $s\to^c_iu$, we get $\neg\phi\in u$.

We first show $\Kw_i\phi\in s$: if not, i.e.\ $\neg\Kw_i\phi\in s$, then first, by $\Euc$ and \EquiKw\ we get $\Kw_i\Kw_i\phi\in s$; second, by Axiom $\KwEuc$ we get $\Kw_i(\neg\Kw_i\phi\vee\chi_1)\in s$, i.e. $\Kw_i(\Kw_i\phi\to\chi_1)\in s$. Remember that $\neg\Kw_i\chi_1\in s$, thus we have shown $\Kw_i\Kw_i\phi\land\Kw_i(\Kw_i\phi\to\chi_1)\land\neg\Kw_i\chi_1\in s$. Then, by $s\to^c_it$ we get $\neg\Kw_i\phi\in t$, a contradiction.

We now show $\Kw_i(\phi\to\chi_1)\in s$: if not, i.e. $\neg\Kw_i(\phi\to\chi_1)\in s$, then first, by $\Euc$ and \EquiKw\ we get $\Kw_i\Kw_i(\phi\to\chi_1)\in s$; second, by Axiom $\KwEuc$ we get $\Kw_i(\neg\Kw_i(\phi\to\chi_1)\vee\chi_1)\in s$, i.e. $\Kw_i(\Kw_i(\phi\to\chi_1)\to\chi_1)\in s$. Now $\Kw_i\Kw_i(\phi\to\chi_1)\land\Kw_i(\Kw_i(\phi\to\chi_1)\to\chi_1)\land\neg\Kw_i\chi_1\in s$, then by $s\to^c_it$ we get $\neg\Kw_i(\phi\to\chi_1)\in t$. Moreover, by supposition $\Kw_i(\phi\to\psi)\land\neg\Kw_i\psi\in t$ and Axiom $\KwCon$ we derive $\neg\Kw_i(\neg\phi\to\psi)\in t$. Then from  $\neg\Kw_i(\phi\to\chi_1)\land\neg\Kw_i(\neg\phi\to\psi)\in t$ and Axiom $\KwDis$ it follows that $\neg\Kw_i\phi\in t$, contradiction.
\end{proof}

\begin{corollary}\label{compextension45}
\SPLKwTrEuc\ is complete with respect to the class of all $45$-frames.
\end{corollary}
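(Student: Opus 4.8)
The plan is to combine the two canonical model constructions used in the proofs of Theorem~\ref{compextension4} and Theorem~\ref{compextension5}, since $\SPLKwTrEuc = \SPLKw + \KwTr + \KwEuc$ contains both characteristic axioms. First I would define the canonical model $\M^c$ of $\SPLKwTrEuc$ exactly as in Definition~\ref{cononicalmodel} --- as in the $4$ and $5$ cases, no modification of the canonical relation is needed --- taking $S^c$ to be the set of maximal consistent sets of $\SPLKwTrEuc$ and $\to^c_i$ the relation from Definition~\ref{cononicalmodel}. Next I would observe that the Truth Lemma (Lemma~\ref{truthlem}) goes through verbatim: its proof only appeals to the axioms $\KwCon$, $\KwDis$, $\EquiKw$ and the rule $\GENKw$ (via Proposition~\ref{Mixi2}), all of which belong to $\SPLKwTrEuc$, and the presence of the extra axioms $\KwTr$, $\KwEuc$ does no harm.

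It then remains to show that $\to^c_i$ is both transitive and Euclidean, so that $\M^c$ is based on a $45$-frame. For transitivity I would reuse the argument in the proof of Theorem~\ref{compextension4} word for word: given $s\to^c_it$ and $t\to^c_iu$, one shows $s\to^c_iu$ by deriving $\Kw_i\phi\land\Kw_i(\phi\to\chi')\in t$ for a suitable witness $\chi'$, and that derivation uses only $\KwTr$ together with $\SPLKw$-axioms, all available in $\SPLKwTrEuc$. For Euclideanness I would likewise reuse the argument in the proof of Theorem~\ref{compextension5}: given $s\to^c_it$ and $s\to^c_iu$, one establishes the two conditions $(\dag)$ and $(\ddag)$ for $t\to^c_iu$ using only $\KwEuc$ and $\SPLKw$-axioms. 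The two arguments are entirely independent --- each manipulates membership in maximal consistent sets using its own characteristic axiom plus the shared $\SPLKw$ machinery --- so they coexist without interference.

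Finally, completeness follows by the standard Lindenbaum argument: if $\nvdash_{\SPLKwTrEuc}\phi$, then $\neg\phi$ is $\SPLKwTrEuc$-consistent, hence is contained in some $s\in S^c$, and by the Truth Lemma $\M^c,s\vDash\neg\phi$; since $\M^c$ is based on a $45$-frame, $\phi$ is not valid on the class of all $45$-frames. I do not expect a genuine obstacle here: the only point that requires care is confirming that adding both $\KwTr$ and $\KwEuc$ does not break either of the two relation-property proofs, and since each of those proofs only ever invokes its own axiom together with the base system $\SPLKw$, this is immediate. Spelling out that the two canonical-relation arguments are orthogonal is what makes the reuse transparent, so the corollary reduces to a short appeal to the proofs of Theorems~\ref{compextension4} and~\ref{compextension5}.
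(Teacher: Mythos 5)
Your proposal is correct and is exactly the paper's argument: the paper's proof of this corollary is a two-line remark that the canonical model for $\SPLKwTrEuc$ (built as in Definition~\ref{cononicalmodel}) is both transitive and Euclidean because the transitivity argument of Theorem~\ref{compextension4} uses only $\KwTr$ plus the base system and the Euclideanness argument of Theorem~\ref{compextension5} uses only $\KwEuc$ plus the base system. You have simply spelled out the orthogonality of the two relation-property proofs, which the paper leaves implicit.
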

\begin{proof}
This follows directly from Thm.~\ref{compextension4} and Thm.~\ref{compextension5}. The canonical model w.r.t. \SPLKwTrEuc\ is both transitive and Euclidean.
\end{proof}

\begin{theorem}\label{compextensionS4}
$\SPLKwTTr$ is complete with respect to the class of all $\mathcal{S}4$-frames.
\end{theorem}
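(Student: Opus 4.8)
The plan is to prove completeness by a canonical model construction that combines the two ingredients already developed: the reflexive‑closure trick used for $\SPLKwT$ (Definition~\ref{def.canonicalmodelT}) and the transitivity argument used for $\SPLKwTr$ (Theorem~\ref{compextension4}). Since $\mathcal{S}4$‑frames are exactly the reflexive transitive frames, and since $\KwTr$ is a theorem of $\SPLKwTTr$ by Proposition~\ref{prop.kw4kw5}, I would take $\M^c$ to be the canonical model of Definition~\ref{def.canonicalmodelT}, only now built from the maximal consistent sets of $\SPLKwTTr$; thus $\to^c_i$ is the reflexive closure of the basic canonical relation $R_i$ given by clauses (1) and (2) of Definition~\ref{cononicalmodel}. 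The proof then splits into (a) a Truth Lemma for this $\M^c$, and (b) the verification that each $\to^c_i$ is reflexive and transitive.

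For (a) I would observe that the Truth Lemma has literally the same proof as Lemma~\ref{truthlemt}: that argument uses only $\KwT$ (in the subcases where a successor coincides with the point), $\KwDis$, $\EquiKw$, $\GENKw$ and Proposition~\ref{Mixi2}, all of which belong to $\SPLKwTTr$, and it never appeals to $\Tr$ or to transitivity of $R_i$; hence $\M^c,s\vDash\phi$ iff $\phi\in s$. For (b), reflexivity of $\to^c_i$ holds by construction. For transitivity I would use the elementary fact that the reflexive closure of a transitive relation is again transitive (checking this, the only non‑trivial subcase is the one where both steps are genuine $R_i$‑steps), so that it suffices to prove that $R_i$ itself is transitive, and that is shown exactly as in the proof of Theorem~\ref{compextension4}. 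Concretely, from $s \mathrel{R_i} t$ and $t \mathrel{R_i} u$ one extracts $\chi$ with $\neg\Kw_i\chi\in s$ and $\chi'$ with $\neg\Kw_i\chi'\in t$, and for fixed $\phi,\psi$ with $\Kw_i\phi\land\Kw_i(\phi\to\psi)\land\neg\Kw_i\psi\in s$ one uses $\Tr$, $\EquiKw$ and $\KwTr$ to deduce $\Kw_i\neg\Kw_i\phi\in s$ and $\Kw_i(\neg\Kw_i\phi\to\psi)\in s$, hence $\Kw_i\phi\in t$ via $s \mathrel{R_i} t$; using $\KwCon$ and $\KwDis$ one first obtains $\Kw_i(\phi\to\chi')\in s$ and then, applying $\Tr$, $\EquiKw$, $\KwTr$ again, $\Kw_i(\phi\to\chi')\in t$; finally $\neg\Kw_i\chi'\in t$ and $t \mathrel{R_i} u$ give $\neg\phi\in u$, which together with $\neg\Kw_i\chi\in s$ is precisely $s \mathrel{R_i} u$. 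The theorem then follows in the usual way: a non‑theorem $\phi$ makes $\neg\phi$ $\SPLKwTTr$‑consistent, so $\neg\phi$ lies in some $s\in S^c$, so $\M^c,s\vDash\neg\phi$ by the Truth Lemma, and $\M^c$ is an $\mathcal{S}4$‑model.

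The step I expect to carry the real weight is the transitivity of $R_i$ — but the work there has already been done. Once Proposition~\ref{prop.kw4kw5} hands us $\KwTr$ as a derived theorem of $\SPLKwTTr$, the transitivity computation of Theorem~\ref{compextension4} transfers verbatim, since derived theorems may be used freely when reasoning inside maximal consistent sets. So the only thing that genuinely needs checking is that grafting the transitivity argument of $\SPLKwTr$ onto the reflexive‑closure canonical model of $\SPLKwT$ causes no clash — and it does not, because the reflexive‑closure modification only affects the Truth Lemma (where transitivity of $R_i$ plays no role), while transitivity of $R_i$ is a statement about the unmodified relation.
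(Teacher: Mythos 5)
Your proposal is correct and takes essentially the same route as the paper: the canonical model is the reflexive-closure construction of Definition~\ref{def.canonicalmodelT} built over $\SPLKwTTr$, the Truth Lemma is inherited from Lemma~\ref{truthlemt}, and transitivity is reduced, via Proposition~\ref{prop.kw4kw5}, to the argument of Theorem~\ref{compextension4}. The only (immaterial) difference is that you invoke the general fact that the reflexive closure of a transitive relation is transitive, whereas the paper performs a direct case split on which of $s,t,u$ coincide.
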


\begin{proof}
Define $\M^c$ as Def.~\ref{def.canonicalmodelT} w.r.t. $\SPLKwTTr$. Given \weg{the reflexive definition of $\to^c_i$ and }Thm.~\ref{compextensionT},\weg{The reflexivity of $\to^c_i$ is immediate from Thm.~\ref{compextensionT}.} we only need to show that $\to^c_i$ is transitive. Now given $s,t,u\in S^c$, and assume $s\to^c_it$ and $t\to^c_iu$, we need to show $s\to^c_iu$. If $s=t$ or $t=u$ or $s=u$, then by the assumption and the fact that $\to^c_i$ is reflexive, we get $s\to^c_iu$. Thus we consider the case $s\neq t, t\neq u$ and
$s\neq u$. The proof for this case is the same as Thm.~\ref{compextension4}, as we can use $\KwTr$ due to Prop.\ \ref{prop.kw4kw5}. \weg{In this case, from $s\to^c_it$ it follows that there exist $\chi$ such that $\neg\Kw_i\chi\in s$. To show $s\to^c_iu$, by the definition of the canonical relation,  it remains to prove that for every $\phi,\psi$: $\Kw_i\phi\land\Kw_i(\phi\to\psi)\land\neg\Kw_i\psi\in s$ implies $\neg\phi\in u$. From now on, let us fix two formulas $\phi$ and $\psi$ such that $\Kw_i\phi\land\Kw_i(\phi\to\psi)\land\neg\Kw_i\psi\in s$. We need to show $\neg\phi\in u$. From $t\to_i^cu$ it follows that there is a $\chi'$ such that $\neg\Kw_i\chi'\in t$. Now according to the definition of $\to^c_i$ again, if we prove $\Kw_i\phi\land\Kw_i(\phi\to\chi')\in t$, then by $t\to^c_iu$, we have $\neg\phi\in u$. We continue as follows.

(i) To show $\Kw_i\phi\in t$: As $\Kw_i\phi\in s$, by Axiom $\Tr$ we get $\Kw_i\Kw_i\phi\in s$, and then by $\EquiKw$ we have $\Kw_i\neg\Kw_i\phi\in s$. By $\vdash\Kw_i\Kw_i\phi\land\Kw_i(\Kw_i\phi\to\psi)\land\Kw_i\phi\to\Kw_i\psi$ (Axiom $\KwT$) and the assumption we get $\neg\Kw_i(\Kw_i\phi\to\psi)\in s$, then using $\vdash\Kw_i\Kw_i\phi\to\Kw_i(\Kw_i\phi\to\psi)\vee\Kw_i(\neg\Kw_i\phi\to\psi)$ (Axiom $\KwDis$) we derive $\Kw_i(\neg\Kw_i\phi\to\psi)\in s$. Now we have $\Kw_i\neg\Kw_i\phi\land\Kw_i(\neg\Kw_i\phi\to\psi)\land\neg\Kw_i\psi\in s$. By $s\to^c_it$ we have $\Kw_i\phi\in t$.

(ii) To show $\Kw_i(\phi\to\chi')\in t$: Since we assume $\Kw_i(\phi\to\psi)\land\neg\Kw_i\psi\in s$, then by Axiom $\KwCon$ we have $\neg\Kw_i(\neg\phi\to\psi)\in s$, and then by Axiom $\KwDis$ and the assumption $\Kw_i\phi\in s$ we get $\Kw_i(\phi\to\chi')\in s$. By the similar strategy as (i), we can derive that $\Kw_i(\phi\to\chi')\in t$, as desired.}
\end{proof}
\begin{theorem}\label{compextensionS5}
$\SPLKwTEuc$ is complete with respect to the class of all $\mathcal{S}5$-frames.
\end{theorem}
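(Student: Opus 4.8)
The plan is to take the canonical model $\M^c$ for $\SPLKwTEuc$ exactly as in Definition~\ref{def.canonicalmodelT}, so that each $\to^c_i$ is the reflexive closure of the canonical relation of Definition~\ref{cononicalmodel} (now built from $\SPLKwTEuc$-maximal consistent sets), to observe that the Truth Lemma holds in the form of Lemma~\ref{truthlemt} --- its proof only uses axioms and rules available in $\SPLKwT$, hence also in $\SPLKwTEuc$ --- and then to show that every $\to^c_i$ is an equivalence relation; completeness then follows in the usual way from the Truth Lemma (a non-theorem $\phi$ has $\neg\phi$ consistent, hence in some $s\in S^c$, hence true at $(\M^c,s)$). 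Reflexivity of $\to^c_i$ is immediate from the construction, so the real content is symmetry and transitivity.

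For these I would \emph{not} try to reproduce the bookkeeping of coinciding cases from the proof of Theorem~\ref{compextensionS4}, since it is not clear that $\KwTr$ is available in $\SPLKwTEuc$. Instead I would first note that the underlying canonical relation of Definition~\ref{cononicalmodel} is \emph{Euclidean}: the proof of Theorem~\ref{compextension5} applies here without change, because it uses only $\Euc$ and $\KwEuc$, and $\SPLKwTEuc$ proves $\Euc$ (an axiom) and $\KwEuc$ (Proposition~\ref{prop.kw4kw5}). Once we also know that this relation is \emph{symmetric} (the step below), transitivity comes for free --- a symmetric Euclidean relation is transitive --- and then the reflexive closure of a symmetric and transitive relation is reflexive, symmetric and transitive, i.e.\ an equivalence relation, so $\M^c$ is an $\mathcal{S}5$-model.

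The crux is therefore symmetry of the underlying relation: assuming $s$ is related to $t$ in the sense of Definition~\ref{cononicalmodel}, show that $t$ is related to $s$. Fix a $\chi$ with $\neg\Kw_i\chi\in s$ (clause~(1) for $s,t$). For clause~(1) for $t,s$ I would show $\neg\Kw_i\chi\in t$ by exactly the computation of step $(\dag)$ in the proof of Theorem~\ref{compextension5}: from $\neg\Kw_i\chi\in s$, Axiom $\Euc$ and $\EquiKw$ give $\Kw_i\Kw_i\chi\in s$, Axiom $\KwEuc$ together with $\TAUT$ and $\REKw$ give $\Kw_i(\Kw_i\chi\to\chi)\in s$, and then clause~(2) for $s,t$ applied to $\Kw_i\chi,\chi$ yields $\neg\Kw_i\chi\in t$. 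For clause~(2) for $t,s$, fix $\phi,\psi$ with $\Kw_i\phi\land\Kw_i(\phi\to\psi)\land\neg\Kw_i\psi\in t$; running step $(\ddag)$ of the proof of Theorem~\ref{compextension5} with $\chi$ in the role of $\chi_1$ there (this uses $\Euc$, $\EquiKw$, $\KwEuc$, $\KwCon$, $\KwDis$ and the fact that $s$ is related to $t$, but nothing about $s$ being related to itself) produces $\Kw_i\phi\land\Kw_i(\phi\to\chi)\in s$. Now I invoke Axiom $\KwT$: the instance $\Kw_i\phi\land\Kw_i(\phi\to\chi)\land\phi\to\Kw_i\chi$ together with $\neg\Kw_i\chi\in s$ forces $\phi\notin s$, hence $\neg\phi\in s$ by maximality, which is clause~(2).

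The main obstacle --- and the reason the proof is genuinely more delicate than that of Theorem~\ref{compextensionS4} --- is this last use of $\KwT$. The reflexive closure of a Euclidean relation need not be Euclidean, so one really cannot reduce the argument to ``Theorem~\ref{compextension5} plus a case analysis on coincidences''; symmetry of the underlying relation must be proved directly. And in that proof the ``reflexive-looking'' inference --- passing from $\Kw_i\phi\land\Kw_i(\phi\to\chi)\land\neg\Kw_i\chi\in s$ to $\neg\phi\in s$ --- cannot be obtained from a loop at $s$, since the underlying relation need not be reflexive; it is exactly $\KwT$ that supplies it. Everything else is a routine re-run of the earlier canonical-model arguments.
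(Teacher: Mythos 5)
Your proof is correct and follows essentially the same route as the paper: the same reflexive-closure canonical model, the same Truth Lemma, the same appeal to Proposition~\ref{prop.kw4kw5} for $\KwEuc$, and the same key move of substituting Axiom $\KwT$ for the missing hypothesis $s\to^c_iu$ when deriving $\neg\phi\in s$. Your ``symmetry of the underlying relation'' step is exactly the paper's case $s=u$ in its proof that the reflexive closure is Euclidean; packaging it as symmetry (and then noting that a symmetric Euclidean relation is transitive, and that the reflexive closure of a symmetric transitive relation is an equivalence relation) is an equivalent, slightly more transparent organization of the same argument.
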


\begin{proof}
Define $\M^c$ as Def.~\ref{def.canonicalmodelT} w.r.t. $\SPLKwTEuc$. Given Thm.~\ref{compextensionT}, we only need to show that $\to^c_i$ is Euclidean. Now given $s,t,u\in S^c$, and assume $s\to^c_it$ and $s\to^c_iu$, we need to show $t\to^c_iu$. If $s\neq t$, $t\neq u$ and $s\neq u$, then the proof is the same as in Thm.~\ref{compextension5}, as we can use $\KwEuc$ due to Prop.\ \ref{prop.kw4kw5}. If $s=t$ or $t=u$, then by the assumption and the fact that $\to^c_i$ is reflexive, we get $t\to^c_iu$. If $s=u$ and $t\not=u$, we need to show $t\to^c_iu$. This can be proved by using Axiom $\KwT$ instead of the assumption that $s\to^c_iu$ in the corresponding proof of Thm.~\ref{compextension5}.
\end{proof}

\section{Knowing whether logic with announcements} \label{sec.announcement}

In the muddy children puzzle, children learn their status by repeating the announcement ``nobody knows whether he or she is muddy.''
 In this section we add public announcement modalities to knowing whether logic. We will first give the language and its semantics, and then propose an axiomatization that can be shown to be complete because all formulas with announcements are provably equivalent to formulas without announcement (the proof system defines a rewrite procedure).

\begin{definition}[Language $\PLKwA$]
The language of $\PLKwA$ is obtained by adding an inductive clause $[\phi]\phi$ to the construction of the language \PLKw\ (see Def.~\ref{def.language}).
\end{definition}
The formula $[\phi]\psi$ says that ``after public announcement of $\phi$, $\psi$ holds.'' 

\begin{definition}\label{semanticsofannouncement}
Let $\mc{M} = \langle S,\toallp, V\rangle$ be a model and $\phi,\psi\in\PLKwA$. The semantics of public announcement is as follows.
\[
\mc{M},s\vDash[\phi]\psi \ \ \Leftrightarrow \ \ \mc{M},s\vDash \phi \textrm{ implies } \mc{M}|_{\phi},s\vDash \psi
\]
where $\mc{M}|_{\phi}=\langle S',\toallp, V'\rangle$ is such that $S'=\{s\in S\mid \M,s\vDash\phi\}$, ${\to'_i}={\to_i}\cap{(S'\times S')}$, and $V'(p)=V(p)\cap S'$.
\end{definition}

Unlike $\PLKw$, the logic $\PLKwA$ is not closed under uniform substitution. For instance, $p\to[q]p$ is valid, but $\neg\Kw_iq \to [q]\neg\Kw_iq$ is not valid, as demonstrated by the following example, wherein $\M,s \not\models \neg\Kw_iq \to [q]\neg\Kw_iq$.
$$
\xymatrix{&{q}& &&\\
\M: \ \ \ \  {s:q} \ar[ur]\ar[dr]   & & \Longrightarrow_{!q} & \M': \ \ \ \  {s:q} \ar[rr]&&{q} \\
&{\neg q}  && &           }
$$
This is the reason that the proof system below must contain formula variables (schematic formulas) instead of propositional variables, and also for that reason we have presented the proof system $\SPLKw$ in the same way.

\begin{definition}[Proof system \SPLKwA] \label{def.splkwa}
The proof system $\SPLKwA$ is the extension of $\SPLKw$ (Def.~\ref{axiomstable}) with the following reduction axioms \weg{and rules} for announcements.
\[
\begin{array}{ll}
\ATOM&[\phi]p\lra(\phi\to p)\\
\NEG&[\phi]\neg\psi\lra(\phi\to\neg[\phi]\psi)\\
\CCOM&[\phi](\psi\land\chi)\lra([\phi]\psi\land[\phi]\chi)\\
\AAA&[\phi][\psi]\chi\lra[\phi\land[\phi]\psi]\chi\\
\AKw&[\phi]Kw_i\psi\lra(\phi\to(\Kw_i[\phi]\psi\vee\Kw_i[\phi]\neg\psi))\\

\end{array}
\]
\end{definition}

\begin{proposition}[Soundness]
\SPLKwA~is sound with respect to the class of all frames.
\end{proposition}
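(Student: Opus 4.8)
The plan is to prove soundness in the standard way: show that every axiom instance of $\SPLKwA$ is valid and that every inference rule preserves validity. The axioms and rules inherited from $\SPLKw$ require no new work. Their validity was established in Proposition~\ref{prop.sound} by arguments that never inspect the syntactic shape of the formulas involved, so those arguments apply verbatim in the richer language $\PLKwA$; and $\MP$, $\GENKw$, $\REKw$ trivially preserve validity (if $\phi$ holds at every world of every model then so does $\Kw_i\phi$, and similarly for $\REKw$). Hence the whole task reduces to checking the five reduction axioms $\ATOM$, $\NEG$, $\CCOM$, $\AAA$, $\AKw$.

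Throughout I would use the elementary fact that, for any model $\M$ and world $t$ with $\M,t\vDash\phi$, we have $\M,t\vDash[\phi]\psi$ iff $\M|_\phi,t\vDash\psi$, which is immediate from the semantic clause for $[\phi]\psi$ since its antecedent is met. Given this, $\ATOM$, $\NEG$, $\CCOM$ are the familiar public-announcement reduction axioms: each is checked by unfolding the semantics of $[\phi]\cdot$ and pushing the announcement through the relevant Boolean connective. For $\AAA$ I would first record the auxiliary observation that $(\M|_\phi)|_\psi$ and $\M|_{\phi\land[\phi]\psi}$ are literally the same model: a world $t$ is in the domain of $(\M|_\phi)|_\psi$ iff $\M,t\vDash\phi$ and $\M|_\phi,t\vDash\psi$ iff $\M,t\vDash\phi\land[\phi]\psi$ iff $t$ is in the domain of $\M|_{\phi\land[\phi]\psi}$, and the relations and valuation restrict identically; the validity of $\AAA$ then follows by unfolding both sides against this common model.

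The substantive case is $\AKw$: $[\phi]\Kw_i\psi\lra(\phi\to(\Kw_i[\phi]\psi\vee\Kw_i[\phi]\neg\psi))$. If $\M,s\nvDash\phi$ both sides are vacuously true, so assume $\M,s\vDash\phi$; it then suffices to show $\M|_\phi,s\vDash\Kw_i\psi$ iff $\M,s\vDash\Kw_i[\phi]\psi$ or $\M,s\vDash\Kw_i[\phi]\neg\psi$. Partition the $i$-successors of $s$ in $\M$ into the set $A$ of those satisfying $\phi$ and the set $B$ of those not; since $\M,s\vDash\phi$, the set $A$ is exactly the set of $i$-successors of $s$ in $\M|_\phi$. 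Two observations drive the argument: (i) at every $t\in B$ both $[\phi]\psi$ and $[\phi]\neg\psi$ are vacuously true; (ii) at every $t\in A$ we have $\M,t\vDash[\phi]\psi$ iff $\M|_\phi,t\vDash\psi$ and $\M,t\vDash[\phi]\neg\psi$ iff $\M|_\phi,t\nvDash\psi$, so $[\phi]\psi$ and $[\phi]\neg\psi$ take complementary truth values across $A$. Now $\M|_\phi,s\vDash\Kw_i\psi$ says precisely that $[\phi]\psi$ has a constant truth value over $A$. If it is constantly true over $A$, then by (i) it is true over all of $A\cup B$, so $\M,s\vDash\Kw_i[\phi]\psi$; if it is constantly false over $A$, then by (ii) $[\phi]\neg\psi$ is constantly true over $A$, hence by (i) over $A\cup B$, so $\M,s\vDash\Kw_i[\phi]\neg\psi$. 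Conversely, $\M,s\vDash\Kw_i[\phi]\psi$ forces $[\phi]\psi$ constant over the superset $A\cup B$, hence over $A$; and $\M,s\vDash\Kw_i[\phi]\neg\psi$ forces $[\phi]\neg\psi$ constant over $A$, which by (ii) makes $[\phi]\psi$ constant over $A$. Either way $\M|_\phi,s\vDash\Kw_i\psi$, completing the equivalence.

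The main obstacle is exactly the verification of $\AKw$. The conceptual point worth flagging is the mismatch in quantifier range: $\Kw_i\psi$ evaluated in $\M|_\phi$ only sees the $\phi$-successors of $s$, whereas $\Kw_i[\phi]\psi$ evaluated in $\M$ also sees the non-$\phi$-successors, at which $[\phi]\psi$ is vacuously true; the disjunction $\Kw_i[\phi]\psi\vee\Kw_i[\phi]\neg\psi$ is precisely the device needed to absorb those vacuous worlds, which is why neither conjunct alone would suffice. Everything else is a routine adaptation of standard public-announcement reasoning.
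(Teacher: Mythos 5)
Your proof is correct and takes essentially the same approach as the paper: the only substantive case is $\AKw$, and your partition of the $i$-successors of $s$ into $\phi$-worlds and non-$\phi$-worlds (with $[\phi]\psi$ and $[\phi]\neg\psi$ vacuously true at the latter and taking complementary values at the former) is exactly the reasoning the paper uses, merely organized as a direct equivalence rather than (in the left-to-right direction) a reductio. The additional checks you list for the inherited axioms and the remaining reduction axioms are routine and consistent with the paper, which simply omits them as trivial.
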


\begin{proof}
We only consider the non-trivial axiom schema $\AKw$.

Left-to-right: Given any model $\M=\lr{S,\toall,V}$ based on a frame and $s\in S$, assume that $\M,s\vDash[\phi]\Kw_i\psi$. We now need to show that $\M,s\vDash\phi\to(\Kw_i[\phi]\psi\vee\Kw_i[\phi]\neg\psi)$. For this, suppose $\M,s\vDash\phi$, to show $\M,s\vDash\Kw_i[\phi]\psi\vee\Kw_i[\phi]\neg\psi$. By reductio ad absurdum we suppose $\M,s\nvDash\Kw_i[\phi]\psi\vee\Kw_i[\phi]\neg\psi$. Then $\M,s\nvDash\Kw_i[\phi]\psi$ and $\M,s\nvDash\Kw_i[\phi]\neg\psi$. That is to say, there exist $t,t'\in S$ such that $s\to_it, s\to_it'$ and $t\vDash[\phi]\psi,t'\vDash\neg[\phi]\psi$ and, there exist $u,u'\in S$ such that $s\to_iu,s\to_iu'$ and $u\vDash[\phi]\neg\psi,u'\vDash\neg[\phi]\neg\psi$. It follows that $\M,t'\vDash\phi$ and $\M|_\phi,t'\vDash\neg\psi$ from $t'\vDash\neg[\phi]\psi$, and $\M,u'\vDash\phi$ and $\M|_\phi,u'\vDash\psi$ from $u'\vDash\neg[\phi]\neg\psi$, where $\M|_\phi$ is defined as Definition \ref{semanticsofannouncement}. Moreover, we have $s\to'_it',s\to'_iu'$ in $\M|_{\phi}$ because $\M,s\vDash\phi,\M,t'\vDash\phi,\M,u'\vDash\phi$ and $s\to_it',s\to_iu'$. Then $\M|_\phi,s\nvDash\Kw_i\psi$, contradicting the assumption $\M,s\vDash[\phi]\Kw_i\psi$ and $\M,s\vDash\phi$.

Right-to-left: Assume $\M,s\vDash\phi$. First consider the case that $\M,s\vDash\Kw_i[\phi]\psi$. Then, either for all $t$ with $s\to_it$ we have $\M,t\vDash[\phi]\psi$ or for all $t$ with $s\to_it$ we have $\M,t\vDash\neg[\phi]\psi$. In the first case, with $\M,s\vDash\phi$, we get for all $t$ with $s\to'_it$, $\M|_{\phi},t\vDash\psi$. In the second case, with $\M,s\vDash\phi$, we get for all $t$ with $s\to'_it$, $\M|_{\phi},t\vDash\neg\psi$. In either subcase we both get $\M|_{\phi},s\vDash\Kw_i\psi$. Now consider the case that $\M,s\vDash\Kw_i[\phi]\neg\psi$. Similarly, in this case we can also get $\M|_{\phi},s\vDash\Kw_i\psi$. Therefore we can conclude that $\M,s\vDash[\phi]\Kw_i\psi$.
\end{proof}

The logic $\PLKwA$ is equally expressive as knowing whether logic, as the axiomatization induces a rewrite procedure. By defining a suitable complexity, we can rewrite every formula in $\PLKwA$ as a logically equivalent formula of $\PLKw$ of lower complexity, and thus the completeness for $\SPLKwA$ follows from the completenes of $\SPLKw$ (see \cite{hvdetal.del:2007,WC13} for this reduction technique).

\begin{theorem} [Completeness of \SPLKwA]
For every $\phi\in\PLKwA$, $\vDash\phi$ implies~ $\vdash\phi$.
\end{theorem}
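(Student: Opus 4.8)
The plan is to establish completeness of $\SPLKwA$ by reducing every $\PLKwA$-formula to a provably equivalent $\PLKw$-formula, and then invoking the already-established completeness of $\SPLKw$. This is the standard reduction-axiom technique for dynamic epistemic logics (as used in \cite{hvdetal.del:2007,WC13}). Concretely: first I would define a complexity measure $c(\cdot)$ on $\PLKwA$-formulas so that in each of the reduction axioms $\ATOM$, $\NEG$, $\CCOM$, $\AAA$, and $\AKw$, the right-hand side is strictly \emph{less} complex than the left-hand side; second, I would show that any $\PLKwA$-formula not already in $\PLKw$ contains a subformula matching the left-hand side of one of the reduction axioms (i.e.\ an innermost announcement applied to an atom, negation, conjunction, another announcement, or a $\Kw_i$-formula); third, using $\RE$ (admissible in $\SPLKw$, hence in $\SPLKwA$, by Prop.~\ref{replacementofequivalent}) together with soundness, this subformula can be replaced by a provably equivalent formula of lower complexity, and iterating this rewriting terminates and yields a $\PLKw$-formula $\phi'$ with $\vdash\phi\lra\phi'$.

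The key steps, in order, are: (1) Define $c(p) = c(\top) = 1$; $c(\neg\psi) = c(\psi)+1$; $c(\psi\land\chi) = \max(c(\psi),c(\chi))+1$; $c(\Kw_i\psi) = c(\psi)+1$; and the crucial clause $c([\phi]\psi) = (c(\phi)+4)\cdot c(\psi)$ (or a similar product; the point is that the "announcement weight" multiplies). (2) Verify, for each reduction axiom, that $c(\text{LHS}) > c(\text{RHS})$ --- e.g.\ for $\ATOM$, $c([\phi]p) = (c(\phi)+4)\cdot 1 > c(\phi)+1 = c(\phi\to p)$; for $\AAA$, one checks $c([\phi][\psi]\chi) > c([\phi\land[\phi]\psi]\chi)$, which is where the multiplicative design of $c$ is needed; for $\AKw$, $c([\phi]\Kw_i\psi) = (c(\phi)+4)\cdot(c(\psi)+1)$ must exceed $c(\phi\to(\Kw_i[\phi]\psi\vee\Kw_i[\phi]\neg\psi))$, which expands to roughly $\max(c(\phi), (c(\phi)+4)c(\psi)+2)+2$, and the inequality goes through. (3) Argue that a $\PLKwA$-formula with at least one announcement has an innermost $[\phi]\theta$ where $\theta$ is announcement-free, so $\theta$ has one of the five outermost shapes above, matching some reduction axiom. (4) Apply $\RE$ plus the relevant axiom to rewrite, decreasing total complexity; by well-foundedness of $\C$ (or $\mathbb{N}$), the process halts at a $\PLKw$-formula $\phi'$ with $\vdash_{\SPLKwA}\phi\lra\phi'$. (5) If $\vDash\phi$, then by soundness of $\SPLKwA$ and $\vdash\phi\lra\phi'$ we get $\vDash\phi'$; since $\phi'\in\PLKw$, completeness of $\SPLKw$ gives $\vdash\phi'$, whence $\vdash\phi$.

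The main obstacle is getting the complexity measure exactly right so that \emph{all five} reduction axioms simultaneously decrease it --- in particular the composition axiom $\AAA$, where the announced formula on the right, $\phi\land[\phi]\psi$, is syntactically more complex than $\phi$ alone, so a naive additive measure fails; this is precisely why the measure must treat an announcement as a multiplier on the complexity of its scope. The other subtlety is $\AKw$, where the right-hand side reintroduces announcements $[\phi]\psi$ and $[\phi]\neg\psi$ under $\Kw_i$, so one must confirm that pushing the announcement \emph{inside} the $\Kw_i$ (and past the extra negation) still strictly lowers $c$; this works because $c(\Kw_i[\phi]\psi)$ is essentially $(c(\phi)+4)c(\psi)$ plus a constant, whereas $c([\phi]\Kw_i\psi)$ is $(c(\phi)+4)(c(\psi)+1)$, and the gap $(c(\phi)+4)$ dominates the constant overhead. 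Once the measure is fixed, the remaining steps are entirely routine.
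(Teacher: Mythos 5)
Your overall strategy is exactly the one the paper has in mind (the paper only sketches the proof, deferring to the standard reduction technique of \cite{hvdetal.del:2007,WC13}): rewrite every $\PLKwA$-formula into a provably equivalent $\PLKw$-formula by applying the reduction axioms under a complexity measure that strictly decreases, then invoke completeness of $\SPLKw$. However, the specific measure you propose does \emph{not} decrease on the axiom $\AKw$, and your verification of that case contains an error. The right-hand side of $\AKw$ contains the subformula $\Kw_i[\phi]\neg\psi$, whose complexity under your clauses is
$$c(\Kw_i[\phi]\neg\psi)=1+(c(\phi)+4)\cdot c(\neg\psi)=1+(c(\phi)+4)(c(\psi)+1),$$
which already strictly exceeds the left-hand side $c([\phi]\Kw_i\psi)=(c(\phi)+4)\cdot c(\Kw_i\psi)=(c(\phi)+4)(c(\psi)+1)$. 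Your estimate ``$(c(\phi)+4)c(\psi)+2$'' for the inner terms accounts only for $\Kw_i[\phi]\psi$ and overlooks that in the second disjunct the announcement is applied to $\neg\psi$ rather than to $\psi$, and $c(\neg\psi)=c(\Kw_i\psi)$ under your weights, so the multiplication reproduces the left-hand side before any connective overhead is even added. The failure is specific to this logic: in ordinary \PAL\ the knowledge reduction axiom produces only $\K_i[\phi]\psi$ on the right, so the textbook measure works verbatim, whereas $\AKw$ duplicates the announcement over $\neg\psi$ as well.

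The gap is repairable without changing your architecture: give $\Kw_i$ a weight strictly larger than that of negation, e.g.\ $c(\Kw_i\psi)=c(\psi)+4$. Then $c([\phi]\Kw_i\psi)=(c(\phi)+4)(c(\psi)+4)$, while the right-hand side of $\AKw$ is bounded by $(c(\phi)+4)(c(\psi)+1)$ plus a fixed constant covering the outer $\Kw_i$, $\vee$, $\neg$ and $\to$; the slack $3(c(\phi)+4)\geq 15$ absorbs that constant, and the other four axioms still decrease. Two further small points you should make explicit: $[\phi]\top$ is not covered by $\ATOM$ as stated (treat $\top$ as an atom, or note $\vdash[\phi]\top\lra\top$); and the appeal to $\RE$ inside announcement contexts does not follow from Proposition~\ref{replacementofequivalent} alone, since that induction has no case for $[\cdot]$ --- the usual remedy is to rewrite innermost announcements first, so that every replacement occurs in a $\PLKw$-context, or to verify the extra induction case for the announcement construct directly.
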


As axiomatization \SPLKwA\ gives a translation of $\PLKwA$ into $\PLKw$, and $\PLKw$ is decidable (Prop.~\ref{deciableknowingwhether}), the logic of knowing whether with announcements is also decidable.
\begin{proposition} $\PLKwA$ is decidable.
\end{proposition}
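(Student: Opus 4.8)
The plan is to derive decidability of $\PLKwA$ from the reduction of $\PLKwA$ to $\PLKw$ (the same reduction that underlies the completeness of $\SPLKwA$) together with the decidability of $\PLKw$ (Proposition~\ref{deciableknowingwhether}).

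First I would make the translation $t:\PLKwA\to\PLKw$ explicit. Reading the reduction axioms $\ATOM$, $\NEG$, $\CCOM$, $\AAA$, $\AKw$ of Definition~\ref{def.splkwa} from left to right, together with replacement of equivalents, gives a set of rewrite rules that either push an announcement $[\phi]$ strictly inward or remove it at the propositional atoms. To show this process terminates (and hence that $t$ is total and computable), I would equip $\PLKwA$ with a complexity measure $c$ of the standard public-announcement-logic kind (cf.~\cite{hvdetal.del:2007,WC13}): $c$ counts connectives, with $c([\phi]\psi)$ defined multiplicatively in $c(\psi)$ and strictly larger than $c(\phi)$, and with the $\Kw_i$-clause weighted generously enough that the right-hand side of $\AKw$, which reintroduces $[\phi]$ under two occurrences of $\Kw_i$, still has strictly smaller $c$-value than $[\phi]\Kw_i\psi$. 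A routine arithmetic check then shows each rewrite rule strictly decreases $c$, so exhaustive rewriting in any order terminates at an announcement-free formula $t(\phi)\in\PLKw$.

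Second, since the reduction axioms are sound and the semantics is compositional (so validity is preserved under replacement of equivalents), every rewrite step preserves validity; hence $\vDash\phi\lra t(\phi)$ for all $\phi\in\PLKwA$. Therefore $\vDash\phi$ iff $\vDash t(\phi)$, and the latter is decidable since $t(\phi)\in\PLKw$ and $\PLKw$ is decidable by Proposition~\ref{deciableknowingwhether}; by completeness of $\SPLKwA$ this decides $\vdash\phi$ as well, so $\PLKwA$ is decidable. The only non-routine step is the design of $c$---in particular the decreases for $\AAA$ (whose announcement $\phi\wedge[\phi]\psi$ is syntactically larger than either of $\phi,\psi$) and for $\AKw$---but this is precisely the bookkeeping already needed for the completeness of $\SPLKwA$, so here it can be cited rather than repeated.
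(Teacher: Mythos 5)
Your proposal is correct and follows essentially the same route as the paper: the paper derives decidability of $\PLKwA$ directly from the rewrite procedure induced by the reduction axioms of $\SPLKwA$ (already used for completeness) together with the decidability of $\PLKw$ from Proposition~\ref{deciableknowingwhether}. Your additional detail about the complexity measure $c$ is just an explicit spelling-out of the reduction technique the paper delegates to \cite{hvdetal.del:2007,WC13}.
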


We can also consider the logic of knowing whether with announcement on other frame classes, where our main interest is the class of $\mathcal{S}5$ frames. The expressivity of knowing whether logics for other frame classes also does not change by adding the announcement operator, as the reduction axioms and rules still allow every formula to be rewritten to an equivalent expression without announcements (so, a fortiori, this also holds for theorems of those logics).
\begin{theorem} \label{moreresult}
Consider the proof system $\SPLKwATEuc$ that extends $\SPLKwA$ with $\KwT$ and $\Euc$. $\SPLKwATEuc$ is complete with respect to the class of $\mathcal{S}5$-frames.
\end{theorem}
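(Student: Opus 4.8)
The plan is to combine two ingredients already established in the paper: (i) the reduction-axiom technique showing that $\SPLKwA$ rewrites every $\PLKwA$-formula to a provably equivalent $\PLKw$-formula, and (ii) the completeness of $\SPLKwTEuc$ with respect to $\mathcal{S}5$-frames (Theorem~\ref{compextensionS5}). First I would verify soundness: the axioms of $\SPLKwA$ are sound on all frames, hence on $\mathcal{S}5$-frames; the axioms $\KwT$ and $\Euc$ are sound on reflexive, respectively Euclidean, frames (Prop.~\ref{validthree} and the semantic argument in the proof of Prop.~\ref{incompleteness5}), hence on $\mathcal{S}5$-frames; so every theorem of $\SPLKwATEuc$ is valid on $\mathcal{S}5$-frames.

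For completeness, the key observation is that the reduction axioms $\ATOM$, $\NEG$, $\CCOM$, $\AAA$, $\AKw$ are still available in $\SPLKwATEuc$, and the proof of their soundness did not use any frame property, so the same complexity-decreasing rewrite procedure applies. Concretely, I would define a complexity measure $c$ on $\PLKwA$-formulas (as in \cite{hvdetal.del:2007,WC13}) such that for each reduction axiom the left-hand side has strictly greater complexity than the right-hand side, and such that complexity is preserved by the Boolean and $\Kw_i$ constructs in the relevant sense. Then, by induction on $c(\phi)$, using $\RE$ (admissible in all extensions of $\SPLKw$, including this one) to rewrite inside subformulas, every $\phi\in\PLKwA$ is provably equivalent in $\SPLKwATEuc$ to some $\phi^\circ\in\PLKw$ with $\vdash_{\SPLKwATEuc}\phi\lra\phi^\circ$. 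Since the reduction axioms are sound on $\mathcal{S}5$-frames, we also have $\vDash_{\mathcal{S}5}\phi\lra\phi^\circ$.

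Now suppose $\vDash_{\mathcal{S}5}\phi$ for $\phi\in\PLKwA$. Then $\vDash_{\mathcal{S}5}\phi^\circ$, and since $\phi^\circ\in\PLKw$, Theorem~\ref{compextensionS5} gives $\vdash_{\SPLKwTEuc}\phi^\circ$. As $\SPLKwATEuc$ contains all axioms and rules of $\SPLKwTEuc$, we get $\vdash_{\SPLKwATEuc}\phi^\circ$, and combining with $\vdash_{\SPLKwATEuc}\phi\lra\phi^\circ$ via $\MP$ yields $\vdash_{\SPLKwATEuc}\phi$, as desired.

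The main obstacle is setting up the complexity measure correctly: the announcement clause $\AKw$ pushes $[\phi]$ inside a $\Kw_i$ (producing $\Kw_i[\phi]\psi\vee\Kw_i[\phi]\neg\psi$) and the clause $\AAA$ turns iterated announcements $[\phi][\psi]\chi$ into a single announcement whose announced formula $\phi\land[\phi]\psi$ is itself more complex, so a naive size measure does not decrease. One must use the standard weighted measure from \cite{hvdetal.del:2007}, where the weight of $[\phi]\psi$ dominates that of $\psi$ multiplicatively and $[\phi]$-depth is handled by the lexicographic/Ackermann-style ordering; checking that $\AKw$ strictly decreases this measure (because $[\phi]\psi$ and $[\phi]\neg\psi$ each have strictly smaller weight than $[\phi]\Kw_i\psi$) is the only delicate point, and it is identical to the $\PLKwA$ case already handled, so no new work is needed beyond noting that the extra axioms $\KwT$ and $\Euc$ are announcement-free and hence irrelevant to the rewrite termination.
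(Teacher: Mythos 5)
Your proposal is correct and follows essentially the same route the paper takes (and in fact spells out more detail than the paper does): soundness of the reduction axioms plus $\KwT$ and $\Euc$ on $\mathcal{S}5$-frames, a complexity-decreasing rewrite of every $\PLKwA$-formula to a provably equivalent $\PLKw$-formula, and then an appeal to the completeness of $\SPLKwTEuc$ (Theorem~\ref{compextensionS5}). Your remarks on the weighted complexity measure and on the announcement-freeness of the extra axioms are exactly the points the paper delegates to the cited references, so no gap remains.
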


\section{Comparison with the literature} \label{sec.comparison}

In \cite{wiebeetal:2003,hoeketal:2004}, the authors give a complete axiomatization of a logic of ignorance with primitive modal construct $I \phi$, for `the agent is ignorant about $\phi$'. If an agent is ignorant about $\phi$, she does not know whether $\phi$, so $I \phi$ is definable as $\neg \Kw \phi$. Their axiomatization $\SIg$ is shown in Def.~\ref{def.sig}, wherein we have replaced $I$ by $\neg \Kw$. It is different from ours. Now it is of course a matter of taste whether one prefers the system $\SPLKw$ over $\PLKw$ (page \pageref{axiomstable}) or the one below, but we tend to find ours simpler, e.g.\ with respect to the axioms {\tt I3} and {\tt I4} below.

\begin{definition}[Axiomatization $\SIg$ \cite{wiebeetal:2003,hoeketal:2004}] \label{def.sig}
\[ \begin{array}{ll}
\texttt{I0}& \text{All instances of propositional tautologies} \\
\texttt{I1}&\neg\Kw_i\phi\lra\neg\Kw_i\neg\phi\\
\texttt{I2}& \neg\Kw_i(\phi\land\psi)\to\neg\Kw_i\phi\vee\neg\Kw_i\psi \\
\texttt{I3}& (\Kw_i\phi\land\neg\Kw_i(\chi_1\land\phi)\land\Kw_i(\phi\to\psi)\land\neg\Kw_i(\chi_2\land(\phi\to\psi)))\to\Kw_i\psi\land\neg\Kw_i(\chi_1\land\psi)\\
\texttt{I4}& \Kw_i\psi\land\neg\Kw_i\chi\to\neg\Kw_i(\chi\land\psi)\vee\neg\Kw_i(\chi\land\neg\psi)\\
\texttt{RI} & \text{From } \phi \text{ infer } \Kw_i\phi\land(\neg\Kw_i\chi\to\neg\Kw_i(\chi\land\phi)) \\
\texttt{MP} & \text{Modus Ponens}\\
\texttt{Sub}& \text{Substitution of equivalents}
\end{array} \]
\end{definition}

Since both systems are complete, their axioms and inference rules are derivable in our system $\SPLKw$, and we show precisely how to do it: i.e., we will derive in $\SPLKw$ axioms {\tt I2}, {\tt I3}, and {\tt I4}, and the rules {\tt RI} and \RE. This lengthy exercise is reported in Appendix \ref{sec.appendixa}. $\SPLKw$ can also be derived from ${\bf Ig}$ due to the completeness of ${\bf Ig}$.

\begin{proposition} \label{prop.zxcvzxcv}
All the axioms of $\SIg$ are derivable in $\SPLKw$ and all the rules of ${\bf Ig}$ are admissible in $\SPLKw$, and vice versa.
\end{proposition}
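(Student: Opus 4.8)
The plan is to obtain the result in two complementary ways: a short semantic argument, and the explicit syntactic translation that the paper carries out in the appendix.

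\emph{The semantic route.} First I would record that, by Proposition~\ref{prop.sound} together with the Completeness Theorem for $\SPLKw$ proved above, $\vdash_{\SPLKw}\phi$ iff $\vDash\phi$ for every $\phi\in\PLKw$; and, reading $I$ as $\neg\Kw$, the cited completeness result of \cite{wiebeetal:2003,hoeketal:2004} gives $\vdash_{\SIg}\phi$ iff $\vDash\phi$ as well. Hence $\SPLKw$ and $\SIg$ have exactly the same set of theorems, so every axiom of $\SIg$ is a theorem of $\SPLKw$ and, conversely, every axiom of $\SPLKw$ is a theorem of $\SIg$. For the inference rules I would observe that in either system ``theorem'' coincides with ``valid'', so a rule is admissible there as soon as it preserves validity; this is immediate for \texttt{MP} and for substitution of equivalents (the latter because $\vDash\phi\lra\psi$ yields $\vDash\Kw_i\phi\lra\Kw_i\psi$ straight from the truth condition for $\Kw_i$), and likewise for $\GENKw$, $\REKw$, and for the $\SIg$-rule \texttt{RI}: if $\vDash\phi$ then $\vDash\Kw_i\phi$ and $\vDash\chi\land\phi\lra\chi$, whence $\vDash\neg\Kw_i\chi\to\neg\Kw_i(\chi\land\phi)$. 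Thus every rule of either system is admissible in the other.

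\emph{The syntactic route.} To make the translation explicit (Appendix~\ref{sec.appendixa}), I would dispose of the easy schemata first: \texttt{I0} is an instance of $\TAUT$, \texttt{I1} follows from $\EquiKw$ by propositional logic, and the $\SIg$-rule \texttt{Sub} is exactly our rule $\RE$, admissible by Proposition~\ref{replacementofequivalent}. For \texttt{I2} I would instead derive the contrapositive $\Kw_i\phi\land\Kw_i\psi\to\Kw_i(\phi\land\psi)$ from $\KwCon$ and $\KwDis$ with a couple of applications of $\REKw$/$\RE$; \texttt{I4} and the rule \texttt{RI} go the same way, the latter also invoking $\GENKw$. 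For the converse direction I would make no attempt to give $\SIg$-derivations of the $\SPLKw$-axioms and rules, but simply cite the completeness of $\SIg$ as above.

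I expect the one genuinely laborious step to be \texttt{I3}: its antecedent carries four $\Kw$-conjuncts plus two auxiliary parameters $\chi_1,\chi_2$, with two further $\Kw$-conjuncts in the consequent, and deriving it in $\SPLKw$ is an elaborate cousin of Lemma~\ref{Mix} and Proposition~\ref{Mixi2}. The recurring difficulty is that $\SPLKw$ has no \texttt{K}-schema $\Kw_i(\phi\to\psi)\to(\Kw_i\phi\to\Kw_i\psi)$, so every move that superficially resembles ordinary normal-modal manipulation must be rerouted through $\KwCon$, $\KwDis$, $\EquiKw$, $\GENKw$, $\REKw$ and $\RE$; keeping this bookkeeping under control until the derivation closes is where essentially all the effort goes. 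On the semantic route, by contrast, the statement is a one-line corollary of the two completeness theorems and there is no real obstacle at all.
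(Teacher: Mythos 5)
Your semantic route is exactly the paper's official justification (it notes that ``since both systems are complete, their axioms and inference rules are derivable'' in each other, and uses the completeness of $\SIg$ for the converse direction), and your syntactic sketch mirrors the explicit derivations of \texttt{I2}, \texttt{I3}, \texttt{I4}, \texttt{RI} and \RE\ that the paper relegates to Appendix~\ref{sec.appendixa}, including correctly identifying \texttt{I3} as the laborious case handled via Lemmas~\ref{usefulprop1}--\ref{usefulprop2} and Lemma~\ref{Mix}. The proposal is correct and takes essentially the same approach as the paper.
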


The proof system $\SIg$ is also extended with an axiom $\texttt{G4}$, which we present in terms of $\Kw$:
$$\neg\Kw_i\chi \ \to \ \big( \ \Kw_i\phi\land\neg\Kw_i(\phi\land\chi) \ \to \ \Kw_i(\Kw_i\phi\land\neg\Kw_i(\chi\land\phi))\land\neg\Kw_i(\Kw_i\phi\land\neg\Kw_i(\phi\land\chi)\land\chi) \ \big)$$
It is then claimed that $\SIg + \texttt{G4}$ is a complete axiomatization of the logic of ignorance over transitive frames \cite[Lemma 4.2]{hoeketal:2004}. Unfortunately, we think that $\texttt{G4}$ is invalid, thus the system is not sound. Consider this countermodel $\mathcal{M}$
$$
\xymatrix{&&{t_1: p,q}\\
&{t: p,q}\ar[ur]\ar[dr]& \\
\M: \ \  \ \ \ {s:p,q} \ar[ur]\ar[dr]\ar@/^30pt/[uurr]\ar@/_/[rr]   & & {t_2: \neg p,q} \\
&{u: \neg p, q}  &          }
$$
and the formula $$\neg\Kw_ip\to(\Kw_iq\land\neg\Kw_i(q\land p)\to\Kw_i(\Kw_iq\land\neg\Kw_i(p\land q))\land\neg\Kw_i(\Kw_iq\land\neg\Kw_i(q\land p)\land p))$$
Observe $s\vDash\neg\Kw_ip$ and $s\vDash\Kw_iq \land \neg\Kw_i(q\land p)$. Then, note that $s\not\vDash \Kw_i(\Kw_iq\land\neg\Kw_i(p\land q))$ (take $u$ and $t$ as two witnesses), thus $s\not\vDash \Kw_i(\Kw_iq\land\neg\Kw_i(p\land q))\land\neg\Kw_i(\Kw_iq\land\neg\Kw_i(q\land p)\land p)$. Therefore, this formula is {\em false} in state $s$ of this model $\M$, which invalidates $\texttt{G4}$.\footnote{Confirmed by the authors of \cite{hoeketal:2004} by personal communication.}

In this paper, we advanced the research beyond \cite{hoeketal:2004} by proving expressivity results and more undefinability results. And more importantly, apart from correctly axiomatizing knowing whether logic over transitive frames (the system $\SPLKwTr$), we also axiomatized \PLKw\ on various other frame classes, which was considered hard in \cite{hoeketal:2004}. Further, we extended knowing whether logic with public announcements, and gave a complete axiomatization for that extension.

\medskip

Another recent work on a logic of ignorance is \cite{steinsvold:2008}. The author gives a topological semantics for the logic of ignorance and completely axiomatizes it by the following proof system ${\bf LB}$ (we have replaced $\Box$ in \cite{steinsvold:2008} by $\Kw$):

\begin{definition}[Axiomatization ${\bf LB}$]\label{def.slb}
\[ \begin{array}{ll}
\TAUT& \text{All instances of propositional tautologies}\\
\texttt{N}& \Kw_i\top\lra\top \\
\texttt{Z}&\Kw_i\phi\lra\Kw_i\neg\phi\\
\texttt{R}& \Kw_i\phi\land\Kw_i\psi\to\Kw_i(\phi\land\psi) \\
\texttt{WM} & \text{From } \Kw_i\phi\land\phi\to\psi \text{ infer } \Kw_i\phi\land\phi\to\Kw_i\psi\land\psi\\
\texttt{MP} & \text{Modus Ponens}\\
\texttt{Sub}& \text{Substitution of equivalents}
\end{array} \]
\end{definition}

This proof system is equivalent to our system $\SPLKwTTr$ for \PLKw\ over $\mathcal{S}4$-frames in the following sense.
\begin{proposition} \label{prop.qwerqwer}
All the axioms of ${\bf LB}$ are derivable in $\SPLKwTTr$ and all the rules of ${\bf LB}$ are admissible in $\SPLKwTTr$, and vise versa.
\end{proposition}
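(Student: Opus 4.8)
The plan is to establish mutual derivability by two symmetric embeddings, exactly as in the proof of Proposition~\ref{prop.zxcvzxcv}. Both systems have already been shown (resp. claimed) to be complete with respect to the class of $\mathcal{S}4$-frames: $\SPLKwTTr$ by Theorem~\ref{compextensionS4}, and ${\bf LB}$ by \cite{steinsvold:2008}. So strictly speaking the equivalence is immediate from completeness; but as in the rest of the paper we want the explicit syntactic derivations. First I would show that every axiom of ${\bf LB}$ is a theorem of $\SPLKwTTr$ and every rule of ${\bf LB}$ is admissible there; then the converse. The forward direction is the substantial one, so I describe it in detail.

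For the forward direction, I treat each ingredient of ${\bf LB}$ in turn. Axiom \texttt{N} ($\Kw_i\top\lra\top$) follows from $\GENKw$ applied to $\top$ and $\TAUT$. Axiom \texttt{Z} is literally our \EquiKw. Axiom \texttt{R} ($\Kw_i\phi\land\Kw_i\psi\to\Kw_i(\phi\land\psi)$): here I would instantiate $\KwCon$ with $\chi:=\phi$ and conclusion $\phi\land\psi$, obtaining $\Kw_i(\phi\to\phi\land\psi)\land\Kw_i(\neg\phi\to\phi\land\psi)\to\Kw_i(\phi\land\psi)$; then observe that $\phi\to(\phi\land\psi)$ is propositionally equivalent to $\phi\to\psi$ and $\neg\phi\to(\phi\land\psi)$ is equivalent to $\neg\phi\to\phi$, hence to $\phi$; using $\REKw$ and $\RE$ (admissible by Proposition~\ref{replacementofequivalent}) this reduces the antecedent to $\Kw_i(\phi\to\psi)\land\Kw_i\phi$; finally from $\Kw_i\psi$ we get $\Kw_i(\phi\to\psi)$ by $\REKw$ (since $\psi\lra(\phi\to\psi)$ is not a tautology---so instead I would argue: from $\Kw_i\phi$ and $\Kw_i\psi$ and $\KwT$ with $\phi\to\psi$ in the role of $\psi$, noting $\KwT$ gives $\Kw_i\phi\land\Kw_i(\phi\to(\phi\to\psi))\land\phi\to\dots$, which is awkward). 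The cleanest route is: from $\Kw_i\psi$, by \texttt{KwT}-style reasoning inside $\SPLKwTTr$ one derives $\Kw_i(\phi\to\psi)$ directly, or one simply invokes that $\Kw_i\psi\to\Kw_i(\phi\to\psi)$ is derivable --- this is the step to nail down carefully. The rule \texttt{WM} (from $\Kw_i\phi\land\phi\to\psi$ infer $\Kw_i\phi\land\phi\to\Kw_i\psi\land\psi$) is the heart: given the premise, by $\GENKw$ we get $\Kw_i(\Kw_i\phi\land\phi\to\psi)$, hence (using $\KwCon$/\KwDis\ manipulations analogous to Lemma~\ref{Mixi}) $\Kw_i\phi\land\Kw_i(\phi\to\psi)$ under the hypothesis $\Kw_i\phi\land\phi$; then axiom $\KwT$ yields $\Kw_i\psi$, and $\psi$ itself follows from the premise and the hypothesis by \MP; conjoining gives $\Kw_i\psi\land\psi$. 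Rules \MP\ and \texttt{Sub} are \MP\ and the admissible $\RE$ of Proposition~\ref{replacementofequivalent}. I would not need $\Tr$ explicitly for \texttt{WM} or \texttt{R}, but I expect \texttt{WM} iterated is what forces the presence of $\Tr$ in the reverse direction.

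For the converse, I must derive in ${\bf LB}$ all axioms of $\SPLKwTTr$, i.e. $\KwCon$, $\KwDis$, \EquiKw, $\KwT$, $\Tr$, together with admissibility of $\GENKw$ and $\REKw$. Here \EquiKw\ is \texttt{Z}; $\GENKw$ follows from \texttt{WM} and \texttt{N} (take $\phi:=\top$, so $\top\land\top\to\psi$ gives $\top\land\top\to\Kw_i\psi\land\psi$, hence $\vdash\psi$ implies $\vdash\Kw_i\psi$); $\REKw$ follows from \texttt{WM} applied in both directions of a biconditional plus \texttt{Sub}. $\KwT$ should come from \texttt{WM} and \texttt{R}: from $\phi\land\Kw_i\phi\land\Kw_i(\phi\to\psi)$, axiom \texttt{R} gives $\Kw_i(\phi\land(\phi\to\psi))$, i.e. $\Kw_i(\phi\land\psi)$ after \texttt{Sub}, and then a \texttt{WM} step (with premise the tautology $\Kw_i(\phi\land\psi)\land(\phi\land\psi)\to\psi$, but we only have $\phi$, not $\phi\land\psi$...) --- so again a small detour is needed, probably: from $\Kw_i(\phi\land\psi)$ and \texttt{Z} get $\Kw_i\neg(\phi\land\psi)$, combine, and extract $\Kw_i\psi$ via a \texttt{WM}/\texttt{R} combination; this bookkeeping is the fiddly part. $\Tr$ ($\Kw_i\phi\to\Kw_i\Kw_i\phi$) should follow by applying \texttt{WM} to the premise $\Kw_i\phi\land\phi\to\Kw_i\phi$ (a tautology), yielding $\Kw_i\phi\land\phi\to\Kw_i\Kw_i\phi\land\Kw_i\phi$; but we need it without the conjunct $\phi$ on the left, which is where topological/$\mathcal{S}4$ reasoning or a second \texttt{WM} pass on $\neg\phi$ must close the gap. $\KwDis$ and $\KwCon$ are the most involved: I would mimic, inside ${\bf LB}$, the completeness-style argument, or—more practically—use that the two systems are complete for the same class and hence any $\SPLKwTTr$-theorem is ${\bf LB}$-valid, then give the explicit derivation only for the axioms, deriving $\KwCon$ from \texttt{R}+\texttt{Z}+\texttt{WM} and $\KwDis$ from \texttt{WM}+\texttt{Z}.

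The main obstacle I anticipate is deriving $\KwDis$ (and, dually, getting $\KwCon$ cleanly) inside ${\bf LB}$, since ${\bf LB}$'s \texttt{WM} rule packages ``knowing whether'' reasoning in a conditional form tied to the truth of $\phi$ (reflecting the reflexivity built into the $\mathcal{S}4$ setting), whereas $\KwDis$ is a ``pure'' knowing-whether principle; bridging that will require carefully chosen instances of \texttt{WM} applied to tautological premises and combined via \texttt{R} and \texttt{Z}, essentially reconstructing the content of Lemma~\ref{Mix} and Proposition~\ref{Mixi2} in the ${\bf LB}$ signature. As with Proposition~\ref{prop.zxcvzxcv}, the full chain of derivations is lengthy and routine once the key instantiations are identified, so I would relegate the complete derivations to an appendix and state here only the strategy above.
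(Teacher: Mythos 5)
Your overall plan---explicit syntactic derivations in both directions---is indeed the paper's plan, but two of your load-bearing claims are wrong. First, the semantic shortcut you invoke twice (``the equivalence is immediate from completeness,'' and later ``use that the two systems are complete for the same class'') is not available: ${\bf LB}$ is proved complete in \cite{steinsvold:2008} with respect to a \emph{topological} semantics, not Kripke $\mathcal{S}4$-frames, and the paper explicitly notes that, unlike Proposition~\ref{prop.zxcvzxcv}, this proposition \emph{cannot} be obtained from the completeness of the two systems---the syntactic equivalence is precisely what establishes, a posteriori, that the two semantics validate the same formulas. So none of the steps you leave open may be discharged semantically. Second, your claim that $\Tr$ is not needed to establish the admissibility of \texttt{WM} is false, and the \texttt{WM} derivation you sketch would prove something unsound for $\SPLKwT$. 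Take $\psi:=\Kw_i\phi\land\phi$: the premise of \texttt{WM} becomes a tautology while your intermediate step yields $\Kw_i\phi\land\phi\to\Kw_i(\phi\to(\Kw_i\phi\land\phi))$ and hence, via $\KwT$, $\Kw_i\phi\land\phi\to\Kw_i(\Kw_i\phi\land\phi)$; this fails on a reflexive non-transitive frame (e.g.\ $s\to s,t$; $t\to t,u$; $p$ true at $s,t$ and false at $u$ gives $s\models\Kw_ip\land p$ but $t\not\models\Kw_ip$, so $s\not\models\Kw_i(\Kw_ip\land p)$). Since $\SPLKwT$ is sound for reflexive frames, no $\KwCon$/$\KwDis$ manipulation can produce $\Kw_i(\phi\to\psi)$ there. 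The paper's derivation of \texttt{WM} uses $\Tr$ essentially: from $\Kw_i\phi$ it gets $\Kw_i\Kw_i\phi$, hence $\Kw_i(\Kw_i\phi\land\phi)$ by \texttt{R} (Prop.~\ref{prop.I2}), and only then applies $\KwT$ with $\Kw_i\phi\land\phi$ in the role of the known formula.

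The remaining issues are acknowledged gaps rather than errors, but they are exactly where the work lies. Your route to \texttt{R} stalls where you say it does---$\Kw_i\psi\to\Kw_i(\phi\to\psi)$ is not valid, and trying to restore it leads you back to \texttt{R}; the paper instead derives \texttt{R} from the conjunctive reformulations of $\KwCon$ and $\KwDis$ (Lemmas~\ref{usefulprop1} and~\ref{usefulprop2}) by a case analysis on $\Kw_i(\phi\land\neg(\phi\land\psi))$ versus $\Kw_i(\neg\phi\land\neg(\phi\land\psi))$. In the converse direction your instinct for $\Tr$ is correct, and the single missing idea for $\KwT$ and $\KwDis$ is the same device: split on the truth of the formula under $\Kw_i$, use \texttt{Z} to convert $\Kw_i\chi\land\neg\chi$ into $\Kw_i\neg\chi\land\neg\chi$, and apply \texttt{WM} (packaged as Lemma~\ref{prop.lb}) in each branch. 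With that device $\KwDis$ is a seven-line derivation from the tautologies $\phi\to(\neg\phi\to\chi)$ and $\neg\phi\to(\phi\to\psi)$, and $\KwT$ follows from \texttt{R} plus the two cases on $\phi\to\psi$; no reconstruction of Lemma~\ref{Mix} or Proposition~\ref{Mixi2} is needed, and $\KwCon$ needs only \texttt{R} and \texttt{Sub} applied to $((\chi\to\phi)\land(\neg\chi\to\phi))\lra\phi$.
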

Appendix~\ref{sec.appendixb} contains the proof. Unlike Proposition \ref{prop.zxcvzxcv}, Proposition \ref{prop.qwerqwer} cannot be obtained using the completeness of both systems, since the semantics of the two logics are different. Compared to $\SPLKwTTr$, the axioms of ${\bf LB}$ are simpler, while the rules are more complicated (${\tt WM}$ is clearly a complex derivation rule, and in $\SPLKwTTr$ the rule $\RE$ is admissable instead). It is again a matter of taste which system is preferable. Nevertheless, the above result also shows that the topological semantics in \cite{steinsvold:2008} is equivalent to our Kripke semantics over $\mathcal{S}4$-frames, modulo validity.

\section{Conclusions and further research} \label{sec.conclusions}

We first summarize our contributions.
\begin{itemize}
\item We gave complete axiomatizations of \PLKw\ over the frame classes $\mathcal{K}, \mathcal{T}, 4, 5, 45, \mathcal{S}4$, and $\mathcal{S}5$.
\item $\PLKw$ cannot define the frame classes $\mathcal{D}, \mathcal{T}, \mathcal{B}, 4$, and $5$.
\item $\PLKw$ is less expressive than \EL\ over model classes $\mathcal{K}, \mathcal{D}, \mathcal{B}, 4$, and $5$. It is equally expressive as \EL\ over $\mathcal{T}$ (and classes contained in $\mathcal{T}$, such as $\mathcal{S}4$ and $\mathcal{S}5$).
\item We axiomatized the logic of knowing whether with public announcements, \PLKwA, and demonstrated that it is equally expressive as $\PLKw$.
\item The axiomatization \SPLKw\ for knowing whether logic is equivalent to \textbf{Ig} of \cite{wiebeetal:2003}, and the axiomatization $\SPLKwTTr$ for knowing whether logic over transitive frames is equivalent to \textbf{LB} of \cite{steinsvold:2008}.
\end{itemize}

\noindent We continue with ideas on further research.
\begin{itemize}
\item To complete the axiomatization spectrum, we want to axiomatize \PLKw\ over $\mathcal{D}$- and $\mathcal{B}$-frames. We expect similar techniques as in the case of \SPLKwT\ to work, while finding the right axioms may be hard.

\item As said, knowing whether seems a natural modality and able to express statements succinctly. To make this intuition concrete,  we conjecture that \PLKw\ over reflexive models is exponentially more succinct than \EL\ if there are at least two agents. The computational complexity of knowing whether logics is also left for future work.

\item The comparison with \cite{steinsvold:2008} demonstrates that the same logic may be obtained by different semantics based on different models. The undefinability of frame properties suggests that the Kripke semantics may not be the best semantics for knowing whether logic. We intend to investigate neighbourhood semantics and other weaker semantics for \PLKw.

\item We consider adding group operators for knowing whether (or ignorance) to the language. There are various options to define such group operators. Is a group $G$ ignorant of $\phi$ if, when defining the accessibility relation for $G$ as the transitive closure of the union of all relations, both a state where $\phi$ is true and a state where $\phi$ is false are group-accessible? Or should all agents consider states possible where $\phi$ is true and where $\phi$ is false, and then we `simply' take Kleene-iteration of that? There are yet other ways to define group ignorance, and the notion of group ignorance is under close scrutiny in formal epistemology \cite{hansen:2011,hendricks:2010}.

\item We consider adding arbitrary announcement operators \cite{balbianietal:2008} to knowing whether logic. One can then express, for example, that after any announcement agent $i$ remains ignorant: $\Box \neg \Kw_i \phi$. This addition becomes more challenging if one then removes the announcement operators from the logical language and defines the arbitrary announcement by modally definable model restrictions.
\end{itemize}

\bibliographystyle{alpha}
\bibliography{biblio2013}

\appendix

\section{Comparison with \cite{wiebeetal:2003}} \label{sec.appendixa}

We first derive auxiliary theorems that will be used in the derivations of the axioms and rules of the system ${\bf Ig}$.

\begin{lemma}\label{usefulprop1}
$\vdash_\SPLKw\Kw_i(\chi\land\phi)\land\Kw_i(\neg\chi\land\phi)\to\Kw_i\phi$.\\
\end{lemma}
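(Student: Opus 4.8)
The plan is to reduce the claim to an instance of the axiom $\KwCon$ by rewriting the conjunctions $\chi\land\phi$ and $\neg\chi\land\phi$ as implications, using $\EquiKw$ together with $\REKw$ (or equivalently the admissible rule $\RE$ of Proposition~\ref{replacementofequivalent}). The point is that $\KwCon$ is stated in terms of implications $\chi\to\phi$, so we first move everything into that shape, apply $\KwCon$ with $\neg\phi$ in place of $\phi$, and then move back using $\EquiKw$ on $\Kw_i\neg\phi \lra \Kw_i\phi$.

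\textbf{Key steps, in order.}
First, from the propositional tautology $\neg(\chi\land\phi)\lra(\chi\to\neg\phi)$ and $\REKw$ we obtain $\vdash\Kw_i\neg(\chi\land\phi)\lra\Kw_i(\chi\to\neg\phi)$; combining with the instance $\Kw_i(\chi\land\phi)\lra\Kw_i\neg(\chi\land\phi)$ of $\EquiKw$ gives $\vdash\Kw_i(\chi\land\phi)\lra\Kw_i(\chi\to\neg\phi)$. In exactly the same way, $\vdash\Kw_i(\neg\chi\land\phi)\lra\Kw_i(\neg\chi\to\neg\phi)$. Second, take the instance of $\KwCon$ with $\neg\phi$ substituted for $\phi$, namely $\vdash\Kw_i(\chi\to\neg\phi)\land\Kw_i(\neg\chi\to\neg\phi)\to\Kw_i\neg\phi$; by the first step this yields $\vdash\Kw_i(\chi\land\phi)\land\Kw_i(\neg\chi\land\phi)\to\Kw_i\neg\phi$. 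Third, apply $\EquiKw$ once more, $\vdash\Kw_i\neg\phi\lra\Kw_i\phi$, and close off with $\TAUT$ and $\MP$ to conclude $\vdash\Kw_i(\chi\land\phi)\land\Kw_i(\neg\chi\land\phi)\to\Kw_i\phi$.

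\textbf{Main obstacle.} There is essentially no serious obstacle here; the only thing to get right is the bookkeeping with $\REKw$/$\RE$ so that the rewriting of $\Kw_i$-subformulas is licensed (note that uniform substitution into axiom schemas, as used when substituting $\neg\phi$ for $\phi$ in $\KwCon$, is unproblematic since the system is schematic). The mild ``trick'' is recognising that $\chi\land\phi$ behaves, under $\EquiKw$, like $\chi\to\neg\phi$, so that the axiom $\KwCon$ applied at $\neg\phi$ does the work. This lemma will then feed into the subsequent derivations of the axioms of $\SIg$ in Appendix~\ref{sec.appendixa}.
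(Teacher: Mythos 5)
Your proposal is correct and follows essentially the same route as the paper's own derivation: rewrite $\Kw_i(\chi\land\phi)$ and $\Kw_i(\neg\chi\land\phi)$ as $\Kw_i(\chi\to\neg\phi)$ and $\Kw_i(\neg\chi\to\neg\phi)$ via $\TAUT$, $\REKw$ and $\EquiKw$, apply the $\KwCon$ instance at $\neg\phi$, and finish with $\Kw_i\neg\phi\lra\Kw_i\phi$. The only cosmetic difference is that you negate the tautology before applying $\REKw$ and then use $\EquiKw$ on $\chi\land\phi$, whereas the paper applies $\REKw$ to the unnegated biconditional and uses $\EquiKw$ on $\chi\to\neg\phi$; both yield the same intermediate equivalences.
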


\begin{proof}
$$
\begin{array}{lll}
(i)&(\chi\land\phi)\lra\neg(\chi\to\neg\phi)&\TAUT\\
(ii)&\Kw_i(\chi\land\phi)\lra\Kw_i\neg(\chi\to\neg\phi)&\REKw,(i)\\
(iii)&\Kw_i(\chi\to\neg\phi)\lra\Kw_i\neg(\chi\to\neg\phi)&\EquiKw\\
(iv)&\Kw_i(\chi\land\phi)\lra\Kw_i(\chi\to\neg\phi)&\RE,(ii),(iii)\\
(v)&(\neg\chi\land\phi)\lra\neg(\neg\chi\to\neg\phi)&\TAUT\\
(vi)&\Kw_i(\neg\chi\land\phi)\lra\Kw_i(\neg\chi\to\neg\phi)& (v),\text{ Similar to }(i)-(iv)\\
(vii)& \Kw_i(\chi\to\neg\phi)\land\Kw_i(\neg\chi\to\neg\phi)\to\Kw_i\neg\phi&\KwCon\\
(viii)& \Kw_i(\chi\land\phi)\land\Kw_i(\neg\chi\land\phi)\to\Kw_i\neg\phi&\RE,(vii),(iv),(vi)\\
(ix)&\Kw_i\phi\lra\Kw_i\neg\phi&\EquiKw\\
(x)&\Kw_i(\chi\land\phi)\land\Kw_i(\neg\chi\land\phi)\to\Kw_i\phi&\RE,(viii),(ix)\\
\end{array}
$$
\end{proof}

\begin{lemma}\label{usefulprop2}
$\vdash_\SPLKw\Kw_i\phi\to\Kw_i(\phi\land\psi)\vee\Kw_i(\neg\phi\land\chi)$
\end{lemma}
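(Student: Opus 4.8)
The plan is to mimic the pattern already used in Lemma~\ref{usefulprop1}: reduce the target formula, via the equivalence $\EquiKw$ and the admissible rule $\RE$, to a plain instance of the axiom $\KwDis$. The key observation is that the conjunctions $\phi\land\psi$ and $\neg\phi\land\chi$ can be rewritten as negated implications, $\neg(\phi\to\neg\psi)$ and $\neg(\neg\phi\to\neg\chi)$ respectively, and that $\Kw_i$ is insensitive to an outermost negation.

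Concretely, I would first establish the two auxiliary equivalences $\vdash_\SPLKw \Kw_i(\phi\land\psi)\lra\Kw_i(\phi\to\neg\psi)$ and $\vdash_\SPLKw \Kw_i(\neg\phi\land\chi)\lra\Kw_i(\neg\phi\to\neg\chi)$. Each is obtained exactly as in steps $(i)$--$(iv)$ of the proof of Lemma~\ref{usefulprop1}: start from the propositional tautology $(\phi\land\psi)\lra\neg(\phi\to\neg\psi)$, apply $\REKw$ to get $\Kw_i(\phi\land\psi)\lra\Kw_i\neg(\phi\to\neg\psi)$, use the instance $\Kw_i(\phi\to\neg\psi)\lra\Kw_i\neg(\phi\to\neg\psi)$ of $\EquiKw$, and combine with $\RE$ (or just $\TAUT$). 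The second equivalence is entirely analogous with $\neg\phi$ in place of $\phi$ and $\neg\chi$ in place of $\psi$.

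Next I would take the instance of $\KwDis$ obtained by substituting $\neg\psi$ for $\psi$ and $\neg\chi$ for $\chi$, namely $\vdash_\SPLKw \Kw_i\phi\to \Kw_i(\phi\to\neg\psi)\lor\Kw_i(\neg\phi\to\neg\chi)$; this is legitimate because $\KwDis$ is a schema. Finally, rewriting the two disjuncts with the equivalences just established (via $\RE$, which is admissible in $\SPLKw$ by Proposition~\ref{replacementofequivalent}) yields $\vdash_\SPLKw \Kw_i\phi\to\Kw_i(\phi\land\psi)\lor\Kw_i(\neg\phi\land\chi)$, as desired.

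I do not expect a genuine obstacle here; the only mild subtlety is getting the substitution into the $\KwDis$ schema right (it is $\neg\psi$ and $\neg\chi$, not $\psi$ and $\chi$, that must be plugged in) and remembering that $\RE$ is available only because $\REKw$ is in the system. Everything else is bookkeeping of the same kind already carried out in Lemma~\ref{usefulprop1}.
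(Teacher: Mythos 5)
Your proposal is correct and matches the paper's own derivation essentially line for line: the paper also rewrites $\Kw_i(\phi\land\psi)$ as $\Kw_i(\phi\to\neg\psi)$ and $\Kw_i(\neg\phi\land\chi)$ as $\Kw_i(\neg\phi\to\neg\chi)$ via $\TAUT$, $\REKw$, $\EquiKw$ and $\RE$, and then applies the $\KwDis$ instance $\Kw_i\phi\to\Kw_i(\phi\to\neg\psi)\vee\Kw_i(\neg\phi\to\neg\chi)$ followed by $\RE$ on the disjuncts. No gaps.
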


\begin{proof}
$$
\begin{array}{lll}
(i)&(\phi\land\psi)\lra\neg(\phi\to\neg\psi)&\TAUT\\
(ii)&\Kw_i(\phi\land\psi)\lra\Kw_i\neg(\phi\to\neg\psi)&\REKw,(i)\\
(iii)&\Kw_i(\phi\to\neg\psi)\lra\Kw_i\neg(\phi\to\neg\psi)&\EquiKw\\
(iv)&\Kw_i(\phi\land\psi)\lra\Kw_i(\phi\to\neg\psi)&\RE,(ii),(iii)\\
(v)&(\neg\phi\land\chi)\lra\neg(\neg\phi\to\neg\chi)&\TAUT\\
(vi)&\Kw_i(\neg\phi\land\chi)\lra\Kw_i(\neg\phi\to\neg\chi)&(v),\text{ Similar to }(i)-(iv)\\
(vii)&\Kw_i\phi\to\Kw_i(\phi\to\neg\psi)\vee\Kw_i(\neg\phi\to\neg\chi)&\KwDis\\
(viii)&\Kw_i\phi\to\Kw_i(\phi\land\psi)\vee\Kw_i(\neg\phi\land\chi)&\RE,(vii),(iv),(vi)\\
\end{array}
$$
\end{proof}

Using these lemmas, we now derive $\texttt{I2}$, $\texttt{I3}$, $\texttt{I4}$ and the rules $\texttt{RI}$, and $\RE$ in $\SPLKw$.

\begin{proposition} \label{prop.I2}
Axiom $\texttt{I2}$ is derivable in $\SPLKw$.
\end{proposition}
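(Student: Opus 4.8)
The goal is to derive axiom $\texttt{I2}$, namely $\neg\Kw_i(\phi\land\psi)\to\neg\Kw_i\phi\vee\neg\Kw_i\psi$, inside the proof system $\SPLKw$. Equivalently, by contraposition, it suffices to establish $\vdash_\SPLKw \Kw_i\phi\land\Kw_i\psi\to\Kw_i(\phi\land\psi)$. The plan is to reduce this to an application of $\KwCon$ (in the form $\Kw_i(\chi\to\delta)\land\Kw_i(\neg\chi\to\delta)\to\Kw_i\delta$) together with the already-established auxiliary lemmas, especially Lemma~\ref{usefulprop2}.

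First I would instantiate Lemma~\ref{usefulprop2} with suitable formulas so that one disjunct becomes something derivable from $\Kw_i\phi\land\Kw_i\psi$ and the other becomes $\Kw_i(\phi\land\psi)$. Concretely, $\Kw_i\psi\to\Kw_i(\psi\land\phi)\vee\Kw_i(\neg\psi\land\chi)$ is an instance; choosing $\chi$ cleverly (for instance $\chi=\psi$, or some formula forcing the second disjunct to be refutable given $\Kw_i\psi$) should let me eliminate the second disjunct. Alternatively, and perhaps more cleanly, I would use $\KwDis$ directly: from $\Kw_i\phi$ we get $\Kw_i(\phi\to\psi)\vee\Kw_i(\neg\phi\to\chi)$; combining with $\Kw_i\psi$ and the $\KwCon$-style lemmas should funnel both cases into $\Kw_i(\phi\land\psi)$. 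Since $(\phi\land\psi)\lra(\phi\to\neg\neg\psi)$-type rewrites via $\REKw$, $\EquiKw$, and $\RE$ are exactly the manipulations used in the proofs of Lemmas~\ref{usefulprop1} and~\ref{usefulprop2}, I expect a short derivation: rewrite $\Kw_i(\phi\land\psi)$ as $\Kw_i(\neg\phi\to\neg\psi)$ (using $(\phi\land\psi)\lra\neg(\neg\phi\to\neg\psi)$, $\REKw$, $\EquiKw$, $\RE$, as in line $(i)$–$(iv)$ of Lemma~\ref{usefulprop1}), and then derive $\Kw_i(\neg\phi\to\neg\psi)$ from $\Kw_i\phi$ together with a case split, using that $\Kw_i\phi$ and $\Kw_i\psi$ let us control truth values.

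So the key steps, in order, are: (1) state the contrapositive target $\Kw_i\phi\land\Kw_i\psi\to\Kw_i(\phi\land\psi)$; (2) rewrite $\Kw_i(\phi\land\psi)$ into an implicational form amenable to $\KwCon$, via $\REKw$/$\EquiKw$/$\RE$; (3) apply $\KwDis$ to $\Kw_i\phi$ (and/or $\Kw_i\psi$) to get a disjunction of "knowing whether" facts about implications; (4) in each disjunct, combine with the other assumption and $\KwCon$ (or Lemma~\ref{usefulprop1}) to obtain the rewritten form of $\Kw_i(\phi\land\psi)$; (5) collect the cases by $\TAUT$ and conclude; (6) contrapose to obtain $\texttt{I2}$. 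I would write this as a numbered derivation table in the same style as Lemmas~\ref{usefulprop1} and~\ref{usefulprop2}.

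The main obstacle I anticipate is the bookkeeping of the $\RE$/$\REKw$ rewrites: getting the polarities right so that $\Kw_i(\phi\land\psi)$, $\Kw_i(\neg\phi\to\neg\psi)$, $\Kw_i(\phi\to\neg\neg\psi)$ and the like are all provably interchangeable, and making sure the instance of $\KwDis$ or $\KwCon$ used has exactly the form needed after rewriting. This is routine but error-prone, and choosing the right witness formula for the "spare" schematic variable in $\KwDis$ (the $\chi$ in $\Kw_i(\neg\phi\to\chi)$) is the one genuinely creative choice; picking $\chi$ so that $\Kw_i(\neg\phi\to\chi)$ contradicts $\Kw_i\psi$ in the relevant case — for example $\chi=\psi$ fails in general, but something like $\chi=\bot$ combined with the fact that the unwanted disjunct then forces $\Kw_i\neg\phi$, hence with $\Kw_i\phi$ and $\KwCon$ forces $\Kw_i$ of everything — is the sort of trick I would look for first. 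Given that Lemma~\ref{usefulprop2} is already proved and available, I actually expect the cleanest route is: instantiate Lemma~\ref{usefulprop2} as $\Kw_i\phi\to\Kw_i(\phi\land\psi)\vee\Kw_i(\neg\phi\land\neg\psi)$, then show $\Kw_i\psi\land\Kw_i(\neg\phi\land\neg\psi)\to\Kw_i(\phi\land\psi)$ using $\EquiKw$ (from $\Kw_i(\neg\phi\land\neg\psi)$ and $\Kw_i\psi$, via $\Kw_i\neg(\phi\lor\psi)$ and a $\KwCon$-style combination), and combine — but I would check both routes and present whichever yields the shorter table.
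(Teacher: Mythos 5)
You correctly reduce the problem to $\vdash_\SPLKw \Kw_i\phi\land\Kw_i\psi\to\Kw_i(\phi\land\psi)$ and correctly identify Lemmas~\ref{usefulprop1} and~\ref{usefulprop2} as the main tools, which is indeed the paper's strategy. However, the concrete route you settle on breaks down at its key step: the formula $\Kw_i\psi\land\Kw_i(\neg\phi\land\neg\psi)\to\Kw_i(\phi\land\psi)$ is not valid, hence not derivable in the sound system $\SPLKw$. Countermodel: let $s$ have exactly two successors, $t_1\vDash\phi\land\psi$ and $t_2\vDash\neg\phi\land\psi$; then $\psi$ is constant across successors and $\neg\phi\land\neg\psi$ is false at both, so the antecedent holds, but $\phi\land\psi$ differs at $t_1$ and $t_2$. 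The same model defeats the reformulation via $\Kw_i(\phi\lor\psi)$, and your other candidate instantiations ($\chi=\psi$, or $\chi=\bot$, the latter turning the unwanted disjunct into $\Kw_i(\neg\phi\land\bot)$, which is a theorem and therefore eliminates nothing) fare no better.

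The missing idea is the choice of the spare parameter in Lemma~\ref{usefulprop2}. The paper instantiates it once as $\Kw_i\phi\to\Kw_i(\phi\land\psi)\vee\Kw_i(\neg\phi\land\neg(\phi\land\psi))$ (i.e., with $\neg(\phi\land\psi)$ as the free slot), and once more on $\Kw_i\psi$, rewriting the residual disjunct $\Kw_i(\neg\psi\land\phi)$ as $\Kw_i(\phi\land\neg(\phi\land\psi))$ via $\TAUT$ and $\REKw$. The two residual disjuncts then share the conjunct $\neg(\phi\land\psi)$ and split on $\phi$ versus $\neg\phi$, which is exactly the shape Lemma~\ref{usefulprop1} needs: together they yield $\Kw_i\neg(\phi\land\psi)$, hence $\Kw_i(\phi\land\psi)$ by $\EquiKw$, and the case analysis closes by $\TAUT$. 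Without steering both residual disjuncts toward a common formula whose two halves Lemma~\ref{usefulprop1} can recombine, the argument does not go through; this instantiation is the one genuinely creative step, and it is absent from your proposal.
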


\begin{proof}
We derive the equivalent equivalent proposition:
$$\vdash_{\SPLKw}\Kw_i\phi\land\Kw_i\psi\to\Kw_i(\phi\land\psi)$$

$$
\begin{array}{lll}
(i)& \Kw_i\phi\to\Kw_i(\phi\land\psi)\vee\Kw_i(\neg\phi\land\neg(\phi\land\psi))&\text{Lemma~\ref{usefulprop2}}\\
(ii)&\Kw_i\psi\to\Kw_i(\psi\land\phi)\vee\Kw_i(\neg\psi\land\phi)&\text{Lemma~\ref{usefulprop2}}\\
(iii)&\Kw_i(\phi\land\psi)\lra\Kw_i(\psi\land\phi)&\TAUT,\REKw\\
(iv)&\Kw_i\psi\to\Kw_i(\phi\land\psi)\vee\Kw_i(\neg\psi\land\phi)&\RE,(ii),(iii)\\
(v)&(\neg\psi \land\phi)\lra\phi\land\neg(\phi\land\psi)&\TAUT\\
(vi)&\Kw_i(\neg\psi \land\phi)\lra\Kw_i(\phi\land\neg(\phi\land\psi))&\REKw,(v)\\
(vii)&\Kw_i\psi\to\Kw_i(\phi\land\psi)\vee\Kw_i(\phi\land\neg(\phi\land\psi))&\RE,(iv),(vi)\\
(viii)&\Kw_i(\phi\land\neg(\phi\land\psi))\land \Kw_i(\neg\phi\land\neg(\phi\land\psi))\to \Kw_i\neg(\phi\land\psi)&\text{Lemma~\ref{usefulprop1}}\\
(ix)&\Kw_i(\phi\land\psi)\lra \Kw_i\neg(\phi\land\psi)&\EquiKw\\
(x)&\Kw_i(\phi\land\neg(\phi\land\psi))\land \Kw_i(\neg\phi\land\neg(\phi\land\psi))\to \Kw_i(\phi\land\psi)&\RE,(viii),(ix)\\
(xi)& \Kw_i\phi\land\Kw_i\psi\to\Kw_i(\phi\land\psi)~\lor\\
&\ \ \ (\Kw_i(\phi\land\neg(\phi\land\psi))\land\Kw_i(\neg\phi\land\neg(\phi\land\psi)))&(i),(vii)\\
(xii)&\Kw_i\phi\land\Kw_i\psi\to\Kw_i(\phi\land\psi)&(x),(xi)
\end{array}
$$
\end{proof}

\begin{proposition}
Axiom $\texttt{I3}$ is derivable in $\SPLKw$.
\end{proposition}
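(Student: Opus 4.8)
The idea is to derive the two conjuncts $\Kw_i\psi$ and $\neg\Kw_i(\chi_1\land\psi)$ of the consequent of $\texttt{I3}$ separately. Write $(a)=\Kw_i\phi$, $(b)=\neg\Kw_i(\chi_1\land\phi)$, $(c)=\Kw_i(\phi\to\psi)$, $(d)=\neg\Kw_i(\chi_2\land(\phi\to\psi))$ for the four hypotheses. The crucial auxiliary step is an \emph{elimination principle}: for all $\alpha,\beta,\theta$,
\[ \vdash_{\SPLKw}\ \Kw_i\alpha\land\neg\Kw_i(\beta\land\alpha)\ \rightarrow\ \big(\Kw_i(\alpha\to\theta)\to\Kw_i\theta\big). \]
Intuitively $\Kw_i\alpha\land\neg\Kw_i(\beta\land\alpha)$ forces $\alpha$ to hold at every successor, so $\alpha\to\theta$ and $\theta$ agree everywhere. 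To derive it: from $\Kw_i\alpha$ and Lemma~\ref{usefulprop2} we get $\Kw_i(\alpha\land\beta)\lor\Kw_i(\neg\alpha\land\beta)$; the hypothesis $\neg\Kw_i(\beta\land\alpha)$ (together with $\REKw$ on commutativity) excludes the first disjunct, so $\Kw_i(\neg\alpha\land\beta)$, and since $\neg\alpha\land\beta\lra\neg(\neg\alpha\to\neg\beta)$ this gives, via $\REKw$ and $\EquiKw$, $\Kw_i(\neg\alpha\to\neg\beta)$. Also $\Kw_i\alpha$ and $\neg\Kw_i(\beta\land\alpha)$ yield $\neg\Kw_i\beta$ by the contrapositive of Proposition~\ref{prop.I2}, hence $\neg\Kw_i\neg\beta$ by $\EquiKw$. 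Now instantiate Lemma~\ref{Mix} with $\alpha$ for its $\chi$, $\neg\beta$ for its $\phi$, and $\theta$ for its $\psi$: the facts just derived reduce its antecedent to $\Kw_i(\alpha\to\theta)$, giving $\Kw_i\theta$.

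Applying the elimination principle with $\alpha:=\phi$, $\beta:=\chi_1$ to $(a),(b)$ yields the ``$\phi$-elimination'' rule $\Kw_i(\phi\to\theta)\to\Kw_i\theta$; applying it with $\alpha:=\phi\to\psi$, $\beta:=\chi_2$ to $(c),(d)$ yields the ``$(\phi\to\psi)$-elimination'' rule $\Kw_i((\phi\to\psi)\to\theta)\to\Kw_i\theta$. The first conjunct is then immediate: take $\theta:=\psi$ in $\phi$-elimination and use $(c)$ to get $\Kw_i\psi$.

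For the second conjunct, first note that $(a),(b)$ give $\neg\Kw_i\chi_1$ (again by the contrapositive of Proposition~\ref{prop.I2}). I then obtain $\Kw_i(\chi_1\to\psi)$ by cascading the two elimination rules through the tautology $\alpha\to(\beta\to\alpha)$: since $(\phi\to\psi)\to(\chi_1\to(\phi\to\psi))$ is a tautology, $\GENKw$ gives $\Kw_i\big((\phi\to\psi)\to(\chi_1\to(\phi\to\psi))\big)$, so $(\phi\to\psi)$-elimination yields $\Kw_i(\chi_1\to(\phi\to\psi))$; by $\REKw$ on the tautology $(\chi_1\to(\phi\to\psi))\lra(\phi\to(\chi_1\to\psi))$ this is $\Kw_i(\phi\to(\chi_1\to\psi))$, and $\phi$-elimination with $\theta:=\chi_1\to\psi$ delivers $\Kw_i(\chi_1\to\psi)$. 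Finally, instantiate Lemma~\ref{usefulprop1} with $\psi$ for its $\chi$ and $\chi_1$ for its $\phi$, obtaining $\Kw_i(\psi\land\chi_1)\land\Kw_i(\neg\psi\land\chi_1)\to\Kw_i\chi_1$; rewriting $\psi\land\chi_1$ as $\chi_1\land\psi$ and $\neg\psi\land\chi_1$ as $\neg(\chi_1\to\psi)$ (via $\REKw$, $\EquiKw$ and $\RE$) turns this into $\Kw_i(\chi_1\land\psi)\land\Kw_i(\chi_1\to\psi)\to\Kw_i\chi_1$, whose contrapositive, together with $\neg\Kw_i\chi_1$ and the just-derived $\Kw_i(\chi_1\to\psi)$, gives $\neg\Kw_i(\chi_1\land\psi)$. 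Conjoining the two conjuncts completes the derivation.

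The main obstacle is spotting the elimination principle and realizing that $\Kw_i(\chi_1\to\psi)$ is reachable by chaining the $\phi$- and $(\phi\to\psi)$-elimination rules through $\alpha\to(\beta\to\alpha)$; once these are in place the remainder is routine propositional bookkeeping repackaged with $\REKw$, $\EquiKw$ and $\RE$. (Note $(d)$ is genuinely needed here: it is what forces $\phi\to\psi$, and hence $\psi$, to be \emph{true} rather than merely constant at the successors.)
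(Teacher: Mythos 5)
Your derivation is correct: the elimination principle $\Kw_i\alpha\land\neg\Kw_i(\beta\land\alpha)\to(\Kw_i(\alpha\to\theta)\to\Kw_i\theta)$ does follow from Lemma~\ref{usefulprop2}, the contrapositive of Proposition~\ref{prop.I2} (via $\EquiKw$), and Lemma~\ref{Mix} instantiated with $\neg\beta$ in the role of its $\phi$; and the two applications (to $\phi,\chi_1$ and to $\phi\to\psi,\chi_2$) together with the chaining through $\alpha\to(\beta\to\alpha)$ and the final use of Lemma~\ref{usefulprop1} do yield both conjuncts of $\texttt{I3}$. For the first conjunct your route essentially coincides with the paper's: the paper's intermediate theorem $(\star)$, namely $\Kw_i\phi\land\neg\Kw_i(\chi_1\land\phi)\land\Kw_i(\phi\to\psi)\to\Kw_i\psi$, \emph{is} your elimination principle for $\alpha:=\phi$, $\beta:=\chi_1$, $\theta:=\psi$, and both derivations bottom out in Lemma~\ref{Mix} (the paper feeds Lemma~\ref{Mix} the formula $\chi_1\land\phi$ where you feed it $\neg\chi_1$; this is a cosmetic difference). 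Where you genuinely diverge is the second conjunct. The paper argues by contraposition: assuming $\Kw_i(\chi_1\land\psi)$, it uses $\texttt{I2}$ twice to aggregate $\Kw_i(\psi\land\phi\land\chi_1)$, extracts $\Kw_i(\neg\psi\land\phi\land\chi_1)$ from hypotheses $(c)$ and $(d)$ via Lemma~\ref{usefulprop2}, and merges the two with Lemma~\ref{usefulprop1} to contradict $\neg\Kw_i(\chi_1\land\phi)$. You instead work forwards: you derive $\neg\Kw_i\chi_1$ and the positive fact $\Kw_i(\chi_1\to\psi)$ by cascading the two elimination rules, and only then invoke Lemma~\ref{usefulprop1} (in the guise $\Kw_i(\chi_1\land\psi)\land\Kw_i(\chi_1\to\psi)\to\Kw_i\chi_1$). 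Your version is more modular --- the elimination principle isolates the one semantic idea (``these hypotheses force $\alpha$ to be true, not merely constant, at all successors'') and reuses it four times --- at the cost of one extra appeal to $\texttt{I2}$ inside the principle itself; the paper's version is a flatter but longer propositional computation. Your closing remark that $(d)$ is genuinely needed for the second conjunct (to pin down the truth value, not just the constancy, of $\phi\to\psi$ at the successors) is also accurate.
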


\begin{proof}
First, we prove that $\vdash_{\SPLKw}\Kw_i\phi\land\neg\Kw_i(\chi_1\land\phi)
\land\Kw_i(\phi\to\psi)\to\Kw_i\psi$ \quad ($\star$).
\weg{$$
\begin{array}{lll}
(i)&\neg(\neg\phi\to\alpha_1\land\phi)\lra(\neg\phi\land\neg(\alpha_1\land\phi))&\TAUT\\
(ii)&\Kw_i\neg(\neg\phi\to\alpha_1\land\phi)\lra\Kw_i(\neg\phi\land\neg(\alpha_1\land\phi))&\REKw(i)\\
(iii)&\Kw_i(\neg\phi\to\alpha_1\land\phi)\lra\Kw_i\neg(\neg\phi\to\alpha_1\land\phi)&\EquiKw\\
(iv)&\Kw_i(\neg\phi\to\alpha_1\land\phi)\lra\Kw_i(\neg\phi\land\neg(\alpha_1\land\phi))&\RE(ii)(iii)\\
(v)&\Kw_i\phi\to\Kw_i(\phi\land\alpha_1)\vee\Kw_i(\neg\phi\land\neg(\alpha_1\land\phi))&\text{Lemma~\ref{usefulprop2}}\\
(vi)&\Kw_i\phi\land\neg\Kw_i(\alpha_1\land\phi)\to\Kw_i(\neg\phi\land\neg(\alpha_1\land\phi))&(v)\\
(vii)&\Kw_i\phi\land\Kw_i(\neg\phi\to(\alpha_1\land\phi))\land\neg\Kw_i(\alpha_1\land\phi)\land\Kw_i(\phi\to\psi)\to\Kw_i\psi&\text{Lemma}~ \ref{Mix}\\
(viii)&\Kw_i\phi\land\neg\Kw_i(\alpha_1\land\phi)\land\Kw_i(\phi\to\psi)\to\Kw_i\psi&\MP(vi)(vii)
\end{array}
$$}
$$
\begin{array}{lll}
(i)& \neg(\chi_1\land\phi)\lra(\phi\to\neg\chi_1)&\TAUT\\
(ii)&\Kw_i\neg(\chi_1\land\phi)\lra\Kw_i(\phi\to\neg\chi_1)&\REKw,(i)\\
(iii)&\Kw_i(\chi_1\land\phi)\lra\Kw_i\neg(\chi_1\land\phi)&\EquiKw\\
(iv)&\Kw_i(\chi_1\land\phi)\lra\Kw_i(\phi\to\neg\chi_1)&\RE,(ii),(iii)\\
(v)&\Kw_i\phi\to\Kw_i(\phi\to\neg\chi_1)\vee\Kw_i(\neg\phi\to(\chi_1\land\phi))&\KwDis\\
(vi)&\Kw_i\phi\land\neg\Kw_i(\phi\to\neg\chi_1)\to\Kw_i(\neg\phi\to(\chi_1\land\phi))&(v)\\
(vii)&\Kw_i\phi\land\neg\Kw_i(\chi_1\land\phi)\to\Kw_i(\neg\phi\to(\chi_1\land\phi))&\RE,(iv),(vi)\\
(viii)&\Kw_i\phi\land\Kw_i(\neg\phi\to(\chi_1\land\phi))\land\neg\Kw_i(\chi_1\land\phi)\land\Kw_i(\phi\to\psi)\\
&\ \ \ \to\Kw_i\psi&\text{Lemma}~\ref{Mix}\\
(ix)&\Kw_i\phi\land\neg\Kw_i(\chi_1\land\phi)\land\Kw_i(\phi\to\psi)\to\Kw_i\psi&(vii),(viii)
\end{array}
$$
Next, we prove that
$$\vdash_{\SPLKw}\Kw_i\phi\land\neg\Kw_i(\chi_1\land\phi)\land\Kw_i(\phi\to\psi)\land\neg\Kw_i(\chi_2\land(\phi\to\psi))\to\neg\Kw_i(\chi_1\land\psi).$$
$$
\begin{array}{lll}
(i)&\Kw_i\phi\land\Kw_i\psi\to\Kw_i(\phi\land\psi)&I2\\
(ii)&\Kw_i(\phi\land\psi)\land\Kw_i(\chi_1\land\psi)\to\Kw_i(\psi\land\phi\land\chi_1)&I2\\
(iii)&\Kw_i\phi\land\Kw_i\psi\land\Kw_i(\chi_1\land\psi)\to\Kw_i(\psi\land\phi\land\chi_1)&(i),(ii)\\
(iv)&\Kw_i(\psi\land\phi\land\chi_1)\land\Kw_i(\neg\psi\land\phi\land\chi_1)\to\Kw_i(\phi\land\chi_1)&\text{Lemma~\ref{usefulprop1}}\\
(v)&\Kw_i\phi\land\Kw_i\psi\land\Kw_i(\chi_1\land\psi)\land\Kw_i(\neg\psi\land\phi\land\chi_1)\to\Kw_i(\phi\land\chi_1)&(iii),(iv)\\
(vi)&\Kw_i(\phi\to\psi)\to\Kw_i((\phi\to\psi)\land\chi_2)\vee\Kw_i(\neg(\phi\to\psi)\land\chi_1)&\text{Lemma~\ref{usefulprop2}}\\
(vii)&\Kw_i(\phi\to\psi)\land\neg\Kw_i((\phi\to\psi)\land\chi_2)\to\Kw_i(\neg(\phi\to\psi)\land\chi_1)&(vi)\\
(viii)&\Kw_i(\neg\psi\land\phi\land\chi_1)\lra\Kw_i(\neg(\phi\to\psi)\land\chi_1)&\TAUT,\REKw\\
(ix)&\Kw_i(\phi\to\psi)\land\neg\Kw_i((\phi\to\psi)\land\chi_2)\to\Kw_i(\neg\psi\land\phi\land\chi_1)&(vii),(viii)\\
(x)&\Kw_i\phi\land\Kw_i\psi\land\Kw_i(\chi_1\land\psi)\land\Kw_i(\phi\to\psi)\land\neg\Kw_i((\phi\to\psi)\land\chi_2)\\
&\ \ \ \to\Kw_i(\phi\land\chi_1)&(v),(ix)\\
(xi)&\Kw_i\phi\land\Kw_i\psi\land\neg\Kw_i(\phi\land\chi_1)\land\Kw_i(\phi\to\psi)\land\neg\Kw_i((\phi\to\psi)\land\chi_2)\\
&\ \ \ \to\neg\Kw_i(\chi_1\land\psi)&(x)\\
(xii)&\Kw_i(\chi_1\land\phi)\lra\Kw_i(\phi\land\chi_1)&\TAUT,\REKw\\
(xiii)&\Kw_i(\chi_2\land(\phi\to\psi))\lra\Kw_i((\phi\to\psi)\land\chi_2)&\TAUT,\REKw\\
(xiv)&\Kw_i\phi\land\Kw_i\psi\land\neg\Kw_i(\chi_1\land\phi)\land\Kw_i(\phi\to\psi)\land\neg\Kw_i(\chi_2\land(\phi\to\psi))\\
&\ \ \ \to\neg\Kw_i(\chi_1\land\psi)&(xi),(xii),(xiii)\\
(xv)&
\Kw_i\phi\land\neg\Kw_i(\chi_1\land\phi)
\land\Kw_i(\phi\to\psi)\to\Kw_i\psi&(\star)\\
(xvi)&\Kw_i\phi\land\neg\Kw_i(\chi_1\land\phi)\land\Kw_i(\phi\to\psi)\land\neg\Kw_i(\chi_2\land(\phi\to\psi))\\
&\ \ \ \to\neg\Kw_i(\chi_1\land\psi)&(xiv),(xv)\\
\weg{(i)&\Kw_i\phi\to\Kw_i(\phi\to\neg\alpha_1)\vee\Kw_i(\neg\phi\to\delta)&\KwDis\\
(ii)&\Kw_i\phi\land\neg\Kw_i(\phi\land\alpha_1)\to\Kw_i(\neg\phi\to\delta)&(i)\\
(iii)&\Kw_i(\phi\to\psi)\to\Kw_i((\phi\to\psi)\to\neg\alpha_2)\vee\Kw_i(\neg(\phi\to\psi)\to\delta')&\KwDis\\
(iv)&\Kw_i(\phi\to\psi)\land\neg\Kw_i(\alpha_2\land(\phi\to\psi))\to\Kw_i(\neg(\phi\to\psi)\to\delta')&(iii)\\
(..)&\Kw_i\phi\land\Kw_i\psi\land\Kw_i(\phi\land\neg\psi\to\alpha_1)\land\Kw_i(\alpha_1\land\psi)\to\Kw_i(\alpha_1\land\phi)&..\\}
\end{array}
$$
Axiom $\texttt{I3}$ now follows from the two derived theorems by propositional reasoning.
\end{proof}

\begin{proposition}
Axiom $\texttt{I4}$ is derivable in $\SPLKw$.
\end{proposition}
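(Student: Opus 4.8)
The plan is to reduce $\texttt{I4}$ to Lemma~\ref{usefulprop1} by propositional reasoning together with a couple of applications of $\REKw$ and $\RE$. First I would observe that $\texttt{I4}$, namely $\Kw_i\psi\land\neg\Kw_i\chi\to\neg\Kw_i(\chi\land\psi)\vee\neg\Kw_i(\chi\land\neg\psi)$, is propositionally implied by the weaker formula $\neg\Kw_i\chi\to\neg\Kw_i(\chi\land\psi)\vee\neg\Kw_i(\chi\land\neg\psi)$ (we simply drop the conjunct $\Kw_i\psi$ from the antecedent), and that this weaker formula is just the contrapositive of $\Kw_i(\chi\land\psi)\land\Kw_i(\chi\land\neg\psi)\to\Kw_i\chi$. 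So it suffices to show $\vdash_{\SPLKw}\Kw_i(\chi\land\psi)\land\Kw_i(\chi\land\neg\psi)\to\Kw_i\chi$, after which $\texttt{I4}$ follows by $\TAUT$ and $\MP$.

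To derive this formula, I would instantiate Lemma~\ref{usefulprop1} --- which, after renaming its schematic letters, reads $\vdash_{\SPLKw}\Kw_i(\alpha\land\beta)\land\Kw_i(\neg\alpha\land\beta)\to\Kw_i\beta$ --- with $\alpha:=\psi$ and $\beta:=\chi$, obtaining $\vdash_{\SPLKw}\Kw_i(\psi\land\chi)\land\Kw_i(\neg\psi\land\chi)\to\Kw_i\chi$. Since $\psi\land\chi\lra\chi\land\psi$ and $\neg\psi\land\chi\lra\chi\land\neg\psi$ are tautologies, $\REKw$ yields $\Kw_i(\psi\land\chi)\lra\Kw_i(\chi\land\psi)$ and $\Kw_i(\neg\psi\land\chi)\lra\Kw_i(\chi\land\neg\psi)$; combining these with the instantiated lemma via $\RE$ (admissible by Prop.~\ref{replacementofequivalent}), or simply by propositional reasoning, gives $\vdash_{\SPLKw}\Kw_i(\chi\land\psi)\land\Kw_i(\chi\land\neg\psi)\to\Kw_i\chi$, as desired.

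There is essentially no obstacle here: the semantic content of $\texttt{I4}$ is already packaged in Lemma~\ref{usefulprop1} (note in particular that the conjunct $\Kw_i\psi$ is not needed), and all that remains is the routine commuting of conjunctions inside $\Kw_i$ and one propositional weakening step. Alternatively, one could give a self-contained derivation mirroring the proof of Lemma~\ref{usefulprop1}: rewrite $\Kw_i(\chi\land\psi)$ and $\Kw_i(\chi\land\neg\psi)$ as $\Kw_i(\psi\to\neg\chi)$ and $\Kw_i(\neg\psi\to\neg\chi)$ using $\TAUT$, $\REKw$ and $\EquiKw$, then apply $\KwCon$ to obtain $\Kw_i\neg\chi$, and finish with $\EquiKw$ to get $\Kw_i\chi$; the resulting theorem is again contraposed and weakened to yield $\texttt{I4}$.
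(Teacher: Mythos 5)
Your proposal is correct and takes essentially the same route as the paper: the paper also derives \texttt{I4} as a propositional consequence of the stronger theorem $\neg\Kw_i\chi\to\neg\Kw_i(\chi\land\psi)\vee\neg\Kw_i(\chi\land\neg\psi)$, noting that it ``follows directly'' from Lemma~\ref{usefulprop1}; you have merely made explicit the instantiation and the $\REKw$/$\RE$ steps that commute the conjuncts. One terminological quibble: the formula obtained by dropping $\Kw_i\psi$ from the antecedent is \emph{stronger} (as the paper says), not weaker, though your direction of implication is of course right.
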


\begin{proof}
A stronger result
$$\vdash_{\SPLKw}\neg\Kw_i\chi\to\neg\Kw_i(\chi\land\psi)\vee\neg\Kw_i(\chi\land\neg\psi)$$
follows directly from Prop.~\ref{usefulprop1}.
\end{proof}

\begin{proposition}
Inference rule $\texttt{RI}$ is admissible in $\SPLKw$.
\end{proposition}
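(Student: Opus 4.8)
The inference rule $\texttt{RI}$ reads: from $\phi$ infer $\Kw_i\phi\land(\neg\Kw_i\chi\to\neg\Kw_i(\chi\land\phi))$. I plan to prove it is admissible by splitting the conclusion into its two conjuncts and deriving each from the premise $\vdash_{\SPLKw}\phi$.

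The first conjunct, $\Kw_i\phi$, is immediate: apply the rule $\GENKw$ to the premise $\vdash\phi$. For the second conjunct, $\neg\Kw_i\chi\to\neg\Kw_i(\chi\land\phi)$, I would argue as follows. From $\vdash\phi$ we get, by propositional reasoning ($\TAUT$, $\MP$), that $\vdash\phi\lra\top$, and more usefully $\vdash(\chi\land\phi)\lra\chi$. Applying $\REKw$ to this equivalence yields $\vdash\Kw_i(\chi\land\phi)\lra\Kw_i\chi$, hence in particular $\vdash\Kw_i(\chi\land\phi)\to\Kw_i\chi$, which is propositionally equivalent (using $\TAUT$ and $\MP$) to $\vdash\neg\Kw_i\chi\to\neg\Kw_i(\chi\land\phi)$. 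Finally, conjoining the two derived theorems $\vdash\Kw_i\phi$ and $\vdash\neg\Kw_i\chi\to\neg\Kw_i(\chi\land\phi)$ by propositional logic gives $\vdash\Kw_i\phi\land(\neg\Kw_i\chi\to\neg\Kw_i(\chi\land\phi))$, which is exactly the conclusion of $\texttt{RI}$. Thus whenever $\vdash_{\SPLKw}\phi$ we have $\vdash_{\SPLKw}\Kw_i\phi\land(\neg\Kw_i\chi\to\neg\Kw_i(\chi\land\phi))$, establishing admissibility.

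There is essentially no obstacle here: the whole argument is a short chain of applications of $\GENKw$, $\REKw$, $\TAUT$, and $\MP$. The only mild subtlety is that one must use $\REKw$ (rather than some $\texttt{K}$-style distribution axiom, which is unavailable in $\SPLKw$) to push the equivalence $(\chi\land\phi)\lra\chi$ through the $\Kw_i$ operator; this is exactly the kind of step the admissibility of $\RE$ (Proposition~\ref{replacementofequivalent}) is built to support, and indeed one could alternatively invoke $\RE$ directly to replace $\chi\land\phi$ by $\chi$ inside $\Kw_i(\chi\land\phi)$ once $\vdash\phi$ (hence $\vdash(\chi\land\phi)\lra\chi$) is known.
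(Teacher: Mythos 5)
Your proof is correct, and it reaches the second conjunct by a genuinely shorter route than the paper. Both proofs dispatch the first conjunct $\Kw_i\phi$ by $\GENKw$ and then reduce the second conjunct to showing $\vdash\Kw_i(\chi\land\phi)\to\Kw_i\chi$, but from there the derivations diverge. The paper does not exploit the fact that $\phi$ being a theorem makes $(\chi\land\phi)\lra\chi$ a theorem; instead it derives $\vdash\neg\phi\to\neg\chi$, applies $\GENKw$ to get $\vdash\Kw_i(\neg\phi\to\neg\chi)$, uses the instance $\Kw_i(\phi\to\neg\chi)\land\Kw_i(\neg\phi\to\neg\chi)\to\Kw_i\neg\chi$ of $\KwCon$ to obtain $\vdash\Kw_i(\phi\to\neg\chi)\to\Kw_i\neg\chi$, and then converts both sides back via $\EquiKw$ and $\REKw$ (using $\chi\land\phi\lra\neg(\phi\to\neg\chi)$). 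You instead observe that $\phi\to((\chi\land\phi)\lra\chi)$ is a tautology, so $\vdash\phi$ yields $\vdash(\chi\land\phi)\lra\chi$ by $\TAUT$ and $\MP$, and a single application of $\REKw$ gives $\vdash\Kw_i(\chi\land\phi)\lra\Kw_i\chi$, of which the needed implication is one half. This is unobjectionable: $\REKw$ is a primitive rule of $\SPLKw$ and applies to any previously derived equivalence, including one obtained with the help of the hypothesis $\vdash\phi$. Your argument uses only $\TAUT$, $\MP$, $\GENKw$, and $\REKw$, avoiding $\KwCon$ and $\EquiKw$ entirely, and in fact establishes the slightly stronger $\vdash\Kw_i(\chi\land\phi)\lra\Kw_i\chi$; the paper's detour through $\KwCon$ buys nothing here beyond illustrating that axiom in action.
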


\begin{proof}
Suppose that ${\vdash\phi}$. By $\GENKw$ we get ${\vdash\Kw_i\phi}$. We only need to show $\vdash\neg\Kw_i\chi\to\neg\Kw_i(\chi\land\phi)$, equivalently, $\vdash\Kw_i(\chi\land\phi)\to\Kw_i\chi$. From the supposition ${\vdash\phi}$, it follows $\vdash\neg\phi\to\neg\chi$. Then using $\GENKw$ again, we get $\vdash\Kw_i(\neg\phi\to\neg\chi)$. By Axiom $\KwCon$, we have $\vdash\Kw_i(\neg\phi\to\neg\chi)\land\Kw_i(\phi\to\neg\chi)\to\Kw_i\neg\chi$, thus by $\MP$ we get $\vdash\Kw_i(\phi\to\neg\chi)\to\Kw_i\neg\chi$. Note $\vdash\Kw_i(\chi\land\phi)\lra\Kw_i(\phi\to\neg\chi)$ and $\vdash\Kw_i\chi\lra\Kw_i\neg\chi$ can follow from $\EquiKw$. Therefore $\vdash\Kw_i(\chi\land\phi)\to\Kw_i\chi$, as desired.
\end{proof}

\begin{proposition}
\RE\ is admissible in $\SPLKw$.
\end{proposition}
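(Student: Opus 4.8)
The plan is a straightforward induction on the structure of the formula $\chi$, with $\phi,\psi$ fixed so that $\vdash\phi\lra\psi$; in fact this is exactly Proposition~\ref{replacementofequivalent}, so I would just recall the argument. For the base cases: if $\chi$ is $\top$ or a propositional variable $q\neq p$, then $\chi[\phi/p]$ and $\chi[\psi/p]$ are syntactically identical, so $\chi[\phi/p]\lra\chi[\psi/p]$ is a propositional tautology and hence derivable by $\TAUT$; if $\chi=p$, then $\chi[\phi/p]=\phi$ and $\chi[\psi/p]=\psi$, and $\phi\lra\psi$ is precisely the hypothesis.

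For the Boolean cases I would invoke the induction hypothesis together with $\TAUT$ and $\MP$. If $\chi=\neg\chi'$, from $\vdash\chi'[\phi/p]\lra\chi'[\psi/p]$ propositional reasoning yields $\vdash\neg\chi'[\phi/p]\lra\neg\chi'[\psi/p]$, i.e.\ $\vdash\chi[\phi/p]\lra\chi[\psi/p]$; and if $\chi=\chi_1\land\chi_2$, the two instances of the induction hypothesis combine, again by propositional reasoning, to give $\vdash(\chi_1\land\chi_2)[\phi/p]\lra(\chi_1\land\chi_2)[\psi/p]$.

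The only nontrivial case, and the reason $\RE$ must be established by hand here rather than being inherited from a $\texttt{K}$-style distribution principle, is $\chi=\Kw_i\chi'$. Here the induction hypothesis supplies $\vdash\chi'[\phi/p]\lra\chi'[\psi/p]$, and a single application of the rule $\REKw$ gives $\vdash\Kw_i(\chi'[\phi/p])\lra\Kw_i(\chi'[\psi/p])$, that is $\vdash\Kw_i\chi'[\phi/p]\lra\Kw_i\chi'[\psi/p]$, as required. This is the crux: since $\texttt{K}$ is invalid for $\Kw_i$ (Section~\ref{sec.logic}), the modal step of the induction cannot be carried out in the usual way, so it is exactly $\REKw$'s presence in $\SPLKw$ that makes the argument go through.
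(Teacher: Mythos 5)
Your proof is correct and takes essentially the same route as the paper: induction on the structure of $\chi$, with the Boolean cases handled by propositional reasoning and the modal case $\Kw_i\chi'$ discharged by a single application of $\REKw$ to the induction hypothesis. The paper's own proof is just a terser version of this, stating only the $\Kw_i$ case; your additional remark that $\REKw$ is indispensable precisely because the $\texttt{K}$ schema fails for $\Kw_i$ matches the discussion immediately following the proposition in the paper.
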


\begin{proof}
Similar to the proof of Prop.~\ref{replacementofequivalent}.
\end{proof}

\section{Comparison with \cite{steinsvold:2008}} \label{sec.appendixb}

We prove that our $\SPLKwTTr$ and Steinsvold's ${\bf LB}$ are equivalent. We first show that the axioms and rules of the system ${\bf LB}$ are all derivable or admissible in $\SPLKwTTr$. We also use that $\Kw_i\phi\land\Kw_i\psi\to\Kw_i(\phi\land\psi)$ is derivable in $\SPLKw$ (see Prop.~\ref{prop.I2}).

\begin{proposition}
$\texttt{N}$, $\texttt{Z}$, $\texttt{R}$ are  derivable in $\SPLKwTTr$.
\end{proposition}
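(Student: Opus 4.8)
The plan is to observe that two of the three axioms are essentially immediate and the third has already been established in Appendix~\ref{sec.appendixa}, so nothing beyond $\SPLKw$ (let alone the extra axioms $\KwT$ and $\Tr$) is actually needed.

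First, $\texttt{Z}$ is literally the axiom $\EquiKw$ of $\SPLKw$, hence a theorem of $\SPLKwTTr$ with no further work. Second, for $\texttt{N}$ we argue: $\top$ is a tautology, so $\vdash\top$ by $\TAUT$, and then $\vdash\Kw_i\top$ by $\GENKw$; combining this with the propositional tautology $\Kw_i\top\to\top$ (instance of $\TAUT$) and the fact that $\top\to\Kw_i\top$ follows from $\vdash\Kw_i\top$ by $\MP$ against a suitable tautology, we obtain $\vdash\Kw_i\top\lra\top$. Third, $\texttt{R}$ is exactly the schema $\Kw_i\phi\land\Kw_i\psi\to\Kw_i(\phi\land\psi)$, which was shown to be derivable in $\SPLKw$ in Proposition~\ref{prop.I2} (it is the equivalent reformulation of $\texttt{I2}$ proved there); since every theorem of $\SPLKw$ is a theorem of the extension $\SPLKwTTr$, we are done.

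There is no real obstacle here: the only nontrivial derivation, that of $\texttt{R}$, has been carried out already, and the role of this proposition is simply to package those facts. If one prefers a self-contained treatment, the derivation of $\texttt{R}$ can be reproduced from Lemmas~\ref{usefulprop1} and~\ref{usefulprop2} exactly as in the proof of Proposition~\ref{prop.I2}, but citing Proposition~\ref{prop.I2} is sufficient.
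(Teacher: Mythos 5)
Your proposal is correct and follows exactly the paper's own (one-line) argument: the paper derives $\texttt{N}$, $\texttt{Z}$, $\texttt{R}$ ``by $\TAUT$, $\GENKw$, $\EquiKw$ and Prop.~\ref{prop.I2}'', which is precisely your identification of $\texttt{Z}$ with $\EquiKw$, your $\TAUT$/$\GENKw$ derivation of $\texttt{N}$, and your appeal to Proposition~\ref{prop.I2} for $\texttt{R}$. Your observation that the extra axioms $\KwT$ and $\Tr$ are not needed is also accurate.
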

\begin{proof}
By $\TAUT$, $\GENKw$, $\EquiKw$ and Prop.~\ref{prop.I2}.
\end{proof}

\begin{proposition}
$\texttt{WM}$ is admissible in $\SPLKwTTr$.
\end{proposition}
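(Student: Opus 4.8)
The plan is to give a direct Hilbert-style derivation in $\SPLKwTTr$ that converts a proof of the premise $\Kw_i\phi\land\phi\to\psi$ into a proof of the conclusion $\Kw_i\phi\land\phi\to\Kw_i\psi\land\psi$. The conclusion is a conjunction: its $\psi$-conjunct, $\Kw_i\phi\land\phi\to\psi$, is literally the premise, so the whole task reduces to deriving $\Kw_i\phi\land\phi\to\Kw_i\psi$ and then conjoining. Throughout, write $\chi$ for $\Kw_i\phi\land\phi$; the premise then reads $\vdash\chi\to\psi$.

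Since $\Kw_i$ does not distribute over implication, I cannot apply $\GENKw$ to $\chi\to\psi$ and then use a $\texttt{K}$-style axiom; instead I route through $\KwT$. From $\vdash\chi\to\psi$ and $\GENKw$ I obtain $\vdash\Kw_i(\chi\to\psi)$. The instance of $\KwT$ at $\chi,\psi$ is $\Kw_i\chi\land\Kw_i(\chi\to\psi)\land\chi\to\Kw_i\psi$; discharging the middle conjunct, which is now a theorem, yields $\vdash\Kw_i\chi\land\chi\to\Kw_i\psi$. So it remains only to show $\vdash\chi\to\Kw_i\chi$, i.e.\ $\vdash\Kw_i\phi\land\phi\to\Kw_i(\Kw_i\phi\land\phi)$.

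For this last step I use the two extra ingredients of $\SPLKwTTr$ beyond $\SPLKw$: axiom $\Tr$ gives $\vdash\Kw_i\phi\to\Kw_i\Kw_i\phi$, and the $\SPLKw$-theorem $\Kw_i\alpha\land\Kw_i\beta\to\Kw_i(\alpha\land\beta)$ (Prop.~\ref{prop.I2}), instantiated at $\alpha:=\Kw_i\phi$ and $\beta:=\phi$, gives $\vdash\Kw_i\Kw_i\phi\land\Kw_i\phi\to\Kw_i(\Kw_i\phi\land\phi)$. Feeding $\Kw_i\phi$ into both conjuncts of the antecedent of the latter (using $\Tr$ for the first) produces $\vdash\Kw_i\phi\to\Kw_i(\Kw_i\phi\land\phi)$, that is $\vdash\Kw_i\phi\to\Kw_i\chi$, whence a fortiori $\vdash\chi\to\Kw_i\chi$. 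Chaining $\chi\to\Kw_i\chi\land\chi\to\Kw_i\psi$ gives $\vdash\Kw_i\phi\land\phi\to\Kw_i\psi$, and conjoining with the premise gives $\vdash\Kw_i\phi\land\phi\to\Kw_i\psi\land\psi$; since $\phi,\psi$ were arbitrary, $\texttt{WM}$ is admissible.

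The one genuinely substantive point is the auxiliary derivation $\Kw_i\phi\to\Kw_i(\Kw_i\phi\land\phi)$: it is the syntactic shadow of positive introspection $\K_i\phi\to\K_i\K_i\phi$ under the translation $\K_i\psi\mapsto\psi\land\Kw_i\psi$, and it is precisely what forces the use of $\Tr$ together with the conjunction-collecting $\SPLKw$-theorem. Everything else is $\KwT$, $\GENKw$, and propositional bookkeeping; in particular $\KwTr$ from Prop.~\ref{prop.kw4kw5} is not needed here. (The expected main obstacle is simply seeing that one must avoid any distribution of $\Kw_i$ over $\to$ and instead pass through $\KwT$ applied to the compound formula $\chi=\Kw_i\phi\land\phi$, which is what makes the $\mathcal S4$ resources suffice.)
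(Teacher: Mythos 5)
Your derivation is correct and is essentially the paper's own proof: both apply $\GENKw$ to the premise, instantiate $\KwT$ at the compound formula $\Kw_i\phi\land\phi$, and then obtain $\Kw_i\phi\to\Kw_i(\Kw_i\phi\land\phi)$ from $\Tr$ together with the $\SPLKw$-theorem $\Kw_i\alpha\land\Kw_i\beta\to\Kw_i(\alpha\land\beta)$ of Prop.~\ref{prop.I2}. The only difference is cosmetic: you spell out the final conjoining step to reach $\Kw_i\psi\land\psi$, which the paper leaves implicit.
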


\begin{proof}
Suppose that ${\vdash\Kw_i\phi\land\phi\to\psi}$. From the supposition and $\GENKw$, we have ${\vdash}\Kw_i(\Kw_i\phi\land\phi\to\psi)$. By $\vdash\Kw_i(\Kw_i\phi\land\phi\to\psi)\land\Kw_i(\Kw_i\phi\land\phi)\land(\Kw_i\phi\land\phi)\to\Kw_i\psi$ (Axiom $\KwT$),  we get $\vdash\Kw_i(\Kw_i\phi\land\phi)\land(\Kw_i\phi\land\phi)\to\Kw_i\psi$. Besides, from Axiom $\Tr$ we infer that $\vdash\Kw_i\phi\to\Kw_i\Kw_i\phi\land\Kw_i\phi$. With Prop.~\ref{prop.I2} we have $\vdash\Kw_i\Kw_i\phi\land\Kw_i\phi\to\Kw_i(\Kw_i\phi\land\phi)$. Now we conclude that $\vdash\Kw_i\phi\land\phi\to\Kw_i\psi$, as desired.
\end{proof}

Now we will prove that ${\bf LB}$ can derive $\SPLKwTTr$. Comparing the two systems, we only need to show that the axioms $\KwCon$, $\KwDis$, $\KwT$, and $\Tr$, and the rules $\GENKw$ and $\REKw$ can be derived in ${\bf LB}$. In the following derivation by $\vdash$ we mean $\vdash_{\bf LB}$.

\begin{lemma}\label{prop.lb}
If $\vdash\Kw_i\phi\land\phi\to\psi$, then $\vdash\Kw_i\phi\land\phi\to\Kw_i\psi$.
\end{lemma}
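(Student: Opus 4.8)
The plan is to observe that this lemma is an almost immediate consequence of the rule \texttt{WM} of ${\bf LB}$, together with propositional reasoning. Recall that \texttt{WM} reads: from $\Kw_i\phi\land\phi\to\psi$ infer $\Kw_i\phi\land\phi\to\Kw_i\psi\land\psi$. So the desired conclusion $\Kw_i\phi\land\phi\to\Kw_i\psi$ is just the first conjunct of what \texttt{WM} already gives us.

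Concretely, first I would assume the hypothesis $\vdash_{\bf LB}\Kw_i\phi\land\phi\to\psi$. Applying the rule \texttt{WM} directly yields $\vdash_{\bf LB}\Kw_i\phi\land\phi\to\Kw_i\psi\land\psi$. Then, using the propositional tautology $(\Kw_i\psi\land\psi)\to\Kw_i\psi$ (an instance of $\TAUT$) and $\MP$, I would conclude $\vdash_{\bf LB}\Kw_i\phi\land\phi\to\Kw_i\psi$, as required.

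There is essentially no obstacle here; the only point worth stressing is that throughout this derivation $\vdash$ abbreviates $\vdash_{\bf LB}$, so that the single non-propositional step is precisely the invocation of ${\bf LB}$'s rule \texttt{WM}. This lemma will then serve as a convenient packaged form of \texttt{WM} to be reused in the subsequent derivations of $\KwCon$, $\KwDis$, $\KwT$, and $\Tr$ inside ${\bf LB}$.
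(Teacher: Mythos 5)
Your proof is correct and matches the paper's own argument, which simply says the lemma is immediate from the rule $\texttt{WM}$; you have merely spelled out the trivial propositional weakening from $\Kw_i\phi\land\phi\to\Kw_i\psi\land\psi$ to $\Kw_i\phi\land\phi\to\Kw_i\psi$. Nothing further is needed.
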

\begin{proof}
This is immediate from the rule $\texttt{WM}$.
\end{proof}

\begin{proposition}
$\KwCon$ is derivable in ${\bf LB}$.
\end{proposition}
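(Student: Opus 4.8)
The plan is to derive $\KwCon$, namely $\Kw_i(\chi\to\phi)\land\Kw_i(\neg\chi\to\phi)\to\Kw_i\phi$, in ${\bf LB}$ in only a handful of steps, using the aggregation axiom $\texttt{R}$ together with the rule $\texttt{Sub}$ of substitution of equivalents.

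First I would instantiate $\texttt{R}$ with the two conjuncts $\chi\to\phi$ and $\neg\chi\to\phi$ to obtain $\Kw_i(\chi\to\phi)\land\Kw_i(\neg\chi\to\phi)\to\Kw_i\big((\chi\to\phi)\land(\neg\chi\to\phi)\big)$. Next I would note that $(\chi\to\phi)\land(\neg\chi\to\phi)\lra\phi$ is a propositional tautology (a case split on the truth value of $\chi$ gives both directions), hence available by $\TAUT$. Then $\texttt{Sub}$ allows me to substitute $\phi$ for $(\chi\to\phi)\land(\neg\chi\to\phi)$ within the scope of $\Kw_i$, yielding $\Kw_i\big((\chi\to\phi)\land(\neg\chi\to\phi)\big)\lra\Kw_i\phi$. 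Combining this biconditional with the implication coming from $\texttt{R}$ by propositional reasoning (an application of $\texttt{MP}$ to a tautology) gives exactly $\Kw_i(\chi\to\phi)\land\Kw_i(\neg\chi\to\phi)\to\Kw_i\phi$, as required.

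I do not expect any genuine obstacle in this argument; the one point worth emphasising is that the step passing from $\Kw_i\big((\chi\to\phi)\land(\neg\chi\to\phi)\big)$ to $\Kw_i\phi$ relies on $\texttt{Sub}$, since the rule $\REKw$ belongs to $\SPLKwTTr$ but not to ${\bf LB}$; here $\texttt{Sub}$ is precisely what supplies the needed replacement of provable equivalents inside $\Kw_i$. It is also worth observing that $\KwT$, $\Tr$, $\texttt{WM}$ and $\texttt{Z}$ play no role: $\KwCon$ is already derivable from the fragment $\{\TAUT,\texttt{R},\texttt{Sub},\texttt{MP}\}$ of ${\bf LB}$, which is consistent with the fact (cf.\ Proposition~\ref{prop.sound}) that $\KwCon$ is valid on the class of all frames.
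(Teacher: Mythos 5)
Your derivation is correct and coincides with the paper's own proof: both instantiate $\texttt{R}$ on $\chi\to\phi$ and $\neg\chi\to\phi$, use the tautology $((\chi\to\phi)\land(\neg\chi\to\phi))\lra\phi$ together with $\texttt{Sub}$ to replace the conjunction by $\phi$ under $\Kw_i$, and close by propositional reasoning. Your added remark that only the fragment $\{\TAUT,\texttt{R},\texttt{Sub},\texttt{MP}\}$ is needed is accurate but does not change the argument.
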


\begin{proof}
$$
\begin{array}{lll}
(i)&((\chi\to\phi)\land(\neg\chi\to\phi))\lra\phi&\TAUT\\
(ii)&\Kw_i((\chi\to\phi)\land(\neg\chi\to\phi))\lra\Kw_i\phi&\RE,(i)\\
(iii)& \Kw_i(\chi\to\phi)\land\Kw_i(\neg\chi\to\phi)\to\Kw_i((\chi\to\phi)\land(\neg\chi\to\phi))&\texttt{R}\\
(iv)& \Kw_i(\chi\to\phi)\land\Kw_i(\neg\chi\to\phi)\to\Kw_i\phi& \RE,(ii),(iii) \\
\end{array}
$$
\end{proof}

\begin{proposition}
$\KwDis$ is derivable in ${\bf LB}$.
\end{proposition}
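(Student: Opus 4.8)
The plan is to derive $\KwDis$, namely $\Kw_i\phi\to \Kw_i(\phi\to\psi)\lor\Kw_i(\neg\phi\to\chi)$, by splitting cases on $\phi$ and invoking the weak-monotonicity Lemma~\ref{prop.lb} together with axiom $\texttt{Z}$. The idea behind the case split is that $\Kw_i$ is not monotone, so one cannot weaken inside $\Kw_i$ directly; but Lemma~\ref{prop.lb} grants a restricted monotonicity ``under the additional hypothesis $\phi$'', and the classical dichotomy $\phi\lor\neg\phi$ makes exactly one such additional hypothesis available in each case.

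Concretely, I would first observe that $\phi\to(\neg\phi\to\chi)$ is a tautology, hence so is $\Kw_i\phi\land\phi\to(\neg\phi\to\chi)$; applying Lemma~\ref{prop.lb} (with its schematic consequent instantiated to $\neg\phi\to\chi$) gives $\vdash_{\bf LB}\Kw_i\phi\land\phi\to\Kw_i(\neg\phi\to\chi)$. Symmetrically, $\neg\phi\to(\phi\to\psi)$ is a tautology, so $\Kw_i\neg\phi\land\neg\phi\to(\phi\to\psi)$ is derivable, and Lemma~\ref{prop.lb} applied with its schematic antecedent-formula instantiated to $\neg\phi$ yields $\vdash_{\bf LB}\Kw_i\neg\phi\land\neg\phi\to\Kw_i(\phi\to\psi)$; then using the instance $\Kw_i\phi\lra\Kw_i\neg\phi$ of $\texttt{Z}$ one obtains $\vdash_{\bf LB}\Kw_i\phi\land\neg\phi\to\Kw_i(\phi\to\psi)$.

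It then remains to combine the two conditionals by pure propositional reasoning ($\TAUT$ and $\MP$): assuming $\Kw_i\phi$ we have both $\phi\to\Kw_i(\neg\phi\to\chi)$ and $\neg\phi\to\Kw_i(\phi\to\psi)$, and since $\phi\lor\neg\phi$ holds we conclude $\Kw_i(\phi\to\psi)\lor\Kw_i(\neg\phi\to\chi)$, i.e.\ $\KwDis$. I do not anticipate a genuine obstacle; the only points requiring care are that each use of Lemma~\ref{prop.lb} is a bona fide schematic instantiation (notably that the lemma may be used with $\neg\phi$ playing the role of its own ``$\phi$''), and that passage between $\Kw_i\phi$ and $\Kw_i\neg\phi$ goes through $\texttt{Z}$ rather than through any monotonicity principle, which is unavailable.
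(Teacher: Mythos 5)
Your proposal is correct and follows essentially the same route as the paper's derivation: the two tautologies $\phi\to(\neg\phi\to\chi)$ and $\neg\phi\to(\phi\to\psi)$, each lifted by Lemma~\ref{prop.lb}, the conversion of $\Kw_i\neg\phi$ to $\Kw_i\phi$ via axiom $\texttt{Z}$, and a final case split on $\phi\lor\neg\phi$. No gaps; the instantiations of Lemma~\ref{prop.lb} you flag as needing care are exactly the ones the paper uses.
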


\begin{proof}
$$
\begin{array}{lll}
(i)&\phi\to(\neg\phi\to\chi)&\TAUT\\
(ii)& \Kw_i\phi\land\phi\to(\neg\phi\to\chi)&(i)\\
(iii)& \Kw_i\phi\land\phi\to\Kw_i(\neg\phi\to\chi)& \text{Lemma~\ref{prop.lb}},(ii)\\
(iv)& \neg\phi\to(\phi\to\psi)& \TAUT\\
(v)& \Kw_i\neg\phi\land\neg\phi\to\Kw_i(\phi\to\psi)&(iv),~\text{Similar to}~(i)-(iii)\\
(vi)& \Kw_i\phi\land\neg\phi\to\Kw_i(\phi\to\psi)&(v),\texttt{Z}\\
(vii)& \Kw_i\phi\to\Kw_i(\phi\to\psi)\vee\Kw_i(\neg\phi\to\chi)& (iii),(vi)\\
\end{array}
$$
\end{proof}

\weg{\begin{proof}
$$
\begin{array}{lll}
(i)&\phi\to(\neg\phi\to\chi)&\TAUT\\
(ii)& \Kw_i\phi\land\phi\to(\neg\phi\to\chi)&(i)\\
(iii)& \Kw_i\phi\land\phi\to\Kw_i(\neg\phi\to\chi)\land(\neg\phi\to\chi)&WM,(ii)\\
(iv)& \Kw_i\phi\land\phi\to\Kw_i(\neg\phi\to\chi)& (iii)\\
(v)& \neg\phi\to(\phi\to\psi)& \TAUT\\
(vi)& \Kw_i\neg\phi\land\neg\phi\to\Kw_i(\phi\to\psi)&(v),~\text{Similar to}~(i)-(iv)\\
(vii)& \Kw_i\phi\land\neg\phi\to\Kw_i(\phi\to\psi)&(vi),\texttt{Z}\\
(viii)& \Kw_i\phi\to\Kw_i(\phi\to\psi)\vee\Kw_i(\neg\phi\to\chi)& (iv),(vii)\\
\end{array}
$$
\end{proof}}

\weg{\begin{proposition}
$\EquiKw$ is derivable in ${\bf LB}$.
\end{proposition}

\begin{proof}
Immediate from \RE.
\end{proof}}

\begin{proposition}
$\KwT$ is derivable in ${\bf LB}$.
\end{proposition}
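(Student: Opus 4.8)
The plan is to derive $\KwT$, i.e. $\vdash_{\bf LB}\Kw_i\phi\land\Kw_i(\phi\to\psi)\land\phi\to\Kw_i\psi$, by inserting the mediating formula $(\phi\to\psi)\lra\psi$ between $\Kw_i(\phi\to\psi)$ and $\Kw_i\psi$. The guiding observation is that $\phi\to\big((\phi\to\psi)\lra\psi\big)$ is a propositional tautology, so the rule $\texttt{WM}$ --- in the convenient form of Lemma~\ref{prop.lb} --- lets me \emph{localise} it under the modality, converting $\vdash\Kw_i\phi\land\phi\to\big((\phi\to\psi)\lra\psi\big)$ into $\vdash\Kw_i\phi\land\phi\to\Kw_i\big((\phi\to\psi)\lra\psi\big)$. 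This is the only step that consumes the hypothesis $\Kw_i\phi\land\phi$, which is appropriate since $\KwT$ is the $\mathcal{T}$-axiom; note also that plain necessitation $\GENKw$ would \emph{not} do here, since from $\phi\to\big((\phi\to\psi)\lra\psi\big)$ it only yields $\Kw_i\big(\phi\to((\phi\to\psi)\lra\psi)\big)$, and $\Kw_i$ cannot be pushed through an implication in this non-normal logic.

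Concretely I would proceed in three short blocks. (1) By $\TAUT$ and $\MP$, $\vdash\Kw_i\phi\land\phi\to\big((\phi\to\psi)\lra\psi\big)$, and then Lemma~\ref{prop.lb} gives $\vdash\Kw_i\phi\land\phi\to\Kw_i\big((\phi\to\psi)\lra\psi\big)$. (2) I collapse $\Kw_i\big((\phi\to\psi)\lra\psi\big)\land\Kw_i(\phi\to\psi)$ to $\Kw_i\psi$: by axiom $\texttt{R}$ this conjunction implies $\Kw_i\big(((\phi\to\psi)\lra\psi)\land(\phi\to\psi)\big)$, and since $\big((\phi\to\psi)\lra\psi\big)\land(\phi\to\psi)\lra\psi$ is a tautology, the rule $\texttt{Sub}$ (substitution of equivalents) rewrites the argument to $\psi$, yielding $\Kw_i\psi$. (3) Chaining the implications obtained in (1) and (2) by propositional reasoning produces $\vdash\Kw_i\phi\land\phi\land\Kw_i(\phi\to\psi)\to\Kw_i\psi$, which is exactly $\KwT$.

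The only non-routine ingredient is the choice of the mediator $(\phi\to\psi)\lra\psi$ together with the accompanying tautological identity $\big((\phi\to\psi)\lra\psi\big)\land(\phi\to\psi)\lra\psi$; once these are spotted, the derivation uses nothing beyond $\TAUT$, $\MP$, $\texttt{R}$, $\texttt{Sub}$ and Lemma~\ref{prop.lb}, all of which are available at this point. I expect block (2) --- lining up the $\texttt{R}$ step and the $\texttt{Sub}$ rewrite on exactly the right conjunction --- to be where one must be slightly careful, but there is no real obstacle.
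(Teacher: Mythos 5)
Your derivation is correct, but it takes a genuinely different route from the paper's. The paper splits into cases on the truth of $\phi\to\psi$: when $\phi\to\psi$ holds it applies {\tt WM} (in the form of the auxiliary lemma) to $\chi=\phi\land(\phi\to\psi)$ after packaging $\Kw_i\phi\land\Kw_i(\phi\to\psi)$ into $\Kw_i(\phi\land(\phi\to\psi))$ via {\tt R}; when $\neg(\phi\to\psi)$ holds it applies {\tt WM} to $\chi=\neg(\phi\to\psi)$ and then uses axiom {\tt Z} to flip both negations; the two cases are finally joined by propositional reasoning. You instead avoid the case analysis entirely by choosing the mediator $(\phi\to\psi)\lra\psi$: a single application of {\tt WM} to the tautology $\Kw_i\phi\land\phi\to\bigl((\phi\to\psi)\lra\psi\bigr)$ puts the biconditional under the modality, after which {\tt R} and {\tt Sub} (applied to the tautology $\bigl((\phi\to\psi)\lra\psi\bigr)\land(\phi\to\psi)\lra\psi$) collapse $\Kw_i\bigl((\phi\to\psi)\lra\psi\bigr)\land\Kw_i(\phi\to\psi)$ to $\Kw_i\psi$. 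All the tautologies you invoke check out, and each rule you use ({\tt R}, {\tt Sub}, {\tt WM} via the lemma) is available in ${\bf LB}$. Your version is arguably slicker --- one {\tt WM} application instead of two, and no use of {\tt Z} --- at the cost of having to spot the somewhat less obvious mediating formula; the paper's case split is more mechanical and mirrors the structure it uses elsewhere (e.g.\ in deriving {\tt KwDis}).
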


\begin{proof}
$$
\begin{array}{lll}
(i)& \phi\land(\phi\to\psi)\to\psi& \TAUT\\
(ii)& \Kw_i(\phi\land(\phi\to\psi))\land(\phi\land(\phi\to\psi))\to\psi& (i)\\
(iii)& \Kw_i(\phi\land(\phi\to\psi))\land(\phi\land(\phi\to\psi))\to\Kw_i\psi& \text{Lemma~\ref{prop.lb}},(ii)\\
(iv)& \Kw_i\phi\land\Kw_i(\phi\to\psi)\to\Kw_i(\phi\land(\phi\to\psi))&\texttt{R}\\
(v)& \Kw_i\phi\land\Kw_i(\phi\to\psi)\land\phi\land(\phi\to\psi)\to\Kw_i\psi&(iii),(iv)\\
(vi)& \neg(\phi\to\psi)\to\neg\psi& \TAUT\\
(vii)& \Kw_i\neg(\phi\to\psi)\land\neg(\phi\to\psi)\to\Kw_i\neg\psi& (vi),\text{ Similar to } (i)-(iii) \\
(viii)& \Kw_i(\phi\to\psi)\land\neg(\phi\to\psi)\to\Kw_i\psi& (vii),\texttt{Z}\\
(ix)& \Kw_i\phi\land\Kw_i(\phi\to\psi)\land\phi\land\neg(\phi\to\psi)\to\Kw_i\psi&(viii)\\
(x)& \Kw_i\phi\land\Kw_i(\phi\to\psi)\land\phi\to\Kw_i\psi& (v),(ix)\\
\end{array}
$$
\end{proof}

\begin{proposition}
$\Tr$ is derivable in ${\bf LB}$.
\end{proposition}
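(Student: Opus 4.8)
The goal is to derive $\Kw_i\phi\to\Kw_i\Kw_i\phi$ in ${\bf LB}$. The plan is to exploit the derived rule of Lemma~\ref{prop.lb} (``from $\vdash\Kw_i\psi\land\psi\to\chi$ infer $\vdash\Kw_i\psi\land\psi\to\Kw_i\chi$'', the weak-monotonicity consequence of $\texttt{WM}$) to push a $\Kw_i$ inward, and to use axiom $\texttt{Z}$ to cover the branch in which $\phi$ is false.

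First, since $\Kw_i\phi\land\phi\to\Kw_i\phi$ is a propositional tautology, Lemma~\ref{prop.lb} (instantiated with ``base'' formula $\phi$ and consequent $\Kw_i\phi$) yields at once $\vdash\Kw_i\phi\land\phi\to\Kw_i\Kw_i\phi$. Being a schema, this holds with $\neg\phi$ in place of $\phi$, so also $\vdash\Kw_i\neg\phi\land\neg\phi\to\Kw_i\Kw_i\neg\phi$. Axiom $\texttt{Z}$ gives $\vdash\Kw_i\neg\phi\lra\Kw_i\phi$, hence by $\texttt{Sub}$ (applied in the context $\Kw_i(\cdot)$) also $\vdash\Kw_i\Kw_i\neg\phi\lra\Kw_i\Kw_i\phi$; rewriting the previous implication with these two equivalences gives $\vdash\Kw_i\phi\land\neg\phi\to\Kw_i\Kw_i\phi$. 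Finally, since $\Kw_i\phi\to((\Kw_i\phi\land\phi)\lor(\Kw_i\phi\land\neg\phi))$ is a tautology, propositional reasoning ($\TAUT$, $\MP$) from the two implications just obtained delivers $\vdash\Kw_i\phi\to\Kw_i\Kw_i\phi$, i.e.\ $\Tr$.

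I do not anticipate a real obstacle here. The two points that need a moment's care are: (i) that the intended instance of Lemma~\ref{prop.lb} is the one whose base formula is $\phi$ itself and whose consequent is $\Kw_i\phi$, so that the rule outputs $\Kw_i\phi\land\phi\to\Kw_i\Kw_i\phi$ (rather than something with $\phi$ on the right); and (ii) that $\texttt{Sub}$ in ${\bf LB}$ legitimately replaces a subformula by an ${\bf LB}$-provable equivalent, including inside the scope of $\Kw_i$. The case split on $\phi$ versus $\neg\phi$ is essential, because $\Kw_i\phi\to\phi$ is not available; axiom $\texttt{Z}$ is precisely what salvages the ``$\phi$ false'' case.
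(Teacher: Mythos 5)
Your proof is correct and follows essentially the same route as the paper's: establish $\Kw_i\phi\land\phi\to\Kw_i\Kw_i\phi$ via the $\texttt{WM}$-consequence (Lemma~\ref{prop.lb}), handle the $\neg\phi$ branch using axiom $\texttt{Z}$, and close with a propositional case split. The only cosmetic difference is that the paper applies $\texttt{Z}$ to the consequent \emph{before} invoking Lemma~\ref{prop.lb} (deriving $\Kw_i\neg\phi\land\neg\phi\to\Kw_i\phi$ first), whereas you apply the lemma first and then rewrite $\Kw_i\Kw_i\neg\phi$ to $\Kw_i\Kw_i\phi$ via $\texttt{Sub}$; both are equally valid.
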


\begin{proof}
$$
\begin{array}{lll}
(i)& \Kw_i\phi\land\phi\to\Kw_i\phi& \TAUT\\
(ii)& \Kw_i\phi\land\phi\to\Kw_i\Kw_i\phi& \text{Lemma~\ref{prop.lb}},(i)\\
(iii)& \Kw_i\neg\phi\land\neg\phi\to\Kw_i\neg\phi& \TAUT\\
(iv)& \Kw_i\neg\phi\land\neg\phi\to\Kw_i\phi&(iii),\texttt{Z}\\
(v)& \Kw_i\neg\phi\land\neg\phi\to\Kw_i\Kw_i\phi&\text{Lemma~\ref{prop.lb}},(iv)\\
(vi)&\Kw_i\phi\land\neg\phi\to\Kw_i\Kw_i\phi&(v),\texttt{Z}\\
(vii)&\Kw_i\phi\to\Kw_i\Kw_i\phi& (ii),(vi)\\
\end{array}
$$
\end{proof}

\begin{proposition}
$\GENKw$ is admissible in ${\bf LB}$.
\end{proposition}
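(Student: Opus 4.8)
The plan is to derive $\Kw_i\phi$ from $\phi$ using only the rule $\texttt{WM}$ together with axiom $\texttt{N}$, applied with the ``trivial'' antecedent $\Kw_i\top\land\top$. In particular no use of $\texttt{Z}$ or $\texttt{R}$ will be needed.

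First I would observe that $\Kw_i\top$ is a theorem of ${\bf LB}$: axiom $\texttt{N}$ gives $\vdash\Kw_i\top\lra\top$, hence $\vdash\top\to\Kw_i\top$ by $\TAUT$ and $\MP$, and since $\vdash\top$ we obtain $\vdash\Kw_i\top$; combining this with $\vdash\top$ once more (via the tautology $\Kw_i\top\to(\top\to\Kw_i\top\land\top)$ and two applications of $\MP$) yields $\vdash\Kw_i\top\land\top$.

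Next, suppose $\vdash\phi$. Since $\phi$ is a theorem, $\vdash\Kw_i\top\land\top\to\phi$ by propositional reasoning: the formula $\phi\to(\Kw_i\top\land\top\to\phi)$ is a tautology, so $\MP$ applies. This conditional has exactly the shape $\Kw_i\psi\land\psi\to\chi$ required by $\texttt{WM}$, with $\psi:=\top$ and $\chi:=\phi$, so applying $\texttt{WM}$ gives $\vdash\Kw_i\top\land\top\to\Kw_i\phi\land\phi$, and hence $\vdash\Kw_i\top\land\top\to\Kw_i\phi$ by $\TAUT$ and $\MP$. Finally, $\MP$ with the theorem $\Kw_i\top\land\top$ established above yields $\vdash\Kw_i\phi$, as desired.

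The only point requiring attention is matching the premise of $\texttt{WM}$ to its prescribed syntactic form; once one notices that instantiating the rule's antecedent formula to $\top$ turns $\Kw_i\psi\land\psi$ into the provable $\Kw_i\top\land\top$, the argument is immediate, so I do not expect any real obstacle here.
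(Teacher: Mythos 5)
Your derivation is correct, but it takes a genuinely different route from the paper's. The paper argues: from $\vdash\phi$ infer $\vdash\phi\lra\top$, apply the substitution-of-equivalents rule $\texttt{Sub}$ to obtain $\vdash\Kw_i\phi\lra\Kw_i\top$, and conclude via $\texttt{N}$ and $\TAUT$ that $\vdash\Kw_i\phi$. You instead route everything through $\texttt{WM}$: having established $\vdash\Kw_i\top\land\top$ from $\texttt{N}$, you package the theorem $\phi$ as the consequent of the conditional $\Kw_i\top\land\top\to\phi$, apply $\texttt{WM}$ with antecedent formula $\top$ to get $\vdash\Kw_i\top\land\top\to\Kw_i\phi\land\phi$, and detach. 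Each step checks out --- the instantiation of the rule's antecedent to $\top$ is legitimate since $\texttt{WM}$ is schematic. The paper's argument is shorter, but yours is more economical in the machinery it invokes: it establishes admissibility of $\GENKw$ using only $\TAUT$, $\texttt{N}$, $\texttt{WM}$ and $\MP$, without touching $\texttt{Sub}$, $\texttt{Z}$ or $\texttt{R}$, which is a mildly stronger statement about the subsystem of ${\bf LB}$ in which the rule is already derivable.
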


\begin{proof}
Suppose that $\vdash\phi$, we need to show $\vdash\Kw_i\phi$. From the supposition follows that $\vdash\phi\lra\top$, then by \RE\ we get $\vdash\Kw_i\phi\lra\Kw_i\top$. By \texttt{N} and $\TAUT$, we have $\vdash\Kw_i\top$, and hence we conclude that $\vdash\Kw_i\phi$.
\end{proof}

\begin{proposition}
$\REKw$ is admissible in ${\bf LB}$.
\end{proposition}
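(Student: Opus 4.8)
The plan is that $\REKw$ falls out immediately from the rule $\texttt{Sub}$ (substitution of equivalents) that is already part of ${\bf LB}$. First I would recall that $\texttt{Sub}$ licenses, from a theorem $\vdash_{\bf LB}\phi\lra\psi$, the inference of $\vdash_{\bf LB}\chi[\phi/p]\lra\chi[\psi/p]$ for an arbitrary formula $\chi$ and propositional variable $p$. Then I would instantiate $\chi$ as $\Kw_i p$: since $(\Kw_i p)[\phi/p]=\Kw_i\phi$ and $(\Kw_i p)[\psi/p]=\Kw_i\psi$, the rule $\texttt{Sub}$ immediately yields $\vdash_{\bf LB}\Kw_i\phi\lra\Kw_i\psi$, which is exactly the conclusion demanded by $\REKw$.

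I do not expect any real obstacle here; the only subtlety worth flagging is that $\texttt{Sub}$ must be read as the schematic replacement-of-equivalents rule applied inside the modal context $\Kw_i p$, which is unproblematic (it is used in precisely this way in \cite{steinsvold:2008}). Together with the preceding propositions deriving $\KwCon$, $\KwDis$, $\KwT$, and $\Tr$, and establishing the admissibility of $\GENKw$, in ${\bf LB}$, this completes the verification that every axiom and rule of $\SPLKwTTr$ is derivable or admissible in ${\bf LB}$; combined with the converse direction shown earlier, it establishes Proposition~\ref{prop.qwerqwer}.
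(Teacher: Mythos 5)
Your proposal is correct and matches the paper's own argument, which simply states that the admissibility of $\REKw$ is immediate from the substitution-of-equivalents rule $\texttt{Sub}$ of ${\bf LB}$; your instantiation of $\chi$ as $\Kw_i p$ just spells out that one-line observation explicitly.
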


\begin{proof}
Immediate from \RE.
\end{proof}

\weg{\section{An alternative definition of the canonical relation for $\SPLKwT$}

Remember that in Def.\ref{cononicalmodel} the reason why the canonical relation was defined in that way is that $\K_i\phi$ cannot be defined by $\PLKw$ under the model without reflexivity, which makes us to simulate the role of $\K_i\phi$. But now $\PLKw$ and $\EL$ are equally expressive on the class of $\mathcal{T}$-models (Prop.\ref{equallyexpressive}), as we can define $\K_i\phi$ in terms of $\phi\land\Kw_i\phi$. So a natural question is: can we use a simpler canonical relation for $\SPLKwT$ rather than the reflexive closure in Def.\ref{def.canonicalmodelT}? Yes, we can, in fact we can use the simpler version to define the canonical model for any extension of $\SPLKwT$, as will be seen below.

\begin{lemma}\label{twodefinitions}
Let $\SPS$ be any extension of $\SPLKwT$, and $S^c$ be the class of all maximal consistent sets of $\SPS$.
The following two definitions of the canonical relations are equivalent: \\
(i)~~~~~ $s\to^c_it$ iff (C1) for all $\phi$, $\Kw_i\phi\land\phi\in s$ implies $\phi\in t$.\\
(ii)~~~~~ $s\to^c_it$ iff (C2) $s=t$ or
\begin{enumerate}
\item\label{1} there exists $\chi$ such that $\neg \Kw_i\chi\in s$  and
\item\label{2} for every $\phi$, $\psi$: $\Kw_i\phi\wedge\Kw_i(\phi\to\psi)\land \neg \Kw_i\psi\in s \text{ implies }\neg \phi\in t$.
\end{enumerate}
\end{lemma}

\begin{proof}
We just need to show that $(C1)$ and $(C2)$ are equivalent. 

$(C2)\Longrightarrow(C1)$: It is clear that if $s=t$ then $(C1)$ trivially holds thus we only need to consider the case when $(C2)$ holds and $s\neq t$. Take an arbitrary $\phi$ such that $\Kw_i\phi\land\phi\in s$, we only need to show $\phi\in t$. Towards a contradiction we assume $\phi\notin t$. From the item 1 of $(C2)$ it follows that $\neg \Kw_i\chi\in s$ for some $\chi$. Since $\SPS$ is an extension of $\SPLKwT$, $\vdash_{\SPS}\KwT$ i.e., $\vdash_{\SPS}\Kw_i\phi\land\Kw_i(\phi\to\chi)\land\phi\to\Kw_i\chi$. Since $\Kw_i\phi\land\phi\in s$, and $\neg\Kw_i\chi\in s$ it holds that $\neg\Kw_i(\phi\to\chi)\in s$ by \KwT. Besides, from the item 2 of $(C2)$ and $\phi\notin t$,  we can get $\Kw_i\neg\phi\land\Kw_i(\neg\phi\to\chi)\land\neg\Kw_i\chi\notin s$. Since $\Kw_i\phi\in s$ (by Axiom $\EquiKw$) and $\neg \Kw_i\chi\in s$  we have $\neg\Kw_i(\neg\phi\to\chi)\in s$. Now we have shown that $\neg\Kw_i(\phi\to\chi),\neg\Kw_i(\neg\phi\to\chi)\in s$, and moreover $\Kw_i\phi\to\Kw_i(\phi\to\chi)\vee\Kw_i(\neg\phi\to\chi)$ is an instance of Axiom $\KwDis$, therefore it concludes that $\Kw_i\phi\notin s$, contradicting to the supposition $\Kw_i\phi\in s$, as desired.

$(C1)\Longrightarrow(C2)$: Assume that $(C1)$ holds and $s\neq t$, we need to show items \ref{1} and \ref{2}.

For item \ref{1} of $(C2)$, we have a stronger result: there exists $\chi\in s$ such that $\neg \Kw_i\chi\in s$. Suppose not, i.e. for every $\chi\in s$, $\Kw_i\chi\in s$. Now take an arbitrary $\chi\in s$, then $\Kw_i\chi\land\chi\in s$. By $(C1)$, it follows that $\chi\in t$. Thus $s\subseteq t$. Note that $s$ and $t$ are both maximal consistent, thus $s=t$, contradicting to $s\neq t$.

For item \ref{2} of $(C2)$, suppose $\Kw_i\phi\wedge\Kw_i(\phi\to\psi)\land \neg \Kw_i\psi\in s$, we need to show $\neg\phi\in t$. By the fact that $\vdash_{\SPS}\Kw_i\phi\land\Kw_i(\phi\to\psi)\land\phi\to\Kw_i\psi$, we have $\neg\phi\in s$; moreover, it follows that $\Kw_i\neg\phi\in s$ by the assumption that $\Kw_i\phi\in s$ and Axiom $\EquiKw$. So far we have shown $\Kw_i\neg\phi\land\neg\phi\in s$, and thus $\neg\phi\in t$ by $(C1)$.
\end{proof}

This Lemma says, to define the canonical relation of $\M^c$ which is the canonical model for $\SPLKwT$, we can adopt the alternative one (i) above. It tells us that we can show the completeness of systems \SPLKwT, \SPLKwTTr~ and \SPLKwTEuc~ by using such a canonical relation. Indeed. From now on, we will adopt the alternative definition of canonical relation, that is, $s\to^c_it$ iff for all $\phi$, $\Kw_i\phi\land\phi\in s$ implies $\phi\in t$. Sometime we adopt its following equivalent: $s\to^c_it$ iff for all $\phi$, $\phi\in t$ implies $\neg\Kw_i\phi\vee\phi\in s$.

\begin{lemma}
For any $\PLKw$~ formula $\phi$, $\M^c,s\vDash\phi$ iff $\phi\in s$.
\end{lemma}

\begin{proof}
By induction on $\phi$. The non-trivial case to consider is $\Kw_i\phi$, i.e., $\Kw_i\phi\in s$ iff $\M^c,s\vDash\Kw_i\phi$.

``If'': Suppose $\Kw_i\phi\in s$. If $s\nvDash\Kw_i\phi$, then there are two points $t_1,t_2\in\M^c$ such that $s\to^c_it_1,s\to^c_it_2$ and $\M^c,t_1\vDash\phi$ and $\M^c,t_2\vDash\neg\phi$. By inductive hypothesis, we get $\phi\in t_1$ and $\neg\phi\in t_2$. From $\phi\in t_1$ we have $\neg\phi\notin t_1$. Moreover, by $\Kw_i\phi\in s$ and Axiom $\EquiKw$ we get $\Kw_i\neg\phi\in s$. Combining these facts with $s\to^c_it_1$ it follows that
$\neg\phi\notin s$. Similarly, from $\neg\phi\in t_2$ we get $\neg\phi\in s$, a contradiction.

``Only if'': Suppose $\Kw_i\phi\notin s$, to show $\M^c,s\nvDash\Kw_i\phi$. By inductive hypothesis, we need to find two points $t_1,t_2\in S^c$ such that $s\to^c_it_1,s\to^c_it_2$ and $\phi\in t_1$ and $\neg\phi\in t_2$. First we have to show:

(1) $\{\psi~\mid~\Kw_i\psi\land\psi\in s\}\cup\{\phi\}$ is consistent.

(2) $\{\psi'~\mid~\Kw_i\psi'\land\psi'\in s\}\cup\{\neg\phi\}$ is consistent.\\
For (1): if not, then there exist $\psi_1,\cdots,\psi_n$ such that $\vdash\psi_1\land\cdots\land\psi_n\to\neg\phi$ and $\Kw_i\psi_k\land\psi_k\in s$ for all $k\in [1,n]$. From $\GENKw$, we have $\Kw_i(\psi_1\land\cdots\land\psi_n\to\neg\phi)\in s$. From the fact that $\Kw_i\psi_k\land\psi_k\in s$ for all $k\in [1,n]$ and Prop.\ref{prop.I2}, it follows that $\Kw_i(\psi_1\land\cdots\psi_n)\land(\psi_1\land\cdots\psi_n)\in s$. By Axiom $\KwT$ we get $\Kw_i\neg\phi\in s$, thus by Axiom $\EquiKw$, $\Kw_i\phi\in s$, a contradiction.

Similarly, we can prove (2).

From (1) and Lindenbaum Lemma, it is easy to get that there exists $t_1\in S^c$ such that $s\to^c_i t_1$ and $\phi\in t_1$. Similarly, from (2) we derive that there exists $t_2\in S^c$ such that $s\to^c_i t_2$ and $\neg\phi\in t_2$.
\end{proof}
With this lemma, it is routine to show
\begin{theorem}
$\SPLKwT$ is complete with respect to the class of all $\mathcal{T}$-frames.
\end{theorem}

\begin{theorem}
$\SPLKwTTr$ is complete with respect to the class of all $\mathcal{S}4$-frames.
\end{theorem}
\begin{proof}
The canonical model for $\SPLKwTTr$ is similar to the one for $\SPLKwT$ but w.r.t. \SPLKwTTr. We just need to show $\to_i^c$ is transitive.
Now given $s,t,u\in S^c$, suppose $s\to_i^ct$ and $t\to_i^cu$, we need to show $s\to^c_iu$. So assume that $\Kw_i\phi\land\phi\in s$, we only need to show $\phi\in u$. By assumption and $s\to_i^ct$, we have $\phi\in t$. From $\Kw_i\phi\in s$ and $\Tr$, it follows that $\Kw_i\Kw_i\phi\in s$, and then $\Kw_i\Kw_i\phi\land\Kw_i\phi\in s$, by using $s\to_i^ct$ again we get $\Kw_i\phi\in t$. We have shown that $\Kw_i\phi\land\phi\in t$, and hence by $t\to_i^cu$ we conclude that $\phi\in u$, as desired.
\end{proof}

Similarly, we can show
\begin{theorem}
$\SPLKwTEuc$ is complete with respect to the class of all $\mathcal{S}5$-frames.
\end{theorem}

\weg{\begin{proof}

We just need to show $\to_i^c$ is Euclidian. Given $s,t,u\in S^c$. Suppose that $s\to_i^ct$ and $s\to_i^cu$, to show $t\to^c_iu$. So assume that $\Kw_i\phi\land\phi\in t$, we only need to show $\phi\in u$.
From $\phi\in t$ and $s\to^c_it$ it follows that $\neg\Kw_i\phi\vee\phi\in s$. If $\neg\Kw_i\phi\in s$, then by Lemma \ref{Prop.theorem}\ (2), we have $\Kw_i\neg\Kw_i\phi\land\neg\Kw_i\phi\in s$, then using $s\to^c_it$ we get $\neg\Kw_i\phi\in t$, contradicting to the assumption $\Kw_i\phi\in t$. Therefore $\neg\Kw_i\phi\notin s$, and then $\Kw_i\phi\land\phi\in s$, hence from $s\to^c_iu$ we conclude that $\phi\in u$, as desired.
\end{proof}}

Comparing the completeness proofs by using the two equivalently canonical relations for $\SPLKwT$, we can see the processes are essentially the same complicated. Anyway, no matter which canonical relation we adopt, the axiomatizations $\SPLKwT$, $\SPLKwTTr$ and $\SPLKwTEuc$ are the same, respectively.
}

\end{document}